\begin{document}

\title{{Nonparametric Hawkes Processes: Online Estimation and Generalization Bounds}\thanks{Yingxiang Yang and Negar Kiyavash are affiliated with Department of Electrical and Computer Engineering at University of Illinois at Urbana-Champaign. Negar Kiyavash is also affiliated with Department of Industrial and Enterprise Systems Engineering at University of Illinois at Urbana-Champaign, along with Jalal Etesami and Niao He. Emails: \texttt{yyang172,etesami2,niaohe,kiyavash@illinois.edu}. This work was supported in part by MURI grant ARMY W911NF-15-1-0479 and ONR grant W911NF-15-1-0479. Part of this work was presented at Advances in Neural Information Processing Systems (NIPS 2017) \citep{yang2017online}.}}

\author{Yingxiang Yang, Jalal Etesami, Niao He, Negar Kiyavash}

\maketitle

\begin{abstract}
	In this paper, we design a nonparametric online algorithm for estimating the triggering functions of multivariate Hawkes processes. Unlike parametric estimation, where evolutionary dynamics can be exploited for fast computation of the gradient, and unlike typical function learning, where representer theorem is readily applicable upon proper regularization of the objective function, nonparametric estimation faces the challenges of (i) inefficient evaluation of the gradient, (ii) lack of representer theorem, and (iii) computationally expensive projection necessary to guarantee positivity of the triggering functions. In this paper, we offer solutions to the above challenges, and design an online estimation algorithm named NPOLE-MHP that outputs estimations with a $\calO(1/T)$ regret, and a $\calO(1/T)$ stability. Furthermore, we design an algorithm, NPOLE-MMHP, for estimation of multivariate marked Hawkes processes. We test the performance of NPOLE-MHP on various synthetic and real datasets, and demonstrate, under different evaluation metrics, that NPOLE-MHP performs as good as the optimal maximum likelihood estimation (MLE), while having a run time as little as parametric online algorithms.
\end{abstract}

\section{Introduction}\label{sec:intro}

Multivariate Hawkes Processes (MHPs) are multivariate counting process where an arrival in one dimension can affect the arrival rates of other dimensions. The origin of MHPs dates back to \citet{hawkes1971spectra}, where it was used to statistically model earthquakes, for the purpose of revealing a temporally self-excitation pattern and a spatially mutual-excitation structure. Because of their ability to capture mutual excitation between different dimensions of a multivariate counting process, MHPs have become a popular model in a plethora of scenarios. In high frequency trading \citep{bacry2015hawkes,Bacry2012,hardiman2013critical}, MHPs are commonly used to model the clustered arrival patterns of bullish and bearish orders. In computational biology, MHPs are used to model neural spike train data \citep{reynaud2010adaptive}. In social network studies, MHPs have been used to model diffusion networks as an alternative for the contagion model \citep{yang2013mixture}. In computational phenotyping, MHPs are used to extract useful information from Electronic Health Record (EHRs), such as the relationship between the symptoms experienced by patients and the intake of prescribed medicines \citep{baohawkes}. In criminology, MHPs have been applied to analyze the spatially and temporally clustered occurrences of crimes and terrorist activities, enabling more efficient dispatch of the police forces \citep{mohler2011self,porter2012self}.

The key factor that determines the ability of an MHP for capturing the self- and the mutual-excitation effects lies within the form of its intensity function. For a $p$-dimensional MHP, the intensity function of the $i$-th dimension takes the following form:
\#\label{eq::intensity}
\lambda_{i}(t)=\mu_i+\sum_{j=1}^{p}\int_0^tf_{i,j}(t-\tau)\ud N_j(\tau)=\mu_i+\sum_{j=1}^{p}\sum_{n=1}^{N_j(t)}f_{i,j}(t-\tau_{j,n}),
\#
where the constant $\mu_i$ is the base intensity of the $i$-th dimension, $N_j(t)$ counts the number of arrivals in the $j$-th dimension within $[0,t]$, and $f_{i,j}(t)$ is the triggering function that embeds the underlying causal structure of the model. Heuristically, one arrival in the $j$-th dimension at time $\tau$ will affect the intensity function of the $i$-th dimension at time $t$ by the amount $f_{i,j}(t-\tau)$ for $t>\tau$. The cumulative effect of the arrivals from different dimensions, as well as the cumulative effect of the arrivals over a period of time, are embedded within the additive structure of the intensity function.

In many cases, an MHP alone is not enough to capture the dynamics of the underlying counting process, especially in the case where the events arrived at different times are not identical. For example, in a high frequency trading scenario, each order not only has an arrival time, but also has a trading volume, which is an important parameter that influences the trend and momentum of a stock. Likewise, when studying the patterns of earthquakes, one cannot ignore the magnitude of each shock, as it is intuitive that a strong earthquake is more likely to trigger aftershocks than an earthquake with a much smaller magnitude. Such differences are typically distinguished by associating a mark to each event, and a multivariate marked Hawkes process (MMHP) \citep{fauth2012modeling} model is used to fit the data. For an MMHP, the intensity function of the $i$-th dimension takes the form
\#\label{eq::MMHP_intensity}
\lambda_i(t)=\mu_i+\sum_{j=1}^{p}\int_0^tf_{i,j}(t-\tau, v)\ud N_j(\tau\times v)=\mu_i+\sum_{j=1}^{p}\sum_{n=1}^{N_j(t)}f_{i,j}(t-\tau_{j,n},v_{j,n}),
\#
where, compared to \eqref{eq::intensity}, the triggering function now depends on both the arrival time and the corresponding mark.

\subsection{Motivations}

Driven by its wide applicability, there has been extensive studies on the estimation of MHPs and MMHPs from real-time and large volumes of event data \citep{hall2014tracking,bacry2014estimation,bacry2015generalization,bacry2012non}. However, most existing Hawkes process models, as well as the methods used for estimating these models, suffer from severe limitations from both the modeling and the computational perspectives.

Firstly, existing works often make strong assumptions and specify a restricted parametric form of the intensity functions that is not expressive
enough to capture the temporal dynamics in many applications. For example, exponential triggering functions
\#\label{eq::parametricf}
f_{i,j}(t)=\alpha_{i,j}\exp\{-\beta_{i,j} t\}\mathds{1}\{t>0\}
\#
are used in most existing works, where $\alpha_{i,j}$s are unknown while $\beta_{i,j}$s are given a priori. Under this assumption, the estimation of each triggering function is equivalent to the estimation of a real number. However, there are many scenarios where \eqref{eq::parametricf} fails to describe the correct mutual influence pattern between dimensions. This is especially true in studies related to neural spike trains, where it is well known that human body takes time to react to the information it receives. For example, \citet{krumin2010correlation} and \citet{eichler2016graphical} have reported delayed and bell-shaped triggering functions when applying the MHP model to neural spike train datasets. Moreover, when the triggering functions are not exponential, or when $\beta_{i,j}$s are inaccurate, formulation in \eqref{eq::parametricf} is prone to model mismatch \citep{hall2014tracking}.

Secondly, most existing works perform batched estimation upon observing all the samples. This can be costly when the samples are streaming in nature and are expensive to observe. For example, in criminology. On the other hand, when the amount of samples is huge, evaluating the batch gradient can be computationally expensive, and the scalability of such algorithms is poor \citep{yang2017online}.

The above concerns motivate us to investigate the estimation of MHPs in an online and nonparametric regime.

\subsection{Related Works}

Earlier works on estimating the triggering functions for MHPs can be largely categorized into three classes: (i) parametric batch estimation, (ii) nonparametric batch estimation, and (iii) parametric online estimation. The contribution on MMHPs is even less, and mostly focuses on parametric batch estimation \citep{fauth2012modeling}. 

{\noindent\bf Parametric batch estimation.} Based on the assumption that the triggering functions have exponential forms specified in \eqref{eq::parametricf} with known $\beta_{i,j}$s, parametric batch estimation uses all the available samples to estimate the coefficient $\alpha_{i,j}$s. Under the exponential form of the triggering functions, the MHPs possesses the Markov property, and the intensity function for each dimension can be evaluated by considering only ``recent" events. Therefore it is computationally much less expensive than nonparametric estimation. The most widely used estimators include the maximum likelihood estimator (MLE, \citet{ozaki1979maximum}), and the minimum mean-square error estimator (MMSE, \citet{bacry2015generalization}). These estimation methods can also be generalized to the high dimensional case when the coefficient matrix is sparse and low-rank \citep{bacry2015generalization}.

More generally, one can assume that $f_{i,j}(t)$s lie within the span of a pre-determined set of basis functions $\calS=\{{\bm{e}}_1(t),\ldots,{\bm{e}}_{|\calS|}(t)\}$:
$
f_{i,j}(t)=\sum_{i=1}^{|\calS|}c_i{\bm{e}}_i(t),
$
where ${\bm{e}}_i(t)$s have a given parametric form \citep{etesami2016learning,xu2016learning}. One example of such algorithms is presented in \citet{xu2016learning}, where the number of bases is adaptively chosen, which sometimes requires a significant portion of the data to determine the optimal set of bases. 

{\noindent\bf Nonparametric batch estimation.} A more sophisticated approach towards finding the set of basis functions is explored in \citet{zhou2013learning}, where the coefficients and the basis functions are iteratively updated and refined. Unlike \citet{xu2016learning}, where the basis functions take a predetermined form, \citet{zhou2013learning} updates the basis functions by solving a set of Euler-Lagrange equations in the nonparametric regime. However, the optimality for \citet{zhou2013learning} is not guaranteed as its formulation is nonconvex. Practically, the method also requires more than $10^5$ arrivals for each dimension in order to obtain good results, on networks of less than 5 dimensions. 

Another way to estimate $f_{i,j}(t)$s nonparametrically is proposed in \citet{bacry2014second}, which solves a set of $p$ Wiener-Hopf systems with $p^2$ dimensions. The algorithm is guaranteed to converge and achieves excellent visual goodness-of-fit in various numeric examples. However, this method requires inverting a $p^2\times p^2$ matrix, which is costly, if not at all infeasible, when $p$ is large.

{\noindent\bf Parametric online estimation.} To the best of our knowledge, online estimation of the triggering functions seems largely unexplored. Under the assumption that $f_{i,j}(t)$s are exponential, \citet{hall2014tracking} proposes an online algorithm using gradient descent, while exploiting the evolutionary dynamics of the intensity function. The time axis is discretized into small intervals, and the updates are performed at the end of each interval. Unfortunately, this method cannot be extended to the nonparametric setting where the triggering functions are not exponential, mainly because the evolutionary dynamics of the intensity functions does not hold in general. Therefore, the nonparametric estimation of the triggering functions remains largely an open problem.

\subsection{Challenges and Our Contributions}

Designing a nonparametric online estimation algorithm is not without its challenges: (i) It is not clear how to represent the triggering functions. In this work, we relate the triggering functions to a reproducing kernel Hilbert space (RKHS). Upon proper regularization of the objective function, one can apply the representer theorem \citep{scholkopf2001generalized} which reduces the estimation of the triggering function to the estimation of a growing set of coefficients. (ii) Although online kernel estimation is a well studied topic in other scenarios \citep{kivinen2004online}, a typical choice of objective function for an MHP usually involves the integral of the triggering functions, which prevents the direct application of the representer theorem. (iii) For the commonly used objective functions, such as the log-likelihood and the MSE loss, the evaluation of the stochastic gradient requires evaluating the intensity function, which is computationally expensive in nonparametric regime due to a lack of Markov property and evolutionary dynamics. (iv) The outputs of the algorithm at each iteration require a projection step to ensure positivity of the intensity function. This requires solving a quadratic programming problem, which can be computationally expensive.

In this paper, we design, to the best of our knowledge, the first nonparametric online estimation algorithm for the triggering functions. In particular, we contribute to the subject of estimating MHPs by providing solutions to the four challenges we mentioned above. (i) For representation, we base our analysis on the assumption that the triggering functions belong to an RKHS. (ii) We choose the negative log-likelihood as the objective function, and, to apply representer theorem, we approximate the objective function by discretization and characterize the approximation error bound. (iii) We achieve fast evaluation of the gradient using a truncated intensity function, and characterize the approximation error bound. (iv) For projection operation, we adopt a transformation to the triggering function, which achieves low estimation error under various simulation settings.

Theoretically, our algorithm achieves a regret bound of $\calO(\log T)$, with $T$ being the time horizon. Numerical experiments show that our approach outperforms the previous approaches despite the fact that they handle a less general setting. In particular, our algorithm attains a similar performance to the nonparametric batch maximum likelihood estimation method while reducing the run time extensively.

\subsection{Organization of This Paper}

The rest of this paper is organized as follows. In Section \ref{sec:prelim}, we provide a short introduction on RKHSs, as it is the main tool we will use throughout this paper. In Sections \ref{sec::background}-\ref{sec::result}, we develop the nonparametric online learning algorithm for MHPs, and provide theoretical guarantee to the proposed algorithm. The formulation of the problem is introduced in Section \ref{sec::background}; three online algorithms are introduced in Section \ref{sec::algorithm}, including the nonparametric online algorithm for MHPs, as well as parametric and nonparametric algorithms for MMHPs; the regret bound and statistical performances are presented in Section \ref{sec::result}; numerical simulations are provided in Section \ref{sec::exp}.

\subsection{Notations}

Prior to discussing our results, we introduce the basic notations used in the paper. Detailed notations will be introduced along the way. For a $p$-dimensional MHP, we denote the intensity function of the $i$-th dimension by $\lambda_i(t)$. We use $\blambda(t)$ to denote the vector of intensity functions, and we use $\bF=[f_{i,j}(t)]$ to denote the matrix of triggering functions. The $i$-th row of $\bF$ is denoted by $\bbf_i$. 
The number of arrivals in the $i$-th dimension up to $t$ is denoted by the counting process $N_i(t)$. We set $N(t)=\sum_{i=1}^{p}N_i(t)$. The estimates of these quantities are denoted by their ``hatted" versions. The arrival time of the $n$-th event in the $j$-th dimension is denoted by $\tau_{j,n}$. Lastly, define $\lfloor x\rfloor_{y}=y\lfloor x/y\rfloor$. The proofs appear in Appendix.

\section{Preliminaries}\label{sec:prelim}
{}
In this section, we introduce some preliminaries on RKHSs, and review previous approaches to estimating MHPs using maximum likelihood estimation.

\subsection{Reproducing Kernel Hilbert Spaces}

Consider a Hilbert space $\calH$ that contains functions supported on $\calX$. The inner product of $\calH$ is denoted by $\la\cdot,\cdot\ra_{\calH}$, and recall that $\calH$ is complete under the norm induced by the inner product. This Hilbert space $\calH$ is an RKHS if there exists a bivariate function $K(\cdot,\cdot):\calX\times\calX\to\reals$, such that for any $n\in\naturals$, $c_1,\ldots,c_n \in\reals$, and $x_1,\ldots,x_n\in\calX$, $K(\cdot,\cdot)$ is a positive definite kernel:
\$
\sum_{i=1}^{n}\sum_{j=1}^{n}c_ic_jK(x_i,x_j)\geq 0,
\$
and that for any $x\in\calX$, the evaluation functional is bounded (or equivalently continuous):
\$
f(x)=\la f,K(x,\cdot)\ra_{\calH}\leq C\|f\|_{\calH}
\$
for some constant $C$. We call $K(\cdot,\cdot)$ the reproducing kernel of $\calH$.

Several commonly used RKHSs include RKHSs associated with (i) Polynomial kernels: for $\calX\subset \reals^d$,
	$
	K(x,y)=(\alpha x^\top y+\beta)^d.
	$
	When $d=1$, $K(\cdot,\cdot)$ reduces to a linear kernel.
	(ii) Gaussian kernels with bandwidth $h$:
	$
	K(x,y)=\exp\left\{-\frac{\|x-y\|_{\calX}^2}{2h^2}\right\}.
	$
	(iii) Laplacian kernels with bandwidth $h$:
	$
	K(x,y)=\exp\left\{-\frac{\|x-y\|_{\calX}}{h}\right\}.
	$
The concrete expression of the inner product varies by the choice of the reproducing kernel. We skip the detailed discussion since it is irrelevant in our paper. The functional gradient $\ud f(x)/\ud f$, which is the fastest ascent direction of $f(x)$ within $\calH$, is defined as
$
\frac{\ud f(x)}{\ud f}:=\sup_{g\in\calH, \|g\|_{\calH}=1}\lim_{\epsilon\to 0}\frac{[f+\epsilon g](x)-f(x)}{\epsilon}.
$
Since $f(x)=\la f,K(x,\cdot)\ra_{\calH}$, we have $\frac{\ud f(x)}{\ud f}:=K(x,\cdot)$.
RKHS and kernel methods are widely used for nonparametric estimation in machine learning, especially for empirical risk minimization. This is largely due to the representer theorem, which allows reducing an infinite-dimensional optimization problem to a finite-dimensional one. 
\begin{theorem}[Representer theorem \citep{scholkopf2001generalized}] \label{thm::representer}
	Let $K(\cdot,\cdot)$ be the reproducing kernel of $\calH$, and let $R(\cdot):\reals_{+}\to \reals$ be a strictly monotonically increasing function. Then,
	\$
	f^*=\argmin_{f\in \calH} \left\{L(f(x_1),\ldots,f(x_n))+R(\|f\|_{\calH}^2)\right\}
	\$
	has a representation form of
	$
	f^*(\cdot)=\sum_{i=1}^{n}c_i K(x_i,\cdot),
	$
	where $\{c_i\}_{i=1}^{n}$ is the set of coefficients.
\end{theorem}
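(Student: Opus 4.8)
The plan is to prove the statement via the classical orthogonal-decomposition argument, exploiting the reproducing property together with the strict monotonicity of $R$. The central observation is that the objective function only ``sees'' a candidate $f$ through its values at the finite set $\{x_1,\ldots,x_n\}$ (via the loss $L$) and through its norm (via the regularizer $R$), and both of these quantities are controlled by projecting $f$ onto the finite-dimensional subspace spanned by the kernel sections at the data points. Thus the whole infinite-dimensional problem collapses onto this subspace.

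First I would introduce the subspace $\calV=\mathrm{span}\{K(x_1,\cdot),\ldots,K(x_n,\cdot)\}\subseteq\calH$. Since $\calV$ is finite-dimensional it is closed, so every $f\in\calH$ admits a unique orthogonal decomposition $f=f_{\parallel}+f_{\perp}$, where $f_{\parallel}\in\calV$ necessarily has the form $f_{\parallel}=\sum_{i=1}^{n}c_iK(x_i,\cdot)$ and $f_{\perp}\in\calV^{\perp}$, i.e.\ $\la f_{\perp},K(x_i,\cdot)\ra_{\calH}=0$ for every $i$. This decomposition is exactly what makes the desired finite representation possible.

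The next step is to show that $f_{\perp}$ is invisible to the loss term. By the reproducing property, for each $j$,
\$
f(x_j)=\la f,K(x_j,\cdot)\ra_{\calH}=\la f_{\parallel},K(x_j,\cdot)\ra_{\calH}+\la f_{\perp},K(x_j,\cdot)\ra_{\calH}=f_{\parallel}(x_j),
\$
since the last inner product vanishes. Hence $L(f(x_1),\ldots,f(x_n))=L(f_{\parallel}(x_1),\ldots,f_{\parallel}(x_n))$ irrespective of $f_{\perp}$. For the regularizer I would invoke the Pythagorean identity $\|f\|_{\calH}^2=\|f_{\parallel}\|_{\calH}^2+\|f_{\perp}\|_{\calH}^2\geq\|f_{\parallel}\|_{\calH}^2$, with equality if and only if $f_{\perp}=0$. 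Because $R$ is strictly monotonically increasing, this yields $R(\|f\|_{\calH}^2)\geq R(\|f_{\parallel}\|_{\calH}^2)$, again with equality iff $f_{\perp}=0$.

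Combining the two facts, the objective evaluated at $f$ is at least the objective evaluated at its projection $f_{\parallel}$, and strictly larger whenever $f_{\perp}\neq 0$. Consequently any minimizer $f^*$ must satisfy $f^*_{\perp}=0$, i.e.\ $f^*=\sum_{i=1}^{n}c_iK(x_i,\cdot)$, which is the claimed form. This result is not technically deep; the only point that deserves care is the role of the strict monotonicity of $R$. Without strictness one could only conclude that \emph{some} minimizer admits the representation, whereas strictness rules out every minimizer with a nonzero orthogonal component and thereby pins down the form of $f^*$. Everything else (closedness of $\calV$ and existence/uniqueness of the orthogonal projection) is standard Hilbert-space machinery and presents no real obstacle.
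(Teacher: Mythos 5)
Your proof is correct and is precisely the classical orthogonal-decomposition argument of \citet{scholkopf2001generalized}, which is the proof the paper implicitly relies on: the paper states this theorem with a citation and provides no proof of its own. Your remark on the role of strict monotonicity (strictness forces \emph{every} minimizer, not merely some minimizer, into the span of the kernel sections) is also accurate and matches the standard treatment.
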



\subsection{Estimation of Multivariate Hawkes Processes}


A common approach for estimating the parameters of an MHP is to perform regularized MLE. The negative of the log-likelihood function of an MHP over the time interval $[0,t]$ is given by
\#\label{eq::likelihood0}
\mathcal{L}_t(\blambda):=-\sum_{i=1}^{p}\left(\int_{0}^t\log\lambda_i(\tau)\ud N_{i}(\tau)-\int_0^t\lambda_i(\tau)\ud\tau\right).
\#
Since the intensity $\lambda_i(t)$ is linear with respect to the triggering functions $f_{i,1}(t),\ldots,f_{i,p}(t)$, the negative log-likelihood function $\calL_t(\blambda)$ is convex with respect to $f_{i,j}(t)$ for all $i,j\in\{1,\ldots,p\}$. In the parametric case \eqref{eq::parametricf}, each triggering function $f_{i,j}(t)$ is a linear function of $\alpha_{i,j}$, and therefore $\calL_t(\blambda)$ is a convex with respect to $\alpha_{i,j}$ for all $i,j\in\{1,\ldots,p\}$.

Alternatively, other convex loss functions have also been used for parametric estimation of Hawkes processes in the literature, e.g., square loss~\cite{bacry2015generalization} and logistic loss~\cite{menon2018proper}.
To promote solutions with desired structures, such as sparsity of the triggering matrix or the smoothness of the triggering functions, one can also add proper penalties to these objectives (see, \eg, \cite{bacry2015generalization,zhou2013learning,xu2016learning}). The resulting optimization problems are often processed by Expectation Maximization (EM) algorithm \citep{xu2016learning}, batch gradient descent, ADMM~\citep{zhou2013learning}, and so on. 

\section{Problem Formulation}\label{sec::background}

In this section, we introduce our assumptions and definitions followed by the formulation of the objective function. We omit the basics on MHPs and instead refer the readers to \cite{liniger2009multivariate} for details.

\begin{assumption}\label{asm::stationary} 
	We assume that the constant base intensity $\mu_i$ is bounded between $\mu_{\max}<\infty$ and $\mu_{\min}>0$. We also assume bounded and stationary increments for the MHP in the sense that, for a fixed $z_{\min}>0$ and any $z>z_{\min}$, $N_i(t)-N_i(t-z)\leq \kappa_z=\calO(z)$ for all $t>0$.
\end{assumption}

\begin{definition}	\label{def:11}
	Suppose that $\{t_k\}_{k=0}^{\infty}$ is an arbitrary time sequence with $t_0=0$, and $\sup_{k\geq 1}(t_{k}-t_{k-1})\leq\delta\leq 1$. Let $\varepsilon_f:[0,\infty)\to [0,\infty)$ be a continuous and bounded function such that $\lim_{t\to\infty}\varepsilon_f(t)=0$. Then, $f(x)$ satisfies the \textit{decreasing tail} property with \textit{tail function} $\varepsilon_f(t)$ if
	\$
	\sum_{k=m}^{\infty}(t_k-t_{k-1})\sup_{x\in(t_{k-1},t_k]}|f(x)|\leq\varepsilon_{f}(t_{m-1}), \quad \forall m>0.
	\$

\end{definition}

\begin{assumption}\label{asm::f} Let $\calH$ be an RKHS associated with a kernel $K(\cdot,\cdot)$ that satisfies $K(x,x)=1$. Let $L_1[0,\infty)$ be the space of functions for which the absolute value is Lebesgue integrable. For any $i,j\in\{1,\ldots,p\}$, we assume that $f_{i,j}(t)\in\calH$ and $f_{i,j}(t)\in L_1[0,\infty)$, with both $f_{i,j}(t)$ and $\ud f_{i,j}(t)/\ud t$ satisfying the decreasing tail property of Definition \ref{def:11}.
\end{assumption}

Assumption \ref{asm::stationary} is common and has been adopted in existing literature \citep{liniger2009multivariate,hawkes1971spectra,bremaud1996stability}. In particular, it ensures that the MHP is not ``explosive" by assuming that ${N(t)}/{t}$ is bounded. A simple analysis shows that this condition likely holds: for an MHP with stationary increments, we have 
$
\mathbb{E}[\ud \Nb(t)|\mathcal{F}^t]=\blambda(t)\dt,
$
where $\mathcal{F}^t$ denotes the $\sigma$-algebra generated by $\{N_1(t),...,N_p(t)\}$, $\Nb(t)=[N_1(t),\ldots,N_p(t)]$ counts the number of arrivals for all $p$ dimensions, and $\blambda(t)=[\lambda_1(t),\ldots,\lambda_p(t)]$ collects the intensity functions of the $p$ dimensions. Taking another expectation and combining the stationary increment assumption, we have
$$
\mathbb{E}[ \Nb(t)-\Nb(t-z)]=\bar{\blambda} z,
$$
where $\bar{\blambda}:=\mathbb{E}[\blambda(t)]$ is a positive constant vector that represents the average growth rates of $\lambda_i(t)$s. On the other hand, 
using the second order statistics of Hawkes processes in \citet{bacry2015generalization}, we know that the covariance matrix of a Hawkes process is given by
$$
\mathbb{E}\left[(\Nb(t)-\Nb(t-z)- \bar{\blambda}z)(\Nb(t)-\Nb(t-z)- \bar{\blambda}z)^\top \right]=\Psi_t z,
$$
where $\Psi_t$ is a $p\times p$ matrix in Theorem 1 of \citet{bacry2012non}. This shows that $z^{-1}(\Nb(t)-\Nb(t-z))$ has a fixed mean and a covariance that converges to $\zero$ as $z\to\infty$. Therefore, asymptotically, $N_i(t)-N_i(t-z)=\Theta(z)$ for all $i\in\{1,\ldots,p\}$.

Assumption \ref{asm::f} restricts the tail behaviors of both $f_{i,j}(t)$ and $\ud f_{i,j}(t)/\dt$. Intuitively, it restricts the total impact of the historic events on the intensity function, and hence also guarantees that the MHP under consideration is not ``explosive". As it turns out, $\{t_k\}_{k=0}^{\infty}$ corresponds to the set of update time when designing the online algorithm. Complicated as it may seem, functions with exponentially decaying tails satisfy this assumption, as is illustrated by the following examples:

\begin{example}\label{eg::2}
	The function $f(t)=\exp\{-\beta t\}\mathds{1}\{t>0\}$ with $\beta>0$ satisfies Assumption \ref{asm::f} with
	$\beta^{-1}\exp\{-\beta(t-\delta)\}$ as its tail function.
\end{example}

\begin{proof}[Proof of Example \ref{eg::2}]

When $\beta>0$, $f(t)$ is a monotonically decreasing function. Therefore, for any $\delta$-update set, we have
$
\sup_{x\in(t_{k-1},t_k]}|f(x)|=f(t_{k-1})=\exp\{-\beta t_{k-1}\}.
$
Hence, 
\$
(t_k-t_{k-1})\exp\{-\beta t_{k-1}\} &\leq\int_{t_{k-1}}^{t_k}\exp\{-\beta (t-\delta)\}\ud t=\frac{1}{\beta}\left(\exp\{\beta(\delta-t_{k-1})\}-\exp\{\beta(\delta-t_k)\}\right),
\$
where the inequality is due to the fact that, for $t_{k}\leq t_{k-1}+\delta$ and $t\in[t_{k-1}, t_k]$, $\exp\{-\beta t_{k-1}\}\leq \exp\{-\beta (t-\delta)\}$. Summing up both sides of the above inequality, we have
\$
\sum_{k=m}^{\infty}(t_k-t_{k-1})\sup_{x\in(t_{k-1},t_k]}|f(x)|\leq \frac{1}{\beta}\exp\{-\beta(t_{m-1}-\delta)\}.
\$
Similarly, one can obtain the tail functions of $|\ud f(t)/\dt|$ that is $\exp\{-\beta(t_{m-1}-\delta)\}$.
	
\end{proof}

\begin{example}\label{eg::3}
	The function $f(t)=\exp\{-(t-\gamma)^2\}\mathds{1}\{t>0\}$ satisfies Assumption \ref{asm::f} with $\sqrt{2\pi}\ \text{erfc}({t/\sqrt{2}}-\gamma)\exp\{\delta^2/2\}$ as its tail function.
\end{example}

\begin{proof}[Proof or Example \ref{eg::3}]
For $t_{m-1}>\gamma+2\delta$, we have 
$
(t_k-t_{k-1})\sup_{t\in[t_{k-1},t_k]}e^{-(t-\gamma)^2} \leq\int_{t_{k-1}}^{t_k}e^{-\frac{1}{2}(t-\gamma)^2+\frac{\delta^2}{2}}\ud t,
$
in which we used the fact that $t_{k}\leq t_{k-1}+\delta$. Hence, a tail function for $f(t)$ is 
$$
\epsilon(t)=\int_{t_{m-1}}^{\infty}e^{-\frac{1}{2}(t-\gamma)^2+\frac{\delta^2}{2}}\ud t= \sqrt{\frac{\pi}{2}} \text{erfc}\left(\frac{t_{m-1}}{\sqrt{2}}-\gamma\right)e^{\frac{\delta^2}{2}}.
$$
For the derivative of $f(t)$ and for $t_{m-1}>\gamma+2\delta+1/\sqrt{2}$, we have
$
(t_k-t_{k-1})\sup_{t\in[t_{k-1},t_k]}|t-\gamma|e^{-(t-\gamma)^2}  \leq\int_{t_{k-1}}^{t_k}(t-\gamma)e^{-\frac{1}{2}(t-\gamma)^2+\frac{\delta^2}{2}}\ud t.
$
This implies the following tail function for  $f'(t)$:
\$
\varepsilon'(t)=e^{\frac{\delta^2}{2}}\int_{t_{m-1}}^{\infty}(t-\gamma)e^{-\frac{1}{2}(t-\gamma)^2}\ud t.
\$
\end{proof}

\subsection{A Discretized Objective Function for Online Estimation}

A main challenge in nonparametric estimation of MHPs arises from the lack of representer theorem, which is often necessary for reducing the complexity of function estimation.  Note that in the case of MHPs, the negative log-likelihood function, given in \eqref{eq::likelihood0}, contains an integral term of $f_{i,j}(t)$, i.e., it depends on $f_{i,j}(t)$ for all $t\in[0,t]$ rather than a discrete set. Thus it does not satisfy the condition required for representer theorem as shown in Theorem \ref{thm::representer}. 

To resolve this issue, several approaches are recently proposed: (i) the discretization approach \citep{hall2014tracking}, which approximates the integral by its Riemann sum; (ii) the adjusted Hilbert space approach \citep{flaxman2016poisson,raskutti2012minimax}, which transforms the integral term together with the RKHS norm defined on $\calH$ into a new RKHS norm defined on $\tilde{\calH}$. The latter approach, however, only applies to inhomogeneous Poisson process and cannot be easily generalized to the complicated MHPs. Moreover, it requires solving an unfavorable non-convex optimization problem.   Due to these considerations, we adopt the first approach and discretize the integral.

Let $\{\tau_1,...,\tau_{N(t)}\}$ denote the arrival times of all the events within $[0,t]$ and let $\{t_0,\ldots,t_{M(t)}\}$ be the end points of a finite partition of the time interval $[0,t]$ such that $t_0=0$ and 
$
t_{k+1}:=\min_{ \tau_{i}\geq t_k}\{\lfloor t_k\rfloor_{\delta}+\delta, \tau_{i} \}.
$
In addition, without loss of generality, we assume $t_{M(t)}=t$.
Using this partitioning, it is straightforward to see that the function in (\ref{eq::likelihood0}) can be written as 
\#\label{eq::ltdelta0}
\mathcal{L}_t(\blambda)&=\sum_{i=1}^{p}\sum_{k=1}^{M(t)}\left(\int_{t_{k-1}}^{t_k}\lambda_i(\tau)\ud\tau-x_{i,k}\log\lambda_i(t_k)\right)\!\!:=\! \sum_{i=1}^{p}L_{i,t}(\lambda_i),
\#
where $x_{i,k}:=N_i(t_k)-N_i(t_{k-1})$.
By the definition of $t_k$, we know that $t_k$ either corresponds to the arrival time of an event, or the end of an interval in which no events arrived, which implies $x_{i,k}\in\{0,1\}$. 
We now approximate the integral $\int_{t_{k-1}}^{t_k}\lambda_i(\tau)\ud\tau$ by $(t_k-t_{k-1})\lambda_i(t_k)$, and thus obtaining
\#\label{eq::ltdelta}
\mathcal{L}_t^{(\delta)}(\blambda)&:=\sum_{i=1}^{p}\sum_{k=1}^{M(t)}\left((t_k-t_{k-1})\lambda_i(t_k)-x_{i,k}\log\lambda_i(t_k)\right)\!\!:=\! \sum_{i=1}^{p}L_{i,t}^{(\delta)}(\lambda_i).
\#
Intuitively, if $\delta$ is small enough and the triggering functions are bounded, it is reasonable to expect that $L_{i,t}(\blambda)$ is close to $L_{i,t}^{(\delta)}(\blambda)$. 
Below, in Proposition \ref{prop::approx}, we characterize the accuracy of the above discretization of the intensity function.
 
Another challenge in nonparametric estimation of MHPs comes from the expensive computation cost when evaluating the functional gradient of the objective, which requires the evaluations of $\lambda_i(t_k)$ for $\{t_k\}_{k=0}^{M(t)}$. To perform fast evaluation of the intensity function, we introduce the concept of ``truncated intensity", denoted by $\lambda_i^{(z)}(t)$, by considering the impact of only those events that arrived within a recent window $[t-z,t)$.\footnote{Since the probability that an event arrive at $t-z$ is $0$, it does not matter whether the interval is close or open on the left-hand side. Since $f_{i,j}(t)=0$ for $t\leq 0$, it does not matter whether the interval is open or close on the right-hand side.} Similar to discretization, the approximation error caused by the truncation can be well-controlled given Assumptions \ref{asm::f} and \ref{asm::stationary}. We now formally define the truncated intensity function $\lambda_i^{(z)}(t)$ before characterizing the truncation error together with the discretization error in Proposition \ref{prop::approx}.
 \begin{definition}\label{def:trunc}
 We define the truncated intensity function as follows 
 \begin{equation}\label{eq::trunc}
\lambda^{(z)}_i(t):=\mu_i+\sum_{j=1}^{p}\int_{0}^{t}\mathds{1}\{t-\tau<z\}f_{i,j}(t-\tau)\ud N_j(\tau).
	\end{equation}
 \end{definition}

\begin{proposition}\label{prop::approx}
	Under Assumptions \ref{asm::stationary} and \ref{asm::f},
	for any  $i\in\{1,\ldots,p\}$, we have
	\$
	\left\vert L_{i,t}^{(\delta)}(\lambda^{(z)}_i)-L_{i,t}(\lambda_i)\right\vert&\leq (1+{\kappa_1}{\mu^{-1}_{\min}})N(t-z)\varepsilon(z)+\delta N(t)\varepsilon'(0),
	\$
	where $\mu_{\min}$ is the lower bound for $\mu_i$, $\kappa_1$ is the upper bound for $N_i(t)\!-\!N_i(t-1)$ from Definition \ref{def:11}, while $\varepsilon$ and $\varepsilon'$ are two tail functions that uniformly capture the decreasing tail property of all $f_{i,j}(t)$s and all $\ud f_{i,j}(t)/\ud t$s, respectively.
\end{proposition}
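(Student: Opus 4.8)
The plan is to control the two sources of error separately via the triangle inequality, inserting the intermediate object $L_{i,t}^{(\delta)}(\lambda_i)$ (discretized but untruncated):
\[
\bigl| L_{i,t}^{(\delta)}(\lambda^{(z)}_i)-L_{i,t}(\lambda_i)\bigr| \le \underbrace{\bigl| L_{i,t}^{(\delta)}(\lambda^{(z)}_i)-L_{i,t}^{(\delta)}(\lambda_i)\bigr|}_{\text{truncation}} + \underbrace{\bigl| L_{i,t}^{(\delta)}(\lambda_i)-L_{i,t}(\lambda_i)\bigr|}_{\text{discretization}}.
\]
I expect the discretization term to produce the $\delta N(t)\varepsilon'(0)$ summand and the truncation term to produce the $(1+\kappa_1\mu_{\min}^{-1})N(t-z)\varepsilon(z)$ summand.

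For the discretization error, the key observation is that $L_{i,t}^{(\delta)}(\lambda_i)$ and $L_{i,t}(\lambda_i)$ use the \emph{same} intensity $\lambda_i$ evaluated at the same partition points $t_k$ inside the logarithm, so the logarithmic terms cancel exactly and only the integral-versus-rectangle discrepancy remains, namely $\sum_{k}\int_{t_{k-1}}^{t_k}(\lambda_i(t_k)-\lambda_i(\tau))\,\ud\tau$. Because the partition is built so that no arrival falls strictly inside $(t_{k-1},t_k)$, on each such interval $\lambda_i(t_k)-\lambda_i(\tau)=\sum_j\sum_{n}(f_{i,j}(t_k-\tau_{j,n})-f_{i,j}(\tau-\tau_{j,n}))$ is smooth, and I would bound each increment by $\int|f_{i,j}'|$ over the corresponding shifted subinterval. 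After bounding $t_k-\tau\le\delta$ and exchanging the order of summation so that the outer sum runs over events $(j,n)$, the inner sum over $k$ becomes a tail sum $\sum_k(t_k-t_{k-1})\sup|f_{i,j}'|$ along the partition shifted by $\tau_{j,n}$. Applying the decreasing-tail property of $\ud f_{i,j}/\ud t$ from Definition \ref{def:11} starting at age $0$ bounds this inner sum by $\varepsilon'(0)$, and since there are $N(t)$ events in total this yields $\delta N(t)\varepsilon'(0)$.

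For the truncation error, I would use that $\lambda_i(t_k)-\lambda^{(z)}_i(t_k)=\sum_j\sum_{n:\,\tau_{j,n}\le t_k-z}f_{i,j}(t_k-\tau_{j,n})$ collects precisely the events of age at least $z$. The error splits into a linear part weighted by $(t_k-t_{k-1})$ and a logarithmic part; for the latter I would invoke $|\log a-\log b|\le|a-b|/\min(a,b)$ together with the lower bound $\lambda_i,\lambda^{(z)}_i\ge\mu_{\min}$ (the base intensity is retained under truncation) to pull out the factor $\mu_{\min}^{-1}$ and reduce the log-difference to $|\lambda_i(t_k)-\lambda^{(z)}_i(t_k)|$ at the arrival points. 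For the linear part, exchanging the order of summation gives, per event $(j,n)$ with $\tau_{j,n}\le t-z$, the shifted tail sum $\sum_{k:\,t_k-\tau_{j,n}\ge z}(t_k-t_{k-1})\sup|f_{i,j}|\le\varepsilon(z)$, and summing over the at most $N(t-z)$ such events gives $N(t-z)\varepsilon(z)$. For the logarithmic part, the sum runs over dimension-$i$ arrival instants and therefore carries no mesh weights; here I would group the old events into unit-length windows of their age and use the stationary-increment bound $\kappa_1$ (at most $\kappa_1$ arrivals per unit window) to convert the unweighted event sum into a unit-mesh tail sum bounded by $\kappa_1\varepsilon(z)$ per old event, producing $\kappa_1\mu_{\min}^{-1}N(t-z)\varepsilon(z)$. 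Adding the three contributions recovers the claimed bound.

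The main obstacle I anticipate is the logarithmic part of the truncation error: unlike the linear part, it lacks the natural $(t_k-t_{k-1})$ weights needed to invoke the decreasing-tail property directly, so one must combine the stationary-increment counting from Assumption \ref{asm::stationary} (to control arrival density) with a unit-mesh instance of the decreasing-tail property, while simultaneously tracking the $\mu_{\min}^{-1}$ factor from linearizing the logarithm and checking that the partitions shifted by $\tau_{j,n}$ remain valid $\delta$-sequences so that Definition \ref{def:11} applies at the correct age offsets ($z$ for the triggering functions, $0$ for their derivatives).
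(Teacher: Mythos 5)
Your proposal reproduces the paper's proof essentially step for step: the same triangle-inequality split through the intermediate term $L_{i,t}^{(\delta)}(\lambda_i)$, the same cancellation of the logarithmic terms in the discretization error followed by a mean-value/derivative-tail bound giving $\delta N(t)\varepsilon'(0)$, the same $\mu_{\min}^{-1}$-Lipschitz linearization of the logarithm in the truncation error, and the same exchange of summation with the $\kappa_1$-arrivals-per-unit-interval bound converting the unweighted sum over dimension-$i$ arrival instants into a unit-mesh tail sum, yielding $(1+\kappa_1\mu_{\min}^{-1})N(t-z)\varepsilon(z)$. The argument is correct and the obstacle you flag (the missing mesh weights in the logarithmic part) is handled exactly as the paper handles it, so there is nothing to add.
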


The first term in the bound characterizes the approximation error when one truncates $\lambda_i(t)$ with $\lambda_i^{(z)}(t)$. The second term describes the approximation error caused by the discretization. When $z=\infty$, $\lambda_i(t)=\lambda_i^{(z)}(t)$, and the approximation error is contributed solely by discretization. 
In many cases, a small enough truncation error can be obtained by setting a relatively small $z$, which greatly simplifies the evaluation procedure and effectively captures the effect of those important ``recent" events. For example, for $f_{i,j}(t)=\exp\{-3t\}\mathds{1}\{t>0\}$, setting $z=10$ would result in a truncation error less than $10^{-13}$. Therefore, in our algorithm, we evaluate the intensity functions with their truncated versions.

Finally, to invoke the representer theorem, we consider the regularized instantaneous risk function with Tikhonov regularization for $f_{i,j}(t)$s and $\mu_i$:
\#\label{eq::inst_loss}
l_{i,k}(\lambda_i):=(t_k-t_{k-1})\lambda_i(t_k)-x_{i,k}\log\lambda_i(t_k)+\frac{1}{2}\omega_i\mu_i^2+\sum_{j=1}^{p}\frac{\zeta_{i,j}}{2}\|f_{i,j}\|_\calH^2,
\#
and aim at producing a sequence of estimates $\{\hat{\lambda}_i(t_k)\}_{k=1}^{M(t)}$ of $\lambda_i(t)$ with minimal regret:
\#\label{eq::problem}
\sum_{k=1}^{M(t)} l_{i,k}(\hat{\lambda}_{i}(t_k))- \min_{\mu_i\geq \mu_{\min}, f_{i,j}(t)\geq 0}\sum_{k=1}^{M(t)}l_{i,k}(\lambda_i(t_k)).
\#
In \eqref{eq::inst_loss}, $\omega_i$ and $\zeta_{i,j}$ are positive regularization coefficients. Adding the regularization terms in \eqref{eq::inst_loss} not only allows us to apply the representer theorem, but also accelerates the gradient descent as the regularized instantaneous risk function is now strongly convex. Notice that a tradeoff exists between the optimality of the solution and the algorithmic stability: for large $\zeta_{i,j}$ and $\omega_i$, the algorithm is more stable but the objective function deviates from the negative log-likelihood of MHP, whereas for small $\zeta_{i,j}$ and $\omega_i$, the objective function is closer to the negative log-likelihood function but the algorithm becomes less stable due to the lack of regularization. 


\section{Online Estimation for Multivariate Hawkes Processes}\label{sec::algorithm}

\begin{algorithm}[!t]
	\caption{NonParametric OnLine Estimation for MHP (NPOLE-MHP)}  \label{alg::algorithm}
	\begin{algorithmic}[1]
		\STATE {\bf input:} a sequence of step sizes $\{\eta_k\}_{k=1}^{\infty}$ and a set of regularization coefficients $\zeta_{i,j}$s and $\omega_i$s, along with positive values of $\mu_{\min}$, $z$ and $\sigma$. \STATE {\bf output:} a sequence of estimates $\hat{\bmu}^{(k)}$ and $\hat{\bF}^{(k)}$ for $k=\{1,\ldots,M(t)\}$.
		\STATE Initialize $\hat{f}_{i,j}^{(0)}$ and $\hat{\mu}^{(0)}_{i}$ for all $i,j$. 
		\FOR{$k=0,...,M(t)-1$}
		\STATE  Observe the interval $[t_{k},t_{k+1})$, and compute $x_{i,k}$ for $i\in\{1,\ldots,p\}$.
		\FOR{$i=1,\ldots,p$}
		\STATE Set $\hat{\mu}_{i}^{(k+1)}\leftarrow \max\left\{\hat{\mu}_{i}^{(k)}-\eta_{k+1}\partial_{\mu_i}
		l_{i,k}\left(\lambda^{(z)}_i(\hat{\mu}_i^{(k)},\hat{\bbf}_{i}^{(k)})\right), \mu_{\min}\right\}$.
		\FOR{$j=1,\ldots,p$}
		\STATE Set $\hat{f}_{i,j}^{(k+\frac{1}{2})}\!\!\!\leftarrow\!\!\left[ \hat{f}_{i,j}^{(k)}-\eta_{k+1}\partial_{f_{i,j}}l_{i,k}\left(\lambda^{(z)}_i(\hat{\mu}_i^{(k)},\hat{\bbf}_{i}^{(k)})\right)\right]$, and  $\hat{f}_{i,j}^{(k+1)}\leftarrow\Pi\left[\hat{f}_{i,j}^{(k+\frac{1}{2})} \right].$

		\ENDFOR
		\ENDFOR
		\ENDFOR
	\end{algorithmic}
\end{algorithm}

We introduce our NonParametric OnLine Estimation for MHP (NPOLE-MHP) in Algorithm \ref{alg::algorithm}. We adopt an online gradient descent (OGD) framework as it achieves the known optimal asymptotic average regret among all online convex optimization algorithms. In particular, NPOLE-MHP updates the estimates of $\hat{\mu}_i$ and $\hat{f}_{i,j}(t)$ at each iteration using the corresponding gradient. The most important components of NPOLE-MHP are (i) the computation of the gradients and (ii) the projection operation, in lines 6 and 8, respectively. We now illustrate those procedures in detail.

\subsection{Computation of the Gradient} The computation of the gradient include two aspects: derivatives with respect to (i) $\mu_i$, and (ii) $f_{i,j}$.

{\bfseries\noindent Partial derivative with respect to $\mu_i$.} Recall the definition of $\l_{i,k}$ in \eqref{eq::inst_loss} and $\lambda^{(z)}_i$ in \eqref{eq::trunc}. It is easy to see that $\partial_{\mu_i}\lambda_i^{(z)}(\hat{\mu}_i^{(k)},\hat{\bbf}_i^{(k)})=1$, and therefore by chain rule, we have
\$
\partial_{\mu_i} l_{i,k}\left(\lambda^{(z)}_i(\hat{\mu}_i^{(k)},\hat{\bbf}_{i}^{(k)})\right)=(t_k-t_{k-1})-{x_{i,k}}\left[{\lambda^{(z)}_i\left(\hat{\mu}_i^{(k)},\hat{\bbf}_{i}^{(k)}\right)}\right]^{-1} +\omega_i\hat{\mu}_i^{(k)}.
\$
For simplicity, we define the following quantity
\#\label{eq::rhok}
\rho_k=(t_k-t_{k-1})-{x_{i,k}}\left[{\lambda^{(z)}_i\left(\hat{\mu}_i^{(k)},\hat{\bbf}_{i}^{(k)}\right)}\right]^{-1},
\#
which gives us
\#\label{eq::partialmu}
\partial_{\mu_i} l_{i,k}\left(\lambda^{(z)}_i(\hat{\mu}_i^{(k)},\hat{\bbf}_{i}^{(k)})\right)=\rho_k+\omega_i\hat{\mu}_i^{(k)}.
\#

Upon performing the gradient descent step on $\hat{\mu}_i^{(k)}$, the algorithm checks whether it is below $\mu_{\min}$. Since the triggering functions are constrained to be positive at each step of the algorithm, forcing $\hat{\mu}_i^{(k+1)}\geq\mu_{\min}$ guarantees that $\lambda^{(z)}_i(\hat{\mu}_i^{(k)},\hat{\bbf}_{i}^{(k)})\geq\mu_{\min}$ and is bounded away from 0 at each step of the algorithm.

{\bfseries\noindent Partial derivative with respect to $f_{i,j}$s.} We next discuss the update step for $\hat{f}_{i,j}^{(k)}(t)$ in line 8 of the algorithm. By the reproducing property of the kernel, we have
\$
\partial_{f_{i,j}}f_{i,j}(t_k-\tau_{j,n})=K(t_k-\tau_{j,n},\cdot).
\$
Therefore, by chain rule, we have
\#\label{eq::partial}
&\partial_{f_{i,j}}l_{i,k}\left(\lambda^{(z)}_i(\hat{\mu}_i^{(k)},\hat{\bbf}_{i}^{(k)})\right)=\rho_k\!\!\!\sum_{\tau_{j,n}\in[t_k-z,t_k)}\!\!\!\!K(t_k-\tau_{j,n},\cdot)+\zeta_{i,j}\hat{f}_{i,j}^{(k)}(\cdot).
\#
Afterward, a projection $\Pi[\cdot]$ is necessary to ensure that the estimated triggering functions are positive. According to representer theorem, the optimal positive function is a finite summation, and therefore, we define the projection operation for any $g(x)=\sum_{s\in\calS}a_sK(s,x)$, with $\calH_{+}:=\{f\in\calH:f(t)\geq 0\  \forall t\geq 0\}$, to be
\#\label{eq::projectionop}
\Pi[g]=\argmin_{f\in\calH_{+},f(x)=\sum_{s\in\calS}b_sK(s,x)}\|f-g\|_{\calH}.
\#

It is worth pointing out, however, that MHPs can be generalized so that the triggering functions need not be positive, as long as the intensity function is lower bounded by some constant $\lambda_{\min}>0$. For the generalized MHPs, the arrival of the events on one dimension potentially reduces the arrival rates of events in other dimensions, which happens in many applications. For example, the consumption of medication prevents the arrivals of heart attacks. While we restrain our attention to the less general case of positive triggering functions, our algorithm can be generalized to adapt to the more general case by modifying the projection step in line 8 such that $\lambda_i^{(z)}(\hat{\mu}_i^{(k)},\hat{\bbf}_i^{(k)})\geq\lambda_{\min}$ for each $k$.

\subsection{Projection of the Triggering Functions} In general, the projection step of the triggering function requires solving a constrained quadratic programming (QP) problem: $ \min \|f-\hat{f}_{i,j}^{(k+\frac{1}{2})}\|_\calH^2$ subject to $f\in\mathcal{H}$ and $f(t)\geq0,~\forall t\in\reals^+$. For NPOLE-MHP, we have $\hat{f}_{i,j}^{(k+\frac{1}{2})}(x)=\sum_{s\in\mathcal{S}}a_sK(s,x)$ and $f(x)=\sum_{s\in\mathcal{S}}b_sK(s,x)$ for some set $\calS$ due to the representer theorem, and the projection operation on $f(\cdot)$ reduces to solving a QP on the coefficient vector $\bbb\in\reals^{|\calS|}$:
\$
\|f-\hat{f}_{i,j}^{(k+\frac{1}{2})}\|_{\calH}^2=(\bbb-\ba)^\top \Kb (\bbb-\ba),
\$
where $\Kb=[K(s,s')]\in\reals^{|\calS|\times|\calS|}$ is the Gramian matrix. As $\ba$ is given, minimizing the distance between $f$ and $\hat{f}_{i,j}^{(k+\frac{1}{2})}$ is equivalent to minimizing the above quadratic form with respect to $\bbb$. Notice that the constraint has to hold for every $t>0$, resulting in infinite number of constraints in general. In order to simplify computation, one can approximate the solution by relaxing the constraints such that $f(t)\geq 0$ holds for only a finite set of $t$s within $[0,z]$.

{\bfseries\noindent Semidefinite programming (SDP) for polynomial kernels.} When the reproducing kernel is a polynomial kernel defined over $\reals\times\reals$, we can exploit its unique structure to formulate the projection as an SDP. This is mainly due to the solution of Hilbert's 17th problem \citep{bochnak2013real}, which states that a $2d$-degree polynomial is nonnegative if and only if it can be written as the sum of $d$-degree polynomials. In particular, let $K(x,y)=(1+xy)^{2d}$ and $K'(x,y)=(1+xy)^d$, then $f\in\calH_{+}$ is equivalent to $f\in\{\phi^\top(x)\Qb\phi(x):\Qb\succeq 0\}$ where $\phi(\cdot)$ is the feature map of $K'(\cdot,\cdot)$: $\phi^\top(x)\phi(y)=K'(x,y)$. With this intuition, the projection step can be formulated as an SDP problem \citep{vandenberghe1996semidefinite} as follows:
\begin{proposition}\label{prop::sosproj}
	Let $\mathcal{S}=\cup_{r\leq k}\{t_r-\tau_{j,n}:\ t_r-z\leq\tau_{j,n}<t_r\}$ be the set of $t_r-\tau_{j,n}$s. Let $K(x,y)=(1+xy)^{2d}$ and $K'(x,y)=(1+xy)^d$ be two polynomial kernels with $d\geq 1$. Furthermore, let $\Kb$ and $\Gb$ denote the Gramian matrices where the $i,j$-th element correspond to $K(s,s')$ and $K'(s,s')$, with $s$ and $s'$ being the $i$-th and $j$-th element in $\calS$. Suppose that $\ba\in\reals^{|S|}$ is the coefficient vector such that $\hat{f}_{i,j}^{(k+\frac{1}{2})}(\cdot)=\sum_{s\in\mathcal{S}}a_sK(s,\cdot)$, and that the projection step returns $\hat{f}_{i,j}^{(k+1)}(\cdot)=\sum_{s\in\mathcal{S}}b^*_sK(s,\cdot)$. Then the coefficient vector $\bbb^*$ can be obtained by 
	\#
	\bbb^*=\argmin_{\bbb\in\reals^{|\calS|}} \ -2\ba^\top\Kb\bbb+\bbb^\top\Kb\bbb, \quad \text{s.t.}  \ \ \Gb\cdot\diag(\bbb)+\diag(\bbb)\cdot\Gb\succeq0.
	\#
\end{proposition}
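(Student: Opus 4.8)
The plan is to prove the two halves of the claim separately: first that the quadratic objective $-2\ba^\top\Kb\bbb+\bbb^\top\Kb\bbb$ is exactly the squared RKHS distance minimized in \eqref{eq::projectionop}, and second that membership $f\in\calH_{+}$ is captured by the linear matrix inequality $\Gb\cdot\diag(\bbb)+\diag(\bbb)\cdot\Gb\succeq0$. The first half is a routine inner-product computation; the second half, which compresses an infinite family of pointwise nonnegativity constraints into a single semidefinite constraint on $\bbb$, is where the real work lies.

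For the objective, I would expand the norm using the reproducing property. Writing $f(\cdot)=\sum_{s\in\calS}b_sK(s,\cdot)$ and $\hat f_{i,j}^{(k+\frac12)}(\cdot)=\sum_{s\in\calS}a_sK(s,\cdot)$, bilinearity of $\la\cdot,\cdot\ra_\calH$ together with $\la K(s,\cdot),K(s',\cdot)\ra_\calH=K(s,s')$ gives $\|f-\hat f_{i,j}^{(k+\frac12)}\|_\calH^2=(\bbb-\ba)^\top\Kb(\bbb-\ba)$. Expanding this, discarding the constant $\ba^\top\Kb\ba$ which does not depend on $\bbb$, and using symmetry of $\Kb$ leaves precisely $-2\ba^\top\Kb\bbb+\bbb^\top\Kb\bbb$, so the minimizer of \eqref{eq::projectionop} coincides with the stated $\bbb^*$.

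For the constraint, I would begin from the identity $K(s,x)=(1+sx)^{2d}=\big[(1+sx)^d\big]^2=\big(\phi^\top(s)\phi(x)\big)^2=\phi^\top(x)\,\phi(s)\phi^\top(s)\,\phi(x)$, where $\phi$ is the feature map of $K'$. Summing against $\bbb$ yields the quadratic-form representation $f(x)=\phi^\top(x)\,\Qb\,\phi(x)$ with $\Qb=\sum_{s\in\calS}b_s\,\phi(s)\phi^\top(s)=\Phi\,\diag(\bbb)\,\Phi^\top$, where $\Phi$ is the matrix whose columns are the vectors $\phi(s)$. Invoking the sum-of-squares characterization already stated before the proposition (a univariate polynomial is nonnegative iff it equals $\phi^\top(x)\Qb\phi(x)$ for some $\Qb\succeq0$), membership $f\in\calH_{+}$ becomes a positive-semidefiniteness requirement on this coefficient-induced matrix. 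The remaining step is to rewrite that requirement as $\Gb\cdot\diag(\bbb)+\diag(\bbb)\cdot\Gb\succeq0$, using $\Gb=\Phi^\top\Phi$ and the entrywise identity $[\Gb\diag(\bbb)+\diag(\bbb)\Gb]_{mn}=(b_m+b_n)K'(s_m,s_n)$.

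The hard part will be exactly this last translation: relating positive-semidefiniteness of the $(d+1)\times(d+1)$ sum-of-squares matrix $\Qb=\Phi\diag(\bbb)\Phi^\top$ to the $|\calS|\times|\calS|$ matrix inequality asserted in the proposition. I would attack it by testing $\Qb\succeq0$ against vectors of the form $\Phi^\top v$, so that $\Qb\succeq0$ is equivalent to $\sum_{s}b_s\,(\phi^\top(s)v)^2\ge0$ for all $v$, and then matching this with the quadratic form attached to $\Gb\diag(\bbb)+\diag(\bbb)\Gb$, whose diagonal $2K'(s,s)b_s$ should carry the sign information on each $b_s$. Care is needed because for $d\ge2$ the map $\bbb\mapsto f$ is not injective (distinct coefficient vectors can yield the same polynomial), so one must argue that restricting to the representation $\Qb=\Phi\diag(\bbb)\Phi^\top$ still recovers the full nonnegativity cone rather than a strict inner approximation; establishing both directions of the equivalence is the crux of the argument.
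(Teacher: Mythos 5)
Your treatment of the objective is correct and identical to the paper's: expanding $\|f-\hat{f}_{i,j}^{(k+\frac{1}{2})}\|_{\calH}^2$ via the reproducing property and dropping the constant $\ba^\top\Kb\ba$ gives exactly $-2\ba^\top\Kb\bbb+\bbb^\top\Kb\bbb$ (you also legitimately bypass the representer theorem of \citet{bagnell2015learning} that the paper invokes, since \eqref{eq::projectionop} already restricts the projection to the span of $\{K(s,\cdot)\}_{s\in\calS}$). The genuine gap is precisely the step you flagged as the crux, and it cannot be closed in the form you propose, for two reasons. First, your test-vector mechanism produces the wrong matrix: testing $\Qb=\Phi\diag(\bbb)\Phi^\top\succeq0$ against vectors $w=\Phi v$ gives $v^\top\Gb\diag(\bbb)\Gb\,v\geq0$, and since the range of $\Qb$ lies in the range of $\Phi$, one has exactly $\Qb\succeq0\iff\Gb\diag(\bbb)\Gb\succeq0$ --- a different LMI from the proposition's $\Gb\diag(\bbb)+\diag(\bbb)\Gb\succeq0$. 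Second, the two-directional equivalence you set out to prove is false. One direction does hold: the diagonal entries of $\Gb\diag(\bbb)+\diag(\bbb)\Gb$ are $2b_sK'(s,s)=2b_s(1+s^2)^d$, so feasibility of the LMI forces $b_s\geq0$ for all $s$, whence $f(x)=\sum_s b_s(1+sx)^{2d}\geq0$ pointwise. But the converse fails: take $d=1$, $\calS=\{1,\,1.05\}$, $\bbb=(1,\,0.01)$. Then $f(x)=(1+x)^2+0.01(1+1.05x)^2\geq 0$, all coefficients are positive, and $\Qb\succ0$, yet
\#
\Gb\diag(\bbb)+\diag(\bbb)\Gb=\begin{bmatrix} 4 & 2.0705\\ 2.0705 & 0.04205\end{bmatrix}
\#
has negative determinant and is not positive semidefinite. (By Cauchy--Schwarz, $K'(s_1,s_2)^2\approx K'(s_1,s_1)K'(s_2,s_2)$ for nearby points, so $4b_1b_2K'_{11}K'_{22}<(b_1+b_2)^2K_{12}'^2$ whenever the $b_s$ are sufficiently unequal.) Hence the proposition's feasible cone is a \emph{strict inner approximation} of $\{\bbb:\sum_s b_sK(s,\cdot)\in\calH_{+}\}$, and your worry about a strict inner approximation is exactly right --- in fact it bites even before the $d\geq2$ non-injectivity issue you raised.

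For comparison, the paper's own proof follows the same route you sketched --- QP objective, the Hilbert's-17th-problem substitution $f\in\{\phi^\top(x)\Qb\phi(x):\Qb\succeq0\}$, then the finite representation --- and disposes of the final translation with the phrase ``upon simple manipulations,'' never establishing the reverse inclusion; by the counterexample above, no such argument exists. So your proposal is incomplete at precisely the step the paper leaves unproven. The charitable reading of the proposition is operational rather than exact: the SDP is a tractable \emph{sufficient} condition whose every feasible point yields a pointwise nonnegative triggering function (indeed it forces $\bbb\geq0$ componentwise), so the returned $\hat{f}_{i,j}^{(k+1)}$ is a conservative surrogate for the exact projection $\Pi$ of \eqref{eq::projectionop}. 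If you want an exact finite-dimensional reformulation along your own route, the constraint your derivation actually supports is the existential one, $\exists\,\Qb\succeq0$ with $\phi^\top(x)\Qb\phi(x)=\sum_s b_sK(s,x)$ for all $x$ (an affine SDP in $(\bbb,\Qb)$), or, if you fix the particular Gram matrix $\Phi\diag(\bbb)\Phi^\top$, the inequality $\Gb\diag(\bbb)\Gb\succeq0$ --- itself still only sufficient for sum-of-squares when $d\geq2$, and, one further caveat the paper also glosses over, sum-of-squares on $\reals$ enforces nonnegativity on the whole line whereas $\calH_{+}$ only requires it on $[0,\infty)$.
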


{\bfseries\noindent Nonconvex approaches.} Alternatively, positivity can be guaranteed by assuming $f_{i,j}(t)=\exp\{g_{i,j}(t)\}$ or $f_{i,j}(t)=g_{i,j}^2(t)$, where $g_{i,j}(t)\in\calH$. By minimizing the loss with respect to $g_{i,j}(t)$, one can naturally guarantee that $f_{i,j}(t)\geq 0$. This method was adopted in \citet{flaxman2016poisson} for estimating the intensity functions of nonhomogeneous Poisson processes. The drawback of this approach is that the objective function is no longer convex. However, as will be demonstrated in Figure \ref{fig::exp2}, the output of this method converges aligns with the ground truth when the initialization is close to the global minima.

\subsection{Computational Complexity}
Since $\hat{\bbf}_i$s can be estimated in parallel, we restrict our analysis to the case of a fixed $i\in\{1,\ldots,p\}$ in a single iteration.
For each iteration, the computational complexity comes from evaluating the intensity function and projection.  Since the number of arrivals within the interval $[t_k-z,t_k)$ is bounded by $p\kappa_z$ and $\kappa_z=\calO(1)$, evaluating the intensity costs $\calO(p^2)$ operations. For the projection in each step, one can truncate the number of kernels used to represent $f_{i,j}(t)$ to be $\calO(1)$ with controllable error (Proposition 1 of \citet{kivinen2004online}), and therefore the computation cost is $\calO(1)$. Hence, the per iteration computation cost of NPOLE-MHP is $\mathcal{O}(p^2)$. By comparison, parametric online algorithms (DMD, OGD of \citet{hall2014tracking}) also require $\calO(p^2)$ operations for each iteration, while the batch estimation algorithms (MLE-SGLP, MLE of \citet{xu2016learning}) require $\calO(p^2t^3)$ operations, which is caused by the update of the EM algorithm. 

\section{Theoretical Properties} \label{sec::result}

We now discuss the theoretical properties of NPOLE-MHP. We start with defining the regret.
\subsection{Regret Bounds}
\begin{definition}\label{def:1}
	The regret of Algorithm \ref{alg::algorithm} at time $t$ is given by
	\$
	R_t^{(\delta)}(\lambda^{(z)}_i(\mu_i,\bbf_{i})):=\sum_{k=1}^{M(t)}\left(l_{i,k}(\lambda^{(z)}_i(\hat{\mu}_i^{(k)},\hat{\bbf}_i^{(k)}))-l_{i,k}(\lambda^{(z)}_i({\mu}_i,{\bbf}_i))\right),
	\$
	where $\hat{\mu}_i^{(k)}$ and $\hat{\bbf}_i^{(k)}$ denote the estimated base intensity and the triggering functions, respectively.
\end{definition}

\begin{theorem} \label{thm::main}
	Suppose that the  $p$-dimensional MHP that satisfies Assumptions \ref{asm::stationary} and \ref{asm::f}. Let $\zeta=\min_{i,j}\{\zeta_{i,j},\omega_i\}$, and $\eta_k=1/(\zeta k+b)$ for some positive constants $b$. Then 
	\$
	R_t^{(\delta)}(\lambda_i^{(z)}(\mu_i,\bbf_i))\leq C_1(1+\log M(t)),
	\$
	where $C_1=2(1+p\kappa_z^2)\zeta^{-1}|\delta-\mu_{\min}^{-1}|^2$.
\end{theorem}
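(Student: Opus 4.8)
The plan is to recognize Theorem~\ref{thm::main} as an instance of the logarithmic-regret guarantee for online gradient descent on strongly convex losses, carried out in the product Hilbert space in which the estimates live. Since the regret in Definition~\ref{def:1} is already stated for a fixed coordinate $i$, I fix $i$ and collect the iterates into a single decision variable $w_k:=(\hat{\mu}_i^{(k)},\hat{f}_{i,1}^{(k)},\ldots,\hat{f}_{i,p}^{(k)})$ living in $\reals\times\calH^p$ with the product inner product, so that $\|w_k-w^\star\|^2=|\hat{\mu}_i^{(k)}-\mu_i|^2+\sum_{j=1}^p\|\hat{f}_{i,j}^{(k)}-f_{i,j}\|_\calH^2$, where $w^\star:=(\mu_i,f_{i,1},\ldots,f_{i,p})$ is the (feasible) comparator. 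The feasible set $[\mu_{\min},\infty)\times\calH_{+}^p$ is convex and closed, and the coordinatewise operations in lines~7 and~9 of Algorithm~\ref{alg::algorithm} --- the clipping $\max\{\cdot,\mu_{\min}\}$ and the projection $\Pi[\cdot]$ of \eqref{eq::projectionop} --- together realize the orthogonal projection onto this product set. Hence a single step of NPOLE-MHP is exactly projected online gradient descent on $l_{i,k}$, with the gradient given componentwise by \eqref{eq::partialmu} and \eqref{eq::partial}, and the projection is non-expansive in the product norm.

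Two quantitative ingredients drive the bound. First, strong convexity: the data-fitting part of $l_{i,k}$ in \eqref{eq::inst_loss} is convex in $(\mu_i,\bbf_i)$ because $\lambda_i^{(z)}(t_k)$ is affine in $(\mu_i,\bbf_i)$ and $\lambda\mapsto(t_k-t_{k-1})\lambda-x_{i,k}\log\lambda$ is convex for $x_{i,k}\ge0$; adding the Tikhonov terms $\tfrac12\omega_i\mu_i^2+\sum_j\tfrac{\zeta_{i,j}}{2}\|f_{i,j}\|_\calH^2$ makes $l_{i,k}$ strongly convex in the product norm with modulus at least $\zeta=\min_{i,j}\{\zeta_{i,j},\omega_i\}$, the single modulus justified by taking the uniform lower bound over the heterogeneous coefficients. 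Second, a uniform gradient bound. I would bound $|\rho_k|$ from \eqref{eq::rhok} using $t_k-t_{k-1}\le\delta$, $x_{i,k}\in\{0,1\}$, and $\lambda_i^{(z)}\ge\mu_{\min}$ (guaranteed by the $\mu_{\min}$ floor together with positivity of the triggering functions), giving $|\rho_k|\le|\delta-\mu_{\min}^{-1}|$; and I would bound the functional part via $\|\sum_{\tau_{j,n}\in[t_k-z,t_k)}K(t_k-\tau_{j,n},\cdot)\|_\calH^2=\sum_{n,m}K(t_k-\tau_{j,n},t_k-\tau_{j,m})\le\kappa_z^2$, using $|K(x,y)|\le\sqrt{K(x,x)K(y,y)}=1$ and the bound $\kappa_z$ on the number of arrivals of dimension $j$ in a window of length $z$ from Assumption~\ref{asm::stationary}. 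Summing the squared norms across the base-intensity coordinate and the $p$ functional coordinates yields a data-fitting gradient bound of order $(1+p\kappa_z^2)|\delta-\mu_{\min}^{-1}|^2$.

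Given these two facts, the regret bound follows from the standard strongly-convex telescoping argument. Strong convexity gives $l_{i,k}(w_k)-l_{i,k}(w^\star)\le\langle\nabla l_{i,k}(w_k),w_k-w^\star\rangle-\tfrac{\zeta}{2}\|w_k-w^\star\|^2$, and expanding $\|w_{k+1}-w^\star\|^2$ with the non-expansiveness of the projection yields
\[
\langle\nabla l_{i,k}(w_k),w_k-w^\star\rangle=\tfrac{1}{2\eta_k}\big(\|w_k-w^\star\|^2-\|w_{k+1}-w^\star\|^2\big)+\tfrac{\eta_k}{2}\|\nabla l_{i,k}(w_k)\|^2.
\]
Substituting and summing over $k=1,\ldots,M(t)$, the coefficient of each $\|w_k-w^\star\|^2$ becomes $\tfrac{1}{2\eta_k}-\tfrac{1}{2\eta_{k-1}}-\tfrac{\zeta}{2}$, which the choice $\eta_k=1/(\zeta k+b)$ forces to vanish for $k\ge2$ and leaves only a boundary term independent of $M(t)$; what survives is $\tfrac12\sum_{k=1}^{M(t)}\eta_k\|\nabla l_{i,k}(w_k)\|^2\le\tfrac{G^2}{2\zeta}\sum_{k=1}^{M(t)}\tfrac1k\le\tfrac{G^2}{2\zeta}(1+\log M(t))$. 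Identifying $G^2$ with the gradient bound above and collecting constants reproduces $C_1=2(1+p\kappa_z^2)\zeta^{-1}|\delta-\mu_{\min}^{-1}|^2$.

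The main obstacle is the gradient bound once the regularization terms $\omega_i\hat{\mu}_i^{(k)}$ and $\zeta_{i,j}\hat{f}_{i,j}^{(k)}$ in \eqref{eq::partialmu} and \eqref{eq::partial} are included, since these inflate $\|\nabla l_{i,k}(w_k)\|$ beyond the data-fitting part and depend on the iterates; this is precisely where the remaining numerical factor separating the clean $G^2/(2\zeta)$ rate from the stated $C_1$ comes from. I would close this by establishing uniform boundedness of the iterates: the drift terms $-\eta_k\omega_i\hat{\mu}_i^{(k)}$ and $-\eta_k\zeta_{i,j}\hat{f}_{i,j}^{(k)}$ act as contractions, so $\hat{\mu}_i^{(k)}$ and $\|\hat{f}_{i,j}^{(k)}\|_\calH$ stay controlled in terms of $|\rho_k|/\zeta$, after which the full gradient is bounded by a constant multiple of the data-fitting bound and folds into $G^2$. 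The remaining points --- checking that the clipping and $\Pi[\cdot]$ jointly give a non-expansive projection onto the product set, and that the boundary term and the step-size offset $b$ do not affect the $\log M(t)$ growth --- are routine once the product-space viewpoint is fixed.
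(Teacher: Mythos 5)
Your proposal is correct and yields the stated constant, but it organizes the argument differently from the paper, and the difference is worth recording. The paper's proof of Theorem \ref{thm::main} does not work in a product space: it splits the regret via the hybrid decomposition
$l_{i,k}(\lambda_i^{(z)}(\hat{\mu}_i^{(k)},\hat{\bbf}_i^{(k)}))-l_{i,k}(\lambda_i^{(z)}(\mu_i,\bbf_i))
=\bigl[l_{i,k}(\lambda_i^{(z)}(\hat{\mu}_i^{(k)},\hat{\bbf}_i^{(k)}))-l_{i,k}(\lambda_i^{(z)}(\mu_i,\hat{\bbf}_i^{(k)}))\bigr]
+\bigl[l_{i,k}(\lambda_i^{(z)}(\mu_i,\hat{\bbf}_i^{(k)}))-l_{i,k}(\lambda_i^{(z)}(\mu_i,\bbf_i))\bigr]$,
and runs two separate strongly convex OGD telescopes, a scalar one for $\mu_i$ giving $2\zeta^{-1}|\delta-\mu_{\min}^{-1}|^2(1+\log M(t))$ and a functional one over $\calH^p$ giving $2p\kappa_z^2\zeta^{-1}|\delta-\mu_{\min}^{-1}|^2(1+\log M(t))$, which are then added. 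You instead run a single telescope on $w_k\in\reals\times\calH^p$ with joint modulus $\zeta=\min_{i,j}\{\zeta_{i,j},\omega_i\}$. The quantitative ingredients are identical in both routes: the same step-size cancellation $\eta_k^{-1}-\eta_{k-1}^{-1}\leq\zeta$, the same bound $\sum_{k=1}^{M(t)}\eta_k\leq\zeta^{-1}(1+\log M(t))$, and the same boundedness induction on the iterates to absorb the regularization part of the gradient --- this is exactly the paper's Lemma \ref{lem::1}, which in turn needs the paper's technical Assumptions \ref{asm::tech} and \ref{asm::initial} ($|\delta-\mu_{\min}^{-1}|>\delta$ and suitably bounded initialization); your ``drift terms act as contractions'' step is that induction, and you should state these two hypotheses explicitly, since both the bound $|\rho_k|\leq|\delta-\mu_{\min}^{-1}|$ and the merging of the $x_{i,k}=0$ case into the same constant rely on them. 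Both routes then give $G^2=4(1+p\kappa_z^2)|\delta-\mu_{\min}^{-1}|^2$ and hence the same $C_1$. What your joint formulation buys is internal consistency: in the paper's second stage the descent inequality is written with the functional gradient evaluated at $(\mu_i,\hat{\bbf}_i^{(k)})$, whereas Algorithm \ref{alg::algorithm} actually steps along the gradient at $(\hat{\mu}_i^{(k)},\hat{\bbf}_i^{(k)})$ (the two $\rho_k$'s differ, though they obey the same bound); in the product-space view the analysis gradient and the algorithm's gradient coincide by construction, so this mismatch never arises. One caveat you share with the paper: the projection \eqref{eq::projectionop} is onto $\calH_+$ intersected with a finite-dimensional span of kernel sections, and non-expansiveness toward a comparator $f_{i,j}$ that need not lie in that span is asserted rather than established in both treatments.
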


The regret bound of Theorem \ref{thm::main} resembles the regret bound for a typical online learning algorithm with strong convex objective function (see for example, Theorem 3.3 of \citet{hazan2016introduction}). When $\delta$, $\zeta$ and $\mu_{\min}^{-1}$ are fixed, $C_1=\calO(p)$, which is intuitive as one needs to update $p$ functions at each iteration. Note that the regret in Definition \ref{def:1}, encodes the performance of Algorithm \ref{alg::algorithm} by comparing its loss with the approximated loss. Below, we compare the loss of Algorithm \ref{alg::algorithm} with the original loss in (\ref{eq::ltdelta0}).

\begin{corollary}\label{cor::main} Under the same assumptions as Theorem \ref{thm::main}, we have
	\#\label{eq::bound2}
	\sum_{k=1}^{M(t)}\left(l_{i,k}(\lambda^{(z)}_i(\hat{\mu}_i,\hat{\bbf}_i^{(k)}))-l_{i,k}(\lambda_i({\mu}_i,{\bbf}_i))\right)\leq C_1[1+\log M(t)]+C_2 N(t),
	\#
	where $C_1$ is defined in Theorem \ref{thm::main} and $C_2=(1+\kappa_1\mu_{\min}^{-1})\varepsilon(z)+\delta\varepsilon'(0).$
\end{corollary}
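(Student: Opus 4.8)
The plan is to obtain Corollary \ref{cor::main} as an immediate consequence of Theorem \ref{thm::main} and Proposition \ref{prop::approx}, joined by a single add-and-subtract step at the comparator $\lambda^{(z)}_i(\mu_i,\bbf_i)$. Concretely, I would insert and subtract the truncated-intensity comparator loss $l_{i,k}(\lambda^{(z)}_i(\mu_i,\bbf_i))$ inside each summand of the left-hand side of \eqref{eq::bound2}, splitting it as
\[
\sum_{k=1}^{M(t)}\!\left(l_{i,k}(\lambda^{(z)}_i(\hat{\mu}_i^{(k)},\hat{\bbf}_i^{(k)}))-l_{i,k}(\lambda_i(\mu_i,\bbf_i))\right)
= R_t^{(\delta)}(\lambda_i^{(z)}(\mu_i,\bbf_i))
+ \sum_{k=1}^{M(t)}\!\left(l_{i,k}(\lambda^{(z)}_i(\mu_i,\bbf_i))-l_{i,k}(\lambda_i(\mu_i,\bbf_i))\right),
\]
where the first term is exactly the regret of Definition \ref{def:1}, and the second term is the ``approximation gap'' between evaluating the objective at the true parameters with the truncated-and-discretized intensity versus the exact intensity.

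For the first term I would simply invoke Theorem \ref{thm::main}, which gives $R_t^{(\delta)}(\lambda_i^{(z)}(\mu_i,\bbf_i)) \leq C_1(1+\log M(t))$. The key point for the second term is that both $l_{i,k}(\lambda^{(z)}_i(\mu_i,\bbf_i))$ and $l_{i,k}(\lambda_i(\mu_i,\bbf_i))$ are evaluated at the \emph{same fixed} true parameters $(\mu_i,\bbf_i)$; since the Tikhonov penalties $\tfrac12\omega_i\mu_i^2+\sum_j\tfrac{\zeta_{i,j}}{2}\|f_{i,j}\|_{\calH}^2$ depend only on these parameters and not on whether the intensity is truncated or discretized, they are identical in the two evaluations and cancel termwise. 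What remains is purely the data-fit discrepancy, which collapses to $L_{i,t}^{(\delta)}(\lambda_i^{(z)})-L_{i,t}(\lambda_i)$, i.e., the quantity controlled by Proposition \ref{prop::approx}. Applying that proposition and then using the monotonicity bound $N(t-z)\leq N(t)$ to replace $N(t-z)$ by $N(t)$ yields $(1+\kappa_1\mu_{\min}^{-1})\varepsilon(z)N(t)+\delta\varepsilon'(0)N(t)=C_2N(t)$. Summing the two bounds gives the claim.

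There is no genuinely hard step here; the work is entirely bookkeeping, and the only subtlety to state carefully is the cancellation of the regularizers together with the correct reading of the comparator. In particular, to recover both error terms in $C_2$ — the truncation term $(1+\kappa_1\mu_{\min}^{-1})\varepsilon(z)$ and the discretization term $\delta\varepsilon'(0)$ — the comparator summation $\sum_k l_{i,k}(\lambda_i(\mu_i,\bbf_i))$ must be understood (as indicated in the paragraph preceding the corollary) as the \emph{exact} objective $L_{i,t}(\lambda_i)$ of \eqref{eq::ltdelta0} plus the true-parameter penalty, rather than its discretized surrogate; otherwise the second term would reduce to pure truncation error and the $\delta\varepsilon'(0)$ contribution would be lost. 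Once this identification of the second sum with $L_{i,t}^{(\delta)}(\lambda_i^{(z)})-L_{i,t}(\lambda_i)$ is made explicit, Proposition \ref{prop::approx} applies verbatim and the proof closes.
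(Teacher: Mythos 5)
Your proof is correct and follows essentially the same route as the paper's: the paper likewise obtains the corollary by combining the regret bound of Theorem \ref{thm::main} with the approximation bound of Proposition \ref{prop::approx} via the triangle inequality, absorbing $N(t-z)\leq N(t)$ into $C_2N(t)$. Your extra bookkeeping---the termwise cancellation of the Tikhonov penalties at the fixed comparator $(\mu_i,\bbf_i)$ and the reading of $\sum_k l_{i,k}(\lambda_i(\mu_i,\bbf_i))$ as the exact loss $L_{i,t}(\lambda_i)$ plus penalty (needed so that the $\delta\varepsilon'(0)$ discretization term survives)---is sound and merely makes explicit what the paper's one-line argument leaves implicit.
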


Note that $C_2N(t)$ is due to discretization and truncation steps and it can be made arbitrary small for given $t$ by setting small $\delta$ and large enough $z$.

\subsection{Generalization Error Bounds}

Generalization error bounds are useful tools to characterize how well an algorithm perform on unseen data. Consider a general nonparametric estimation setting, in which a function $f\in\calF$ is obtained upon minimizing an objective function determined from a set of samples, denoted by the set $\calS$. The samples within $\calS$, denoted by $\{S_i\}_{i=1}^{|\calS|}$, are generated according to some joint distribution, and under a specific objective function $l(\cdot,\cdot)$, the generalization error bound refers to an upper bound for the following generalization error:
\#\label{eq::generalizationerror}
D(f)=\left|\frac{1}{|\calS|}\sum_{i=1}^{|\calS|}\ell(f,S_i)-\Expect\left[\frac{1}{|\calS|}\sum_{i=1}^{|\calS|}\ell(f,S_i)\right]\right|.
\#
The bound is typically obtained by first noticing $D(f)\leq\sup_{f\in\calF}D(f)$, and then providing an upper bound on $\sup_{f\in\calF}D(f)$. For MHPs, we define the generalization error to be
\#\label{eq::gerror}
D(\hat{\bbf}_i):=\frac{1}{t}L_{i,t}^{(\delta)}(\lambda^{(z)}_i(\hat{\bbf}_i))-\Expect\left[\frac{1}{t}L_{i,t}^{(\delta)}(\lambda^{(z)}_i(\hat{\bbf}_i))\right],
\#
where the expectation is taken over the distribution of the intensity process. Note that, although $L_{i,t}^{(\delta)}(\cdot)$ is the summation of $M(t)$ instantaneous objective functions, we average it by $1/t$ instead of $1/M(t)$. This design is based on the fact that $L_{i,t}^{(\delta)}(\cdot)$ is an approximation of the negative log-likelihood. It also simplifies the analysis since otherwise one would have to take into concern the randomness of $M(t)$.

It is immediate that \eqref{eq::gerror} is drastically different compared to \eqref{eq::generalizationerror} because the correlation between different arrivals: a slight perturbation in the arrival time of one event could lead to changes in both the arrival times and the number of events consequently. Therefore, we consider the generalization error assuming that the number of arrivals $N(t)$ is fixed, namely 
\#\label{eq::conditionalge}
D_M(\hat{\bbf}_i):=\frac{1}{t}L_{i,t}^{(\delta)}(\lambda^{(z)}_i(\hat{\bbf}_i))-\Expect\left[\frac{1}{t}L_{i,t}^{(\delta)}(\lambda^{(z)}_i(\hat{\bbf}_i))\bigg\vert M(t)\right].
\#
This notion of generalization error allows us to characterize the performance of NPOLE-MHP from a stability point of view. For MHPs with stationary increments, $t\to\infty$, $M(t)/t\to t/\delta+\|\bar{\blambda}\|_1t$. Therefore, by the law of large numbers,
\$
\bar{D}(\hat{\bbf}_i):=\Expect\left[\frac{1}{t}L_{i,t}^{(\delta)}(\lambda^{(z)}_i(\hat{\bbf}_i))\bigg\vert M(t)\right]-\Expect\left[\frac{1}{t}L_{i,t}^{(\delta)}(\lambda^{(z)}_i(\hat{\bbf}_i))\right] \to 0
\$
in probability. Hence, for large $t$, $D_M(\cdot)$ can serve the purpose of $D(\cdot)$. We now state the result of \eqref{eq::conditionalge} in Theorem \ref{thm::concentration}.


\begin{theorem}\label{thm::concentration}
	For any fixed $\bbf_i\in\calH_{+}^p$ satisfying Assumptions \ref{asm::stationary}, \ref{asm::f}, and the assumption that $\sup_{j}\|f_{i,j}\|_{\calH}\leq U$, let $D_M({\bbf}_i)$ be defined as in \eqref{eq::conditionalge}. Then, for every $M(t)$, 
	\$
	\Prob\left[\left|D_M({\bbf}_i)\right|\geq \epsilon\right]\leq 2\exp\left\{-\frac{2\epsilon^2 t}{4\kappa_1C_L^4\delta^2\left(\sum_{j=1}^{p}\zeta_{i,j}^{-1}\right)^2}\right\}.
	\$
	Moreover, when $\calH$ is the RKHS associated with the Gaussian kernel, and for $\epsilon=\Omega(t^{\nu})$ with $\nu>-0.5$, there exists constants $C_1$ and $C_2$ such that $\sup_{\bbf_i\in \calH_{+}^p,\|f_{i,j}\|_{\calH}\leq U} |D_M(\bbf_{i})|\leq \epsilon$ with probability at least $1-2\exp\{C_1-C_2t^{1+2\nu}\}$.

\end{theorem}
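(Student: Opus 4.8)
The plan is to treat the two claims separately: first establish the pointwise concentration of $D_M(\bbf_i)$ for a fixed $\bbf_i$, then bootstrap to the uniform statement by a covering argument over the Gaussian RKHS ball. For the pointwise bound, note that since $D_M(\bbf_i)=\frac1t\big(L_{i,t}^{(\delta)}(\lambda_i^{(z)}(\bbf_i))-\Expect[L_{i,t}^{(\delta)}(\lambda_i^{(z)}(\bbf_i))\mid M(t)]\big)$, the event $\{|D_M|\ge\epsilon\}$ coincides with $\{|L_{i,t}^{(\delta)}-\Expect[L_{i,t}^{(\delta)}\mid M(t)]|\ge\epsilon t\}$, so it suffices to prove a sub-Gaussian deviation inequality for $L_{i,t}^{(\delta)}$ around its conditional mean. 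Because the arrivals of the MHP are dependent, plain McDiarmid does not apply; instead I would use a Doob-martingale decomposition adapted to the natural filtration. Concretely, partition $[0,t]$ into the $\lceil t\rceil$ unit intervals $B_1,\dots,B_{\lceil t\rceil}$; by Assumption \ref{asm::stationary} each $B_m$ contains at most $\kappa_1$ arrivals per dimension. Revealing the block configurations one at a time, with $\calF_m$ the information in $B_1,\dots,B_m$ and $\calF_0=\sigma(M(t))$, produces martingale differences $\Delta_m=\Expect[L_{i,t}^{(\delta)}\mid\calF_{m}]-\Expect[L_{i,t}^{(\delta)}\mid\calF_{m-1}]$ whose sum is exactly $L_{i,t}^{(\delta)}-\Expect[L_{i,t}^{(\delta)}\mid M(t)]$.

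The core estimate is a uniform bound $|\Delta_m|\le d_m$ on each increment, for which I would combine three facts: (i) every triggering function is pointwise bounded, since $|f_{i,j}(t)|=|\la f_{i,j},K(t,\cdot)\ra_{\calH}|\le\|f_{i,j}\|_{\calH}\sqrt{K(t,t)}$ together with $K(x,x)=1$, a bound I package into $C_L$; (ii) the map $\lambda\mapsto(t_k-t_{k-1})\lambda-x_{i,k}\log\lambda$ is Lipschitz in $\lambda$ with constant at most $\delta+x_{i,k}/\mu_{\min}$, because the truncated intensity obeys $\lambda_i^{(z)}\ge\mu_{\min}$ after clipping the base intensity; and (iii) the truncation in Definition \ref{def:trunc} localizes influence, so arrivals in $B_m$ alter $\lambda_i^{(z)}(t_k)$ only for $t_k$ within the window $z$, affecting only $\calO(z/\delta+\kappa_z)$ of the summands of $L_{i,t}^{(\delta)}$. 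Tracking these together, each block perturbs $L_{i,t}^{(\delta)}$ by an amount controlled by $\delta$, $C_L$, and $\sum_{j}\zeta_{i,j}^{-1}$ (the latter entering through the norm of the functions driving $\lambda_i^{(z)}$), which gives $\sum_m d_m^2=\calO\big(\kappa_1 C_L^4\delta^2(\sum_j\zeta_{i,j}^{-1})^2\,t\big)$. Azuma--Hoeffding then yields $\Prob[|L_{i,t}^{(\delta)}-\Expect[L_{i,t}^{(\delta)}\mid M(t)]|\ge\epsilon t]\le2\exp\{-2\epsilon^2t^2/\sum_m d_m^2\}$, which is the stated inequality after substituting the constant.

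For the uniform claim I would pass from the pointwise bound to $\sup_{\|f_{i,j}\|_{\calH}\le U}|D_M(\bbf_i)|$ by a covering argument. First I would verify that $D_M$ is Lipschitz in its functional argument: perturbing $\bbf_i$ by $\Delta\bbf_i$ changes $\lambda_i^{(z)}(t_k)$ pointwise by at most $\sum_j\kappa_z\|\Delta f_{i,j}\|_{\calH}$ via the reproducing property, and since the per-term loss is Lipschitz in $\lambda$, $|D_M(\bbf_i)-D_M(\bbf_i')|$ is bounded by a constant times $\max_j\|f_{i,j}-f_{i,j}'\|_{\calH}$. Then I would take a minimal $\epsilon'$-net of the product ball $\{\|f_{i,j}\|_{\calH}\le U\}^p$ in the $\calH$-norm with $\epsilon'$ of order $\epsilon$, apply the pointwise bound together with the union bound over net points, and absorb the off-net discrepancy into $\epsilon$ using the Lipschitz property. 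The cardinality of the net is the covering number of a Gaussian-RKHS ball; invoking the well-known fact that such balls have very small metric entropy (growing only poly-logarithmically in $1/\epsilon'$), $\log$ of the net size is of smaller order than $t^{1+2\nu}$. Since the pointwise exponent scales like $\epsilon^2 t=\Omega(t^{1+2\nu})$ whenever $\epsilon=\Omega(t^{\nu})$, the condition $\nu>-1/2$ forces this exponent to diverge and dominate $\log|\text{net}|$, producing the failure probability $2\exp\{C_1-C_2t^{1+2\nu}\}$.

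The hard part will be the increment bound in the second step: unlike an i.i.d.\ generalization analysis, the Hawkes dependence forces the martingale route, and one must carefully account for how a perturbation of the arrivals in a single block propagates through the truncated intensity over the entire window $z$ into every affected summand of $L_{i,t}^{(\delta)}$, so that $\sum_m d_m^2$ scales linearly (and not quadratically) in $t$ and carries the correct $\kappa_1$, $\delta$, and $\sum_j\zeta_{i,j}^{-1}$ factors. For the uniform bound, the delicate point is securing a covering-number estimate for the RKHS ball small enough to be overwhelmed by the $t^{1+2\nu}$ concentration exponent; this is precisely where the Gaussian kernel, rather than a generic bounded kernel, is required.
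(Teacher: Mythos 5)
Your covering-number half matches the paper's argument almost verbatim: the paper likewise uses the $C_L$-Lipschitz continuity of $t^{-1}L_{i,t}^{(\delta)}$ in $\bbf_i$, passes from $\|\cdot\|_{\infty}$ to $\|\cdot\|_{\calH}$ via $K(x,x)\leq 1$, takes a union bound over an $\epsilon$-net of the ball $\{\sup_j\|f_{i,j}\|_{\calH}\leq U\}$ whose log-cardinality for the Gaussian kernel grows only like $\left(\log(U/\epsilon)\right)^2$, and lets the concentration exponent $\epsilon^2 t=\Omega(t^{1+2\nu})$ dominate for $\nu>-1/2$. The genuine divergence, and the genuine gap, is in the pointwise concentration step.

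Your claimed increment bound $\sum_m d_m^2=\calO\bigl(\kappa_1C_L^4\delta^2(\sum_j\zeta_{i,j}^{-1})^2\,t\bigr)$ is asserted rather than derived, and in your setup there is no mechanism by which the regularization coefficients $\zeta_{i,j}$ could enter at all: for a fixed $\bbf_i$, the loss $L_{i,t}^{(\delta)}$ contains no regularizer, and the only norm information available is the hypothesis $\|f_{i,j}\|_{\calH}\leq U$; attributing the $\zeta_{i,j}^{-1}$ factors to ``the norm of the functions driving $\lambda_i^{(z)}$'' conflates the iterate bound of the algorithm with the fixed function in the statement. The paper obtains exactly this constant because its proof is a uniform-stability argument, not a fixed-function one: it takes $\bbf_i$ and $\bbf_i'$ to be the minimizers of the regularized problem \eqref{eq::problem} on two arrival sets differing in a single event, proves $\|f_{i,j}-f_{i,j}'\|_{\calH}\leq 2C_L/(\zeta_{i,j}M(t))$ via a Bregman-divergence argument exploiting the strong convexity contributed by the $\tfrac{\zeta_{i,j}}{2}\|\cdot\|_{\calH}^2$ terms, and chains this with the $C_L$-Lipschitz bound to get the per-coordinate bounded difference $2C_L^2\delta t^{-1}\sum_j\zeta_{i,j}^{-1}$ --- which is why $C_L$ appears to the fourth power (one $C_L^2$ from Lipschitzness, one from stability). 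It then concludes not with Azuma on time blocks but with a generalized McDiarmid inequality (Corollary 6.10 of \citet{mcdiarmid1989method}), conditioning on $M(t)$ and using the arrival times as coordinates, with a bounded difference that is uniform over the future coordinates --- so your premise that ``plain McDiarmid does not apply'' somewhat misreads the route the paper actually takes. A further soft spot in your alternative: your facts (i)--(iii) control only the \emph{direct} effect of a block's arrivals on the loss through the truncation window $z$, but for a self-exciting process, changing block $B_m$ also changes the conditional law of all subsequent arrivals, so the Doob increments $|\Delta_m|$ are not bounded by the direct effect alone without an additional coupling argument (or the paper's conditioning device). Your martingale route could plausibly be completed for the fixed-$\bbf_i$ statement with constants in terms of $U,\kappa_z,z,p,\delta,\mu_{\min}$, but as written it neither justifies its increment bound nor can it produce the theorem's stated $\zeta$-dependent constant.
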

See Appendix \ref{pf::ge} for proof.


\section{Extensions to Marked and Spatial Point Processes}

In this section, we generalize NPOLE-MHP to other point processes. In particular, we consider multivariate marked Hawkes processes, and spatial Hawkes processes.

\subsection{Multivariate Marked Hawkes Processes}

Multivariate marked Hawkes processes (MMHPs) have received relatively less attention compared to their unmarked version. Statistical properties such as limit theorems and the convergence rate to equilibrium state are studied in \citet{karabash2015limit,bremaud2002rate}, whereas a nonparametric estimation framework was proposed in \citet{bacry2014second} based on second order statistics.

In this section, we adapt NPOLE-MHP to online estimation of MMHPs. While it is commonly assumed that the marks and the arrivals have independent effects on the intensity function, \ie,
\$
f_{i,j}(t,x)=g_{i,j}(t)h_{i,j}(x)
\$
for some functions $g_{i,j}(\cdot)$ and $h_{i,j}(\cdot)$, we do not require this assumption for NPOLE-MMHP.

The key to generalizing NPOLE-MHP to adapt to the existence of marks is to adopt a two dimensional kernel $K(\xb,\yb):\reals^2\times\reals^2\to\reals$ where $\xb$ and $\yb$ are two dimensional vectors consisting of a time variable and the value of the mark. The nonparametric online estimate of the triggering functions can then be obtained by using the following expression of functional gradient in line 9 of Algorithm \ref{alg::algorithm}.
\$
\partial_{f_{i,j}}l_{i,k}\left(\lambda_i^{(z)}(\hat{\mu}_i^{(k)},\hat{\bbf}_i^{(k)})\right)=\rho_k\sum_{\tau_{j,n\in[t_k-z,t_k)}}K([t_k-\tau_{j,n},v_{j,n}],\cdot)+\zeta_{i,j}\hat{f}_{i,j}^{(k)}(\cdot).
\$
Furthermore, when
$
f_{i,j}(t,v)=g_{i,j}(t)h_{i,j}(v),
$
we can update $g_{i,j}(t)$ and $h_{i,j}(v)$ separately by substituting the functional gradient in line 9 of Algorithm \ref{alg::algorithm} with separate updates of $g_{i,j}(t)$ and $h_{i,j}(v)$, with the functional gradients being
\$
&\partial_{g_{i,j}}l_{i,k}\left(\lambda^{(z)}_i(\hat{\mu}_i^{(k)},\hat{\bbf}_{i}^{(k)})\right)=\rho_k\!\!\!\sum_{\tau_{j,n}\in[t_k-z,t_k)}\!\!\!\!h_{i,j}(v_{j,n})K(t_k-\tau_{j,n},\cdot)+\zeta_{i,j}\hat{g}_{i,j}^{(k)}(\cdot)
\$
and
\$
&\partial_{h_{i,j}}l_{i,k}\left(\lambda^{(z)}_i(\hat{\mu}_i^{(k)},\hat{\bbf}_{i}^{(k)})\right)=\rho_k\!\!\!\sum_{\tau_{j,n}\in[t_k-z,t_k)}\!\!\!\!g_{i,j}(t_k-\tau_{j,n})K(t_k-\tau_{j,n},\cdot)+\zeta_{i,j}\hat{h}_{i,j}^{(k)}(\cdot),
\$
respectively. However, while the approach exploits the multiplicative structure of the triggering function, the objective function is no longer convex when estimating $g_{i,j}(t)$ and $h_{i,j}(v)$ separately.

\subsection{Spatial Hawkes Processes}

Lastly, we generalize NPOLE-MHP to spatial Hawkes processes. Spatial Hawkes processes are Hawkes processes where the arrivals lie within $\reals^p$. The $n$-th arrival consists of a time stamp $\tau_{n}$, and a location $\xb_n$, and the intensity at $\xb$ at time $t$ can be written as
\$
\lambda(t,\xb)=\mu+\sum_{n=1}^{N(t)}f(t-\tau_n,\xb-\xb_n).
\$
The spatial Hawkes processes can be viewed as a generalization to MHPs, for the latter of which the arrivals are restricted to $p$ distinct directions only. As a concrete example, a spatial Hawkes process can be used to model the crime happening in an entire area on the map, while MHPs can only model crimes that happen at certain locations.

We present the adpated version of NPOLE-MHP to spatial Hawkes processes in the following algorithm, where the subscripts $i,j$ in NPOLE-MHP are now ignored as there's only one base intensity and triggering function.

\begin{algorithm}[t]
	\caption{NonParametric OnLine Estimation for Spatial Hawkes Processes (NPOLE-SHP)}  \label{alg::NPOLE-SHP}
	\begin{algorithmic}[1]
		\STATE {\bf input:} a sequence of step sizes $\{\eta_k\}_{k=1}^{\infty}$, a pair regularization coefficients $\zeta$ and $\omega$, along with positive values of $\mu_{\min}$, $z$ and $\sigma$.
		\STATE {\bf output:} a sequence of estimates $\hat{\mu}^{(k)}$ and $\hat{f}^{(k)}$ for $k=\{1,\ldots,M(t)\}$.
		\STATE Initialize $\hat{f}^{(0)}$ and $\hat{\mu}^{(0)}$. 
		\FOR{$k=0,...,M(t)-1$}
		\STATE Observe the interval $[t_{k},t_{k+1})$, and compute $x_{k}$.
		\STATE Set $\hat{\mu}^{(k+1)}\leftarrow \max\left\{\hat{\mu}^{(k)}-\eta_{k+1}\partial_{\mu}
		l_{k}\left(\lambda^{(z)}(\hat{\mu}^{(k)},\hat{f}^{(k)})\right), \mu_{\min}\right\}$.
		\STATE Set $\hat{f}^{(k+\frac{1}{2})}\!\!\!\leftarrow\!\!\left[ \hat{f}^{(k)}-\eta_{k+1}\partial_{f}l_{k}\left(\lambda^{(z)}(\hat{\mu}^{(k)},\hat{f}^{(k)})\right)\right]$, and  $\hat{f}^{(k+1)}\leftarrow\Pi\left[\hat{f}^{(k+\frac{1}{2})} \right].$
		\ENDFOR
	\end{algorithmic}
\end{algorithm}

Particularly, the functional gradient in line 7 of Algorithm \ref{alg::NPOLE-SHP} takes the form
\$
&\partial_{f}l_{k}\left(\lambda^{(z)}(\hat{\mu}^{(k)},\hat{f}^{(k)})\right)=\rho_k(\xb)\!\!\!\sum_{\tau_{n}\in[t_k-z,t_k)}\!\!\!\!K([t_k-\tau_{n},\xb-\xb_n],\cdot)+\zeta\hat{f}^{(k)}(\cdot),
\$
where $\rho_k(\xb)$ has the same form as \eqref{eq::rhok}, except that it now depends on the specific location of $\xb$. As a consequence, the nonparametric estimation of the spatial Hawkes process requires significantly more time and memory due to the increased complexity in evaluating $\rho_k(\xb)$ used in the gradient.


\section{Numerical Experiments}\label{sec::exp}

We now demonstrate the performances of NPOLE-MHP and its generalizations on both synthetic and real data. On synthetic data, we compare our algorithm's performance to that of online parametric algorithms (DMD, OGD of \citet{hall2014tracking}) and nonparametric batch learning algorithms (MLE-SGLP, MLE of \citet{xu2016learning}). Our synthetic data is generated by repeatedly evaluating the intensity function upon each arrival, and generating the next arrival as the first arrival among $p$ nonhomogenous Poisson processes. Other simulation schemes exist, such as a clustered Poisson process scheme \citep{dassios2013exact}.

We use three types of evaluation metrics. (i) We assess the visual goodness-of-fit of estimating each triggering functions. (ii) We compare the numeric performances of the algorithms by measuring the log-likelihood of their estimates. When multiple trials are averaged over synthetic data, we also use a  metric named ``average $L_1$ error", which is defined as the average of $\sum_{i=1}^{p}\sum_{j=1}^{p}\|f_{i,j}-\hat{f}_{i,j}\|_{L_1[0,z]}$ over multiple trials. (iii) We compare the scalability of NPOLE-MHP over both the dimension $p$ and time horizon $T$.

\subsection{Synthetic Data for Testing NPOLE-MHP}

Consider a 5-dimensional MHP with $\mu_i=0.05$ for all dimensions. We set the triggering functions as
\begin{small}
	\begin{align*}
	\bF=
	\begin{bmatrix}
	e^{-2.5t} & 0 & 0 & e^{-10(t-1)^2} & 0\\
	2^{-5t} & (1+\cos(\pi t))e^{-t}/2 & e^{-5t} & 0 & 0\\
	0 & 2e^{-3t} & 0 & 0 & 0\\
	0 & 0 & 0 & 0.6e^{-3t^2}+0.4e^{-3(t-1)^2}& e^{-4t} \\
	0 & 0 & te^{-5(t-1)^2} & 0 & e^{-3t}
	\end{bmatrix}.
	\end{align*}
\end{small}The design of $\bF$ allows us to test NPOLE-MHP's ability of detecting (i) exponential triggering functions with various decaying rate; (ii) zero functions; (iii) functions with delayed peaks and tail behaviors different from an exponential function.

{\bfseries\noindent Goodness-of-fit.} We run NPOLE-MHP over a set of data with $T=10^5$ and around $4\times 10^4$ events for each dimension. The parameters are chosen by grid search over a small portion of data, and the parameters of the benchmark algorithms are fine-tuned. In particular, we set the discretization level $\delta=0.05$, the window size $z=3$, the step size $\eta_k=(k\delta/20+100)^{-1}$, and the regularization coefficient $\zeta_{i,j}\equiv\zeta=10^{-8}$. The performances of NPOLE-MHP and benchmarks are shown in Figure \ref{fig::exp2}. Complete set of results can be found in Appendix \ref{app::expdet}. We see that NPOLE-MHP captures the shape of the function much better than the DMD and OGD algorithms with mismatched forms of the triggering functions. It is especially visible for $f_{1,4}(t)$ and $f_{2,2}(t)$. In fact, our algorithm scores a similar performance to the batch learning MLE estimator, which is optimal for any given set of data. 

{\bfseries\noindent Run time comparison.} The simulation of the DMD and OGD algorithms took 2 minutes combined on a Macintosh with two $6$-core Intel Xeon processor at 2.4 GHz, while NPOLE-MHP took 3 minutes. The batch learning algorithms MLE-SGLP and MLE in \cite{xu2016learning} each took about 1.5 hours. Therefore, our algorithm achieves the performance similar to batch learning algorithms with a run time close to that of parametric online learning algorithms.

{\bfseries\noindent Effects of the hyperparameters: $\delta$, $\zeta_{i,j}$, and $\eta_k$.} We investigate the sensitivity of NPOLE-MHP with respect to the hyperparameters, measuring the ``averaged $L_1$ error" defined at the beginning of this section. We independently generate 100 sets of data with the same parameters, and a smaller $T=10^4$ for faster data generation. The result is shown in Table \ref{tab::hyperparameters}. For NPOLE-MHP, we fix $\eta_k=1/(k/2000+10)$. MLE and MLE-SGLP score around 1.949 with 5/5 inner/outer rounds of iterations. NPOLE-MHP's performance is robust when the regularization coefficient and  discretization level are sufficiently small. It surpasses MLE and MLE-SGLP on large datasets, in which case the iterations of MLE and MLE-SGLP are limited due to computational considerations. As $\zeta$ increases, the error decreases first before rising drastically, a phenomenon caused by the mismatch between the loss functions. 
For the step size, the error varies under different choice of $\eta_k$, which can be selected via grid-search on a small portion of the data like most other online algorithms.

\begin{figure}[!t]
\centering
  \begin{tabular}{ccc}
		\includegraphics[width=0.3\textwidth]{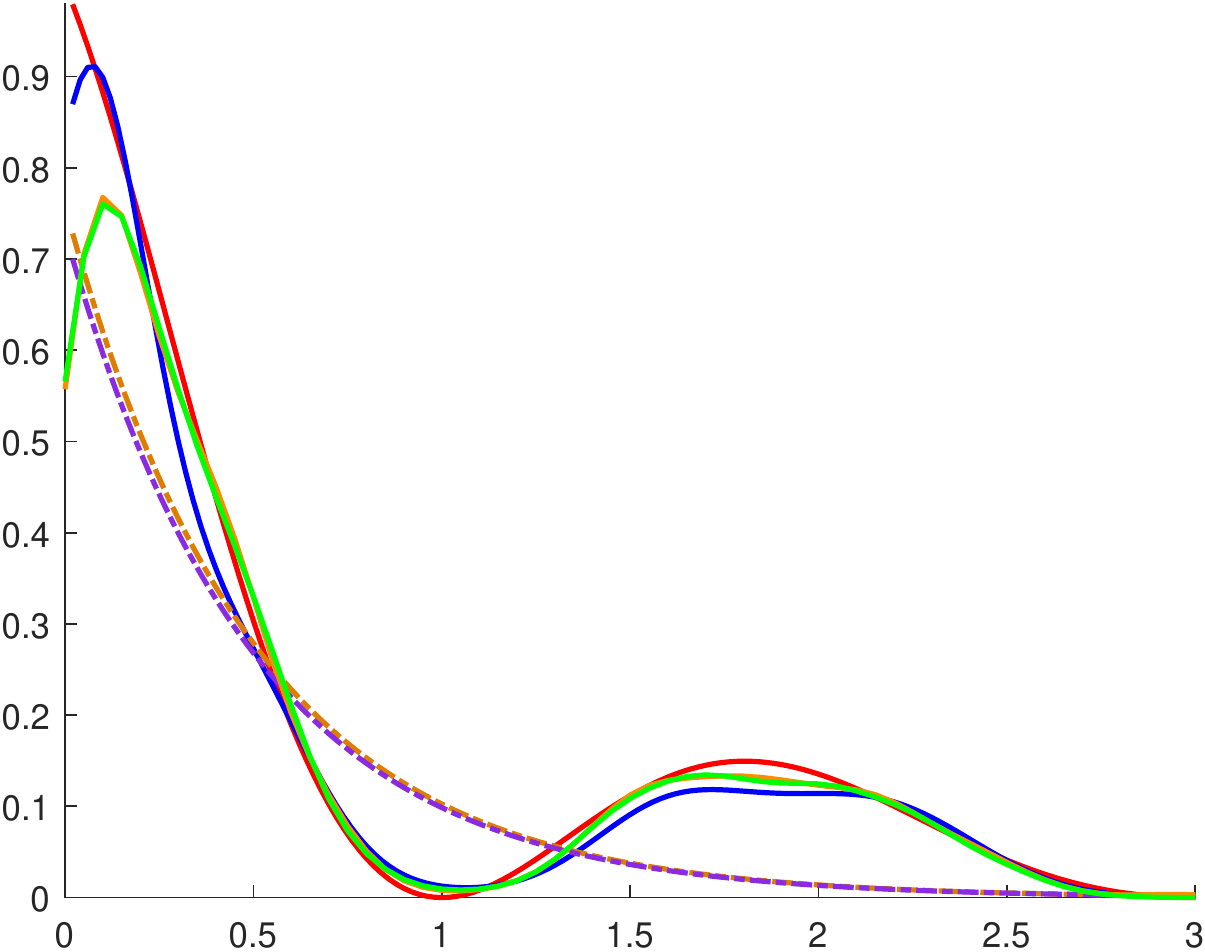}
	 & 	\includegraphics[width=0.3\textwidth]{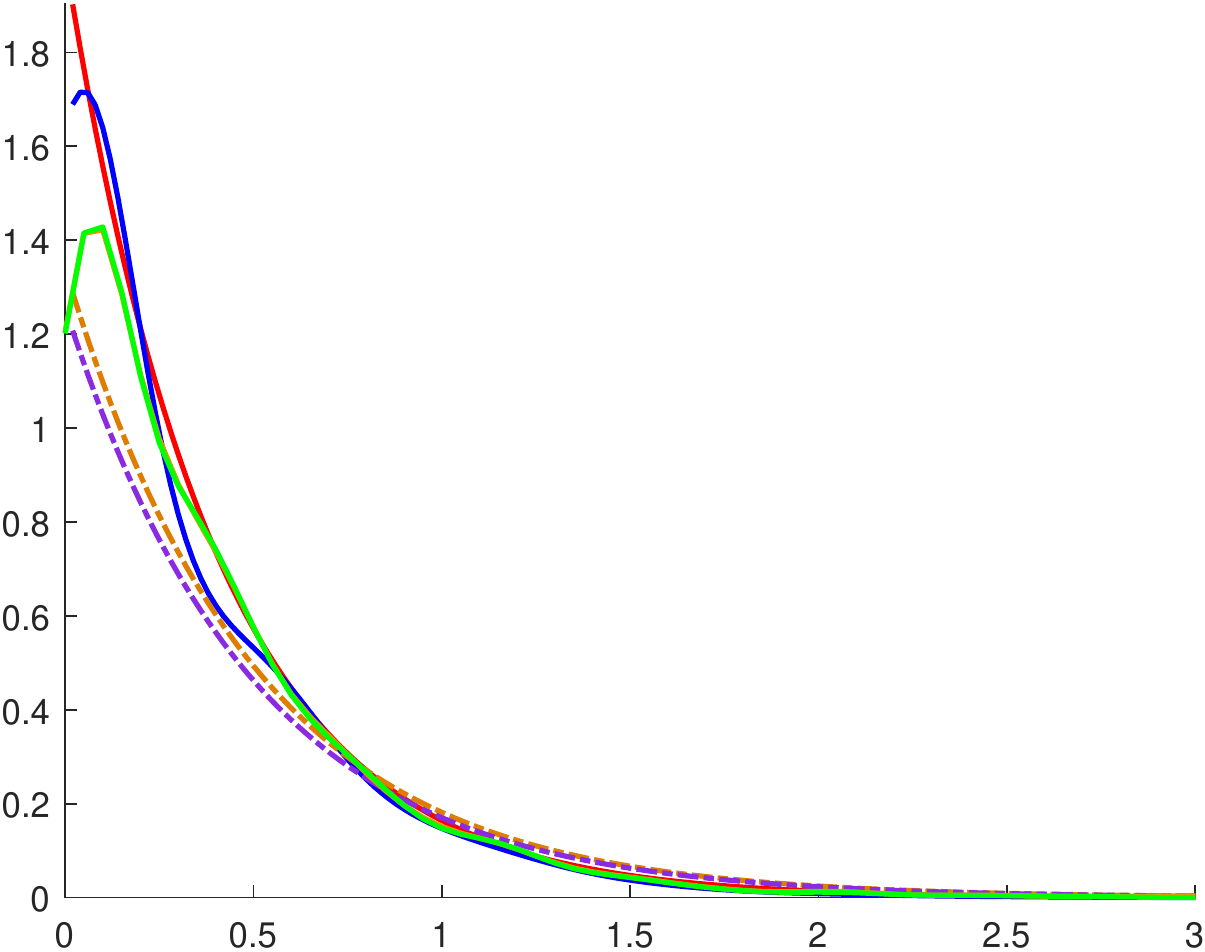}
	 &  \includegraphics[width=0.3\textwidth]{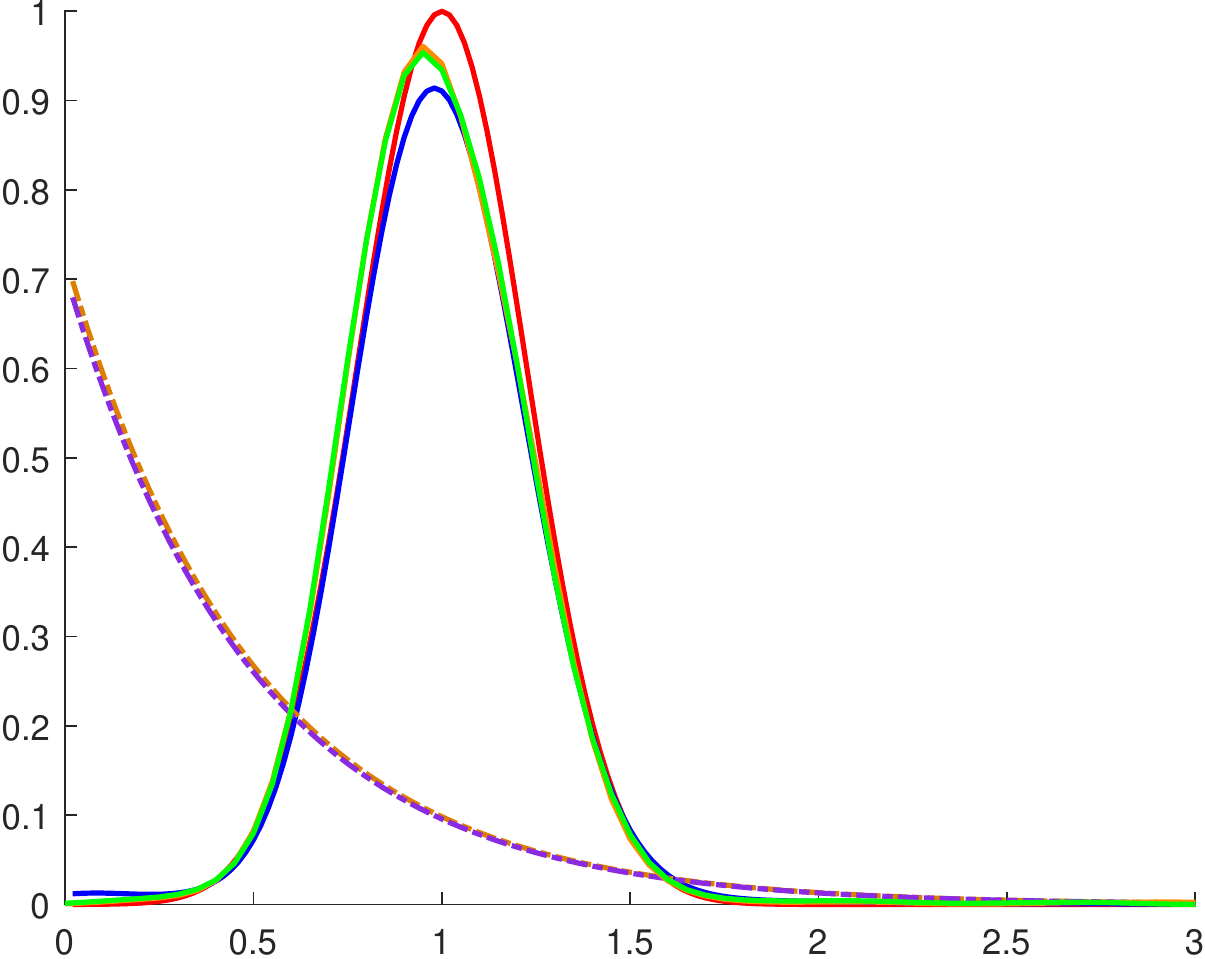}\\
	  (a) $f_{2,2}(t)$ & (b) $f_{3,2}(t)$ & (c) $f_{1,4}(t)$\\
	 \end{tabular}
	\begin{minipage}{0.5\textwidth}
		\centering
		\includegraphics[width=\textwidth]{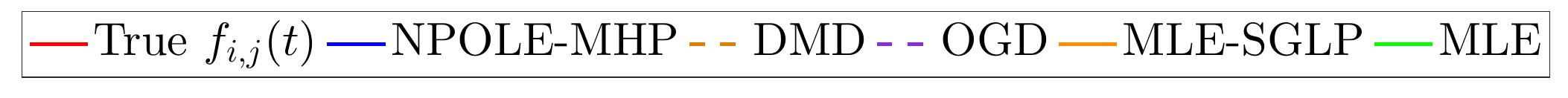}
	\end{minipage}
	\caption{Performances of different algorithms for estimating $F$.}
	\label{fig::exp2}
\end{figure}

\begin{table}[!t]
	\centering
	\begin{minipage}{0.45\textwidth}
		\centering
		\begin{tabular}[!t]{|c|c|c|c|c|c|c|}
			\hline
			\multicolumn{2}{|c|}{}  & \multicolumn{5}{c|}{Regularization $\log_{10}\zeta$} \\\cline{3-7}
			\multicolumn{2}{|c|}{} & $-8$ & $-6$ & $-4$ & $ -2 $ & $0$ \\\hline
			\multirow{5}{*}{\parbox{0.2cm}{\centering{$\delta$}}} & $0.01$  & $1.83$ & $1.83$ & $1.84$ & $4.15$ & $4.64$ \\\cline{2-7}
			& $0.05$ & $1.86$ & $1.86$ & $1.86$ & $3.10$  & $4.64$ \\\cline{2-7}
			& $0.1$ & $1.92$ & $1.92$ & $1.88$ & $2.73$ & $4.64$ \\\cline{2-7}
			& $0.5$ & $4.80$ & $4.80$ & $4.64$ & $2.19$ & $4.62$\\\cline{2-7}
			& $1$ & $5.73$ & $5.73$ & $5.58$ & $2.38$ & $4.59$ \\\hline
		\end{tabular}
		\vspace{0.1in}
		\caption{Effect of hyperparameters $\zeta$ and $\delta$, measured by the ``average $L_1$ error". }
		\label{tab::hyperparameters}
		
	\end{minipage}
	\hfill
	\begin{minipage}{0.45\textwidth}
		\centering
		\begin{tabular}{|c|c|c|c|c|}
			\hline
			\multicolumn{2}{|c|}{} & \multicolumn{3}{c|}{Horizon $T$ (days)} \\\cline{3-5}
			\multicolumn{2}{|c|}{} & $1.8$ & $3.6$ & $5.4$ \\\hline
			\multirow{5}{1.7cm}{\centering Dimension $p$} & $20$ & $3.9$ & $9.1$ & $15.3$ \\\cline{2-5} & $40$ & $4.6$ & $10.4$ & $17.0$ \\\cline{2-5} & $60$ & $4.6$ & $10.2$ & $16.7$ \\\cline{2-5} & $80$ & $4.5$ & $10.0$ & $16.4$ \\\cline{2-5} & $ 100$ & $4.5$ & $9.7$ & $15.9$ \\\hline
		\end{tabular}
		\vspace{0.1in}
		\caption{Average CPU-time for estimating one triggering function (seconds).}
		\label{tab::scalability}
		
	\end{minipage}
\end{table}


Lastly, we demonstrate the effect of discretization in Figure \ref{fig::synloss}. For $\delta\leq 0.05$, the stepwise loss evaluated with the true $f_{i,j}(t)$s varies very little. When we decrease $\delta$ from 1 to 0.05, however, the performance of NPOLE-MHP improves drastically.

\begin{figure}
	\centering
	\begin{minipage}{0.45\textwidth}
		\includegraphics[width=\textwidth]{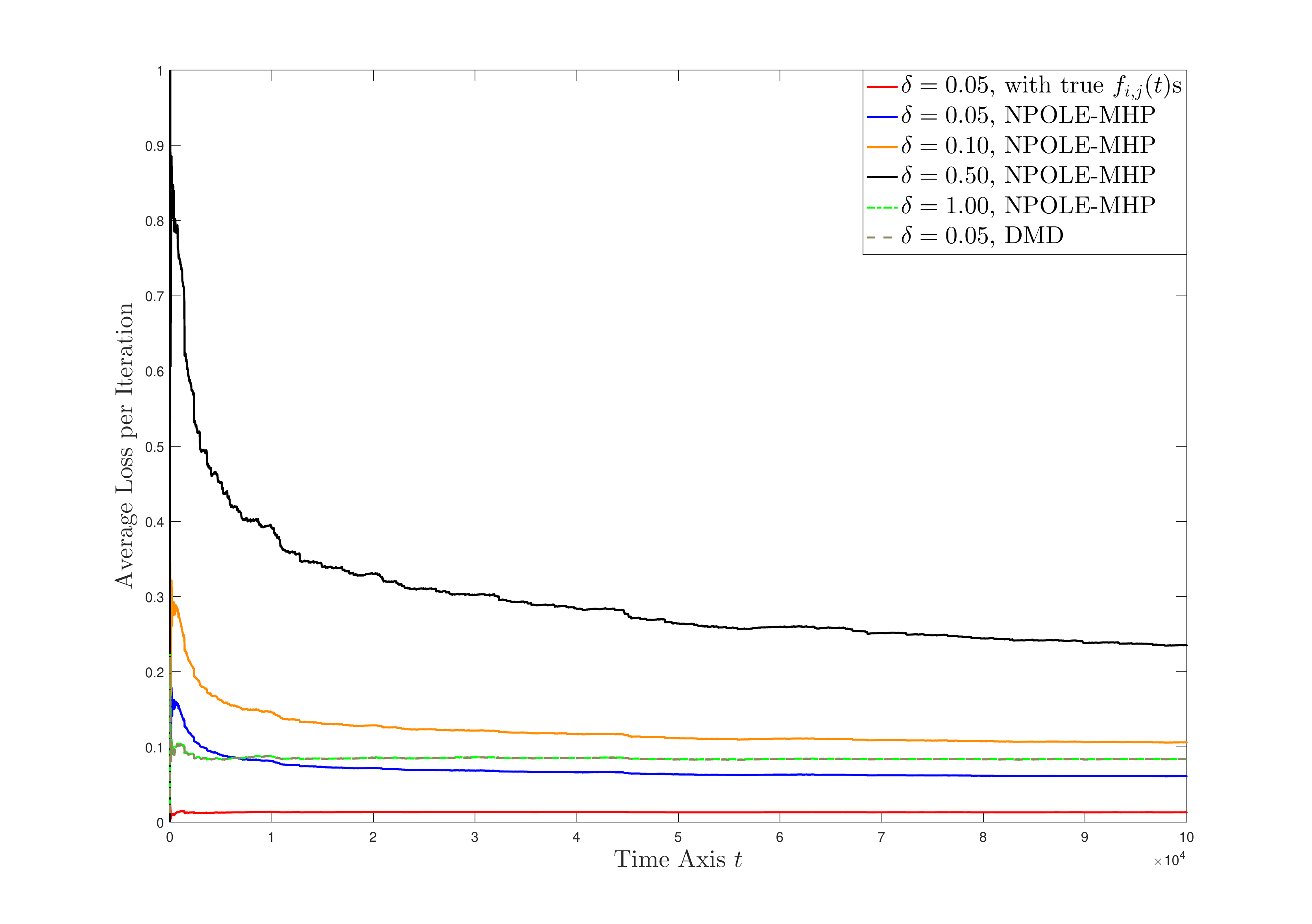}
		\caption{Effect of discretization on NPOLE-MHP.}
		\label{fig::synloss}
	\end{minipage}
	~
	\begin{minipage}{0.45\textwidth}
		\centering
		\includegraphics[width=\textwidth]{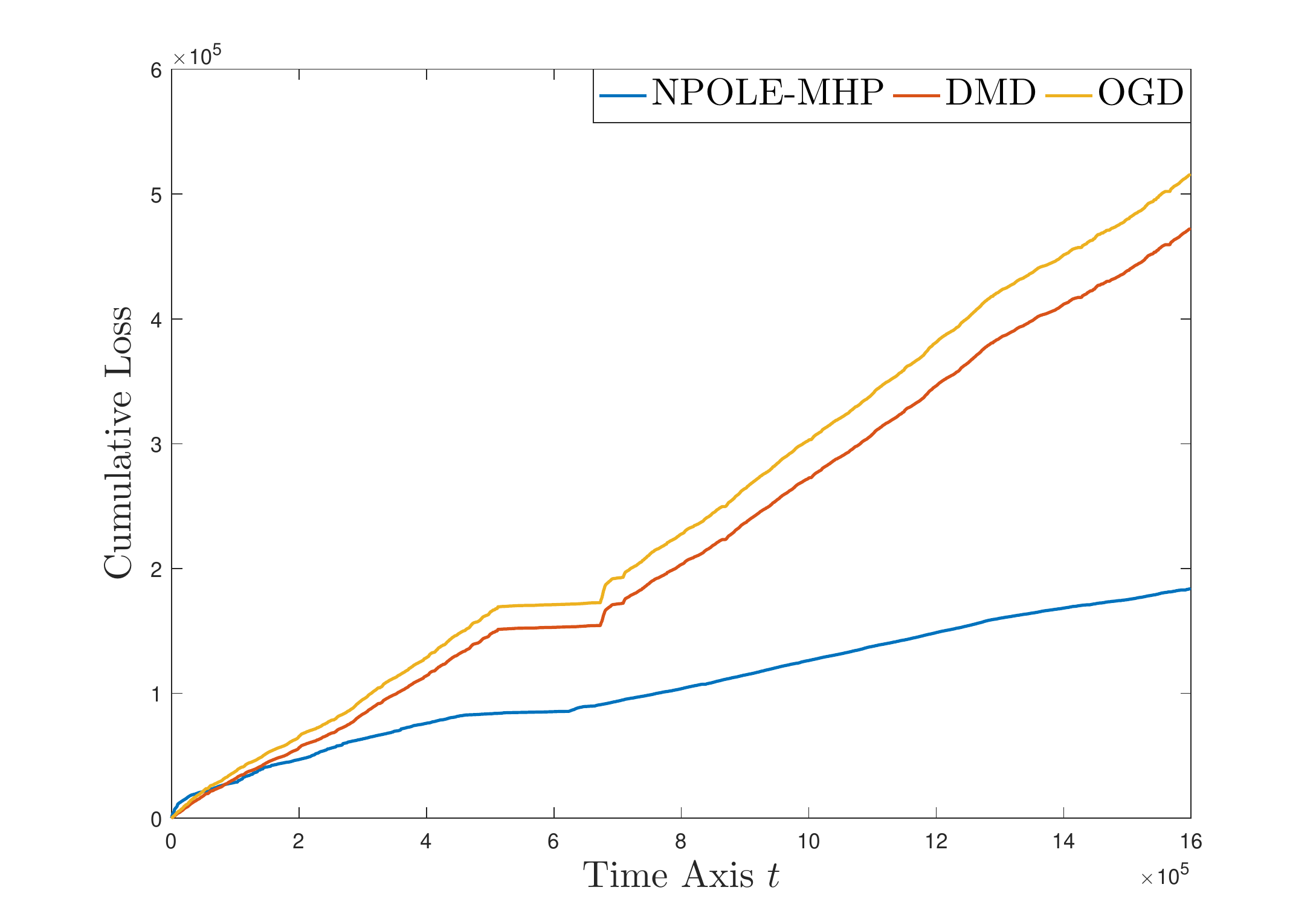}
		\caption{Cumulated loss of NPOLE-MHP, DMD and OGD on memetracker data.}
		\label{fig::cumloss}
	\end{minipage}	
\end{figure}

\subsection{Inferring Impact Between News Agencies with MHP}

We also tested the performance of our algorithm on the memetracker data \citep{leskovec2009meme}. The data collects from the web a set of popular phrases, including their content, the time at which they are published, and the url address of the articles that contributed to these occurrences. We study the relationship between different news agencies, and therefore model the data with a $p$-dimensional MHP where each dimension corresponds to the articles published by a news website. Note that a similar experiment was conducted in \citet{hall2014tracking}. Unlike \citet{hall2014tracking}, where all the data is used, we focus on only 20 websites that publish the most number of news articles using 18 days of data. The cumulative objective functions are plotted in Figure \ref{fig::cumloss}, where we set the window size to be 3 hours, discretization level $\delta=0.2$ second, and step size $\eta_k=1/(k\zeta+800)$ with $\zeta=10^{-10}$ for NPOLE-MHP. For DMD and OGD, we  set the step size $\eta_k=5/\sqrt{T/\delta}$. The result shows that NPOLE-MHP accumulates a smaller loss per step compared to OGD and DMD. 

\subsection{Inferring Crime Pattern in Chicago with MMHP}

\begin{figure}[t]
\centering
 	\begin{tabular}{cc}
		\includegraphics[width=0.45\textwidth]{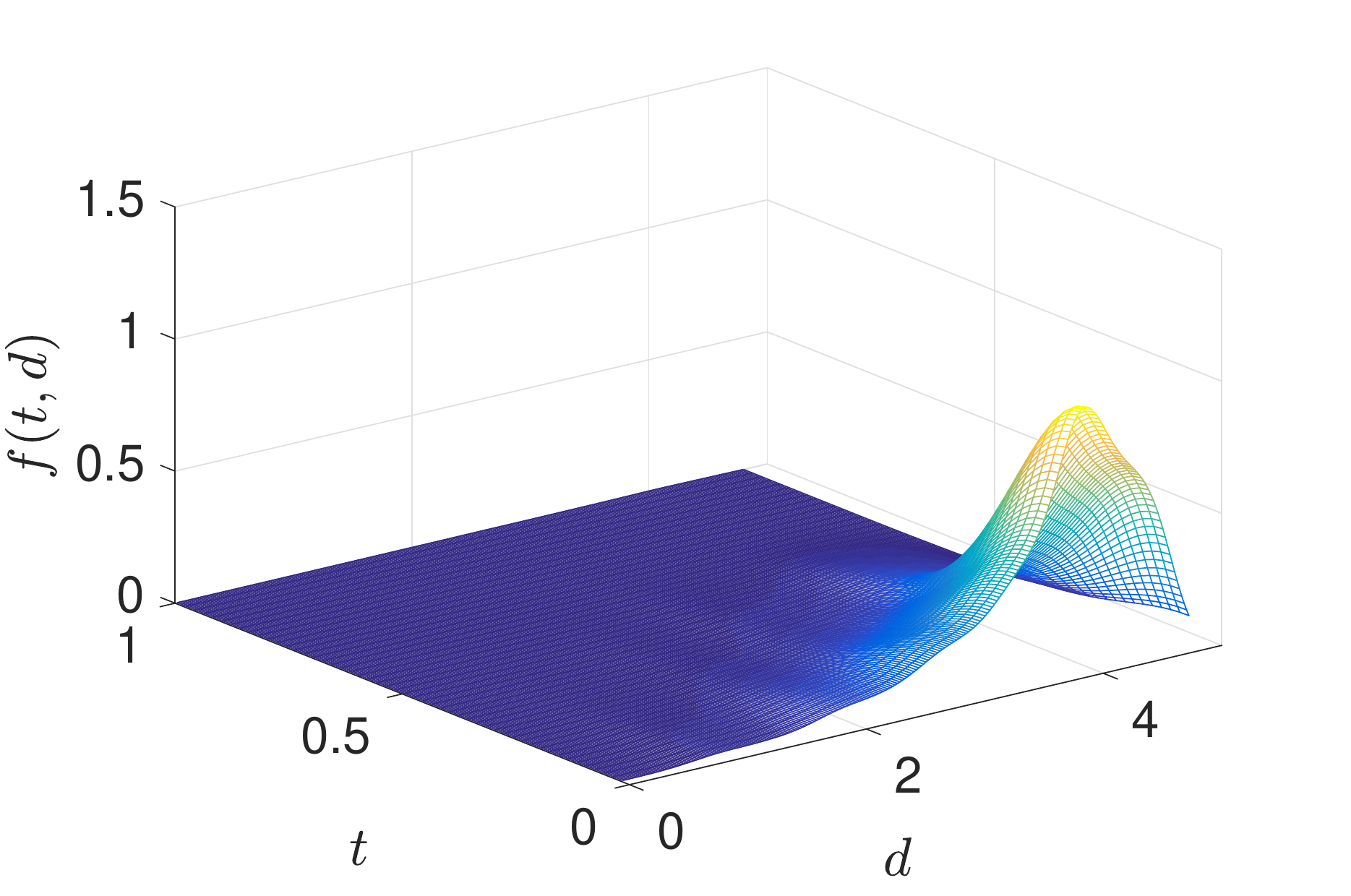}
		&\includegraphics[width=0.45\textwidth]{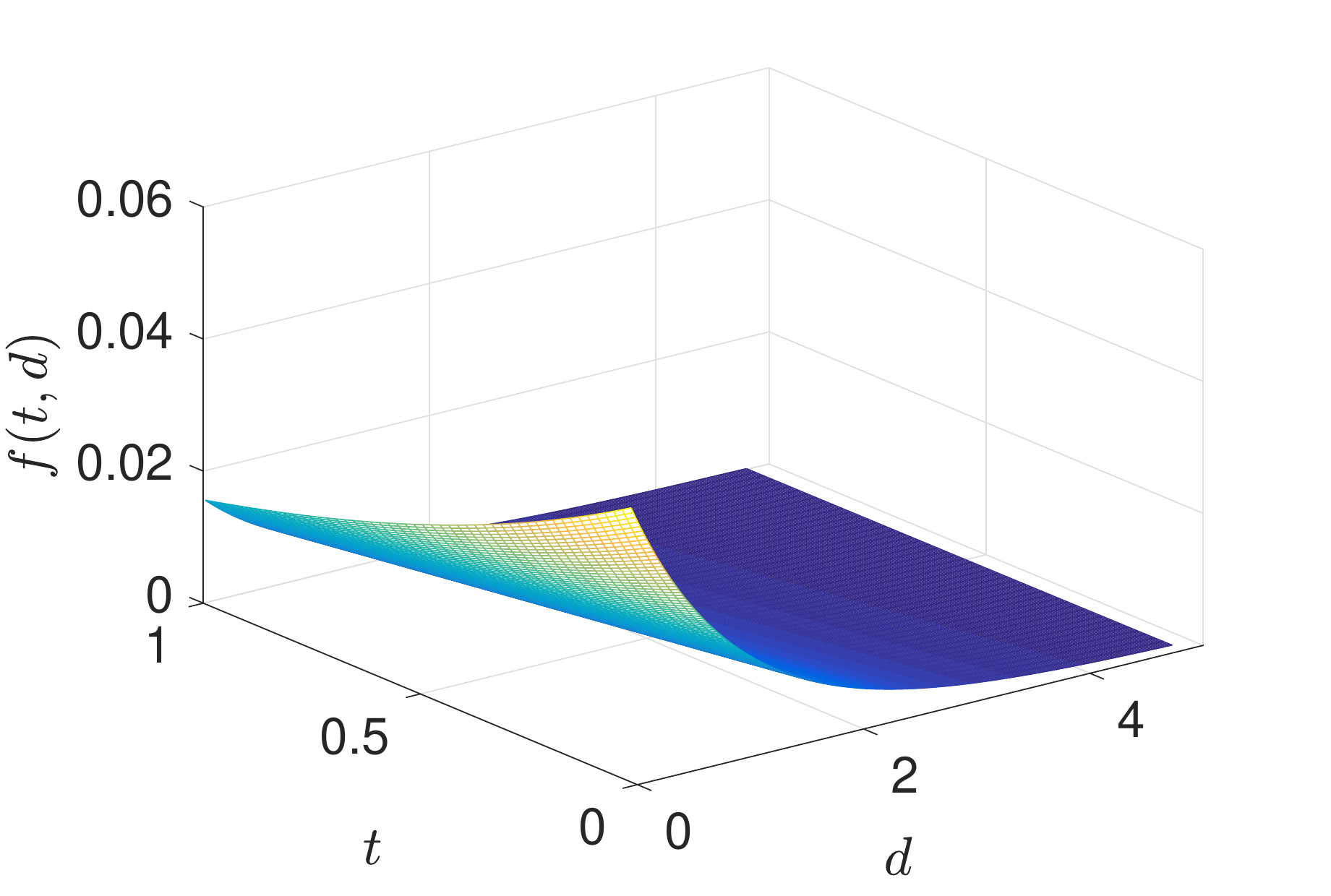}\\
		(a) NPOLE-MMHP & (b)OGD\\
	\end{tabular}
	\begin{tabular}{cc}
		\includegraphics[width=0.45\textwidth]{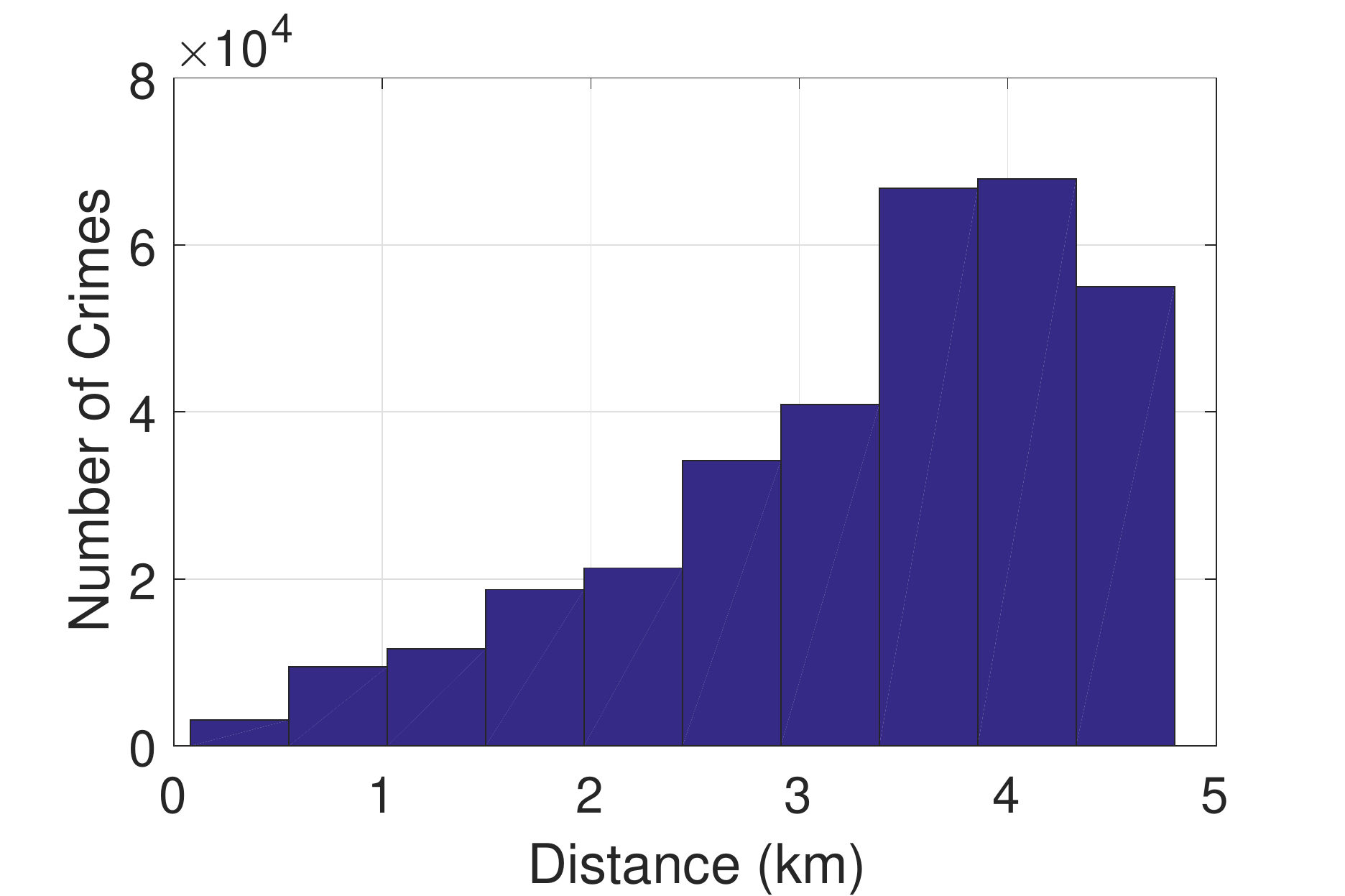}
		&\includegraphics[width=0.45\textwidth]{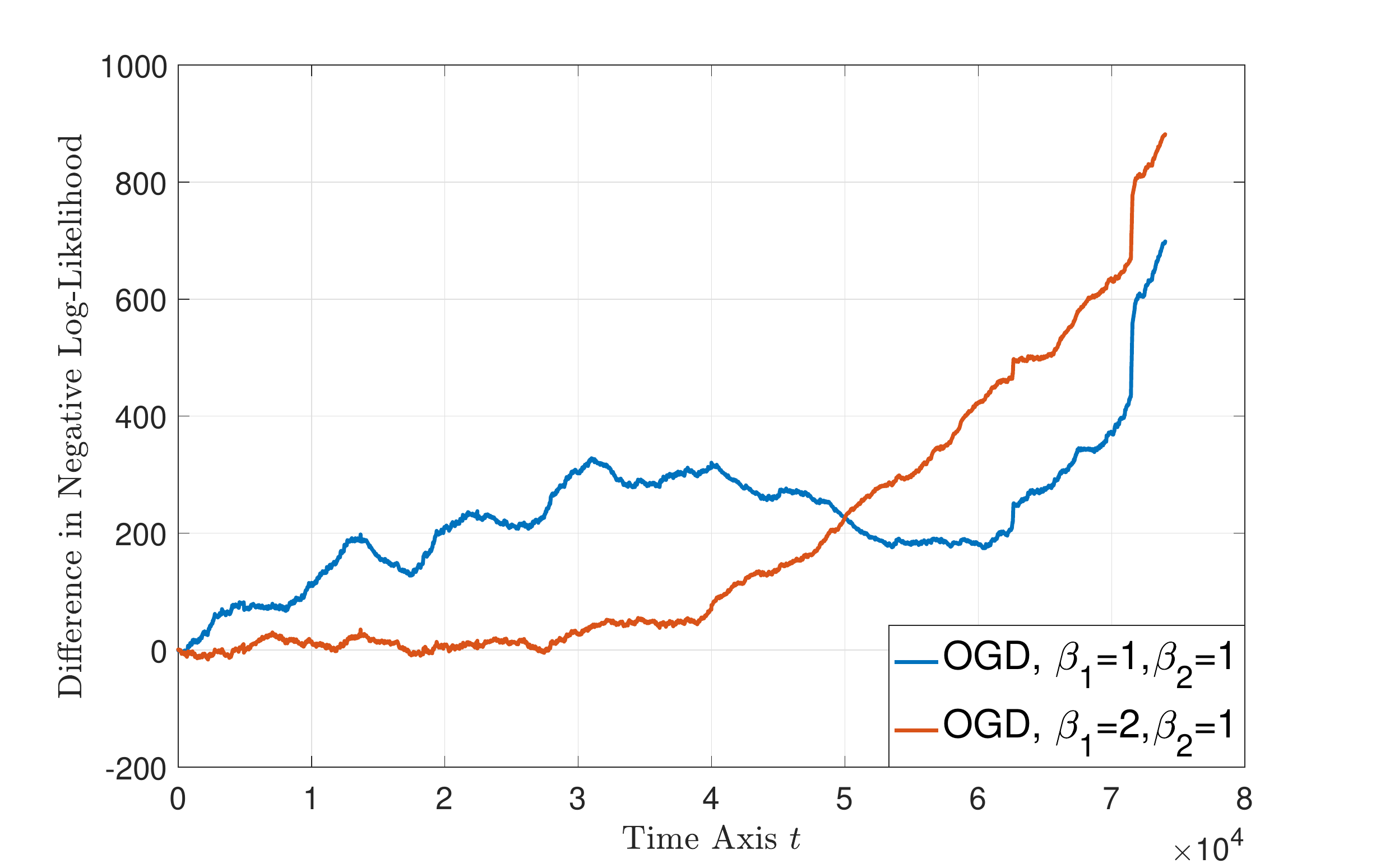}\\
		(c) Mark distribution & (d) Excess regret\\
	\end{tabular}
	
	\caption{(a) and (b) show the estimated triggering functions by NPOLE-MMHP and OGD, respectively. For OGD, the parametric form of the triggering function is $f(t,d)=\alpha\exp(-t-d)$. (c) shows the empirical distribution of the mark, which provides justification that NPOLE-MMHP generates a more reasonable estimation of the triggering function. (d) shows the excess regret, $R_{\rm{OGD}}-R_{\rm{NPOLE-MMHP}}$, under different parameters.}
	\label{fig::exp3}
\end{figure}

We next demonstrate the performance of NPOLE-MMHP on the Chicago crime data. The dataset collects the time and location information of various types of crimes within a span of 16 years from 2001 to 2017. In this study, we focus on the most common type of crime, thievery, and aim at inferring the impact of each incident on a pre-selected area. In this case, the dimension  $p=1$.

To model the data using an MMHP, we consider the thievery incidents that happen within 3 miles radius of the United Center, located at $x_c=(41.880706,-87.674230)$. Each thievery crime is considered as an event, and the location where it happens serves as its mark. For the sake of reducing computation burden, we simplify the model by assuming that a crime happening at time $\tau$ and location $x$ would affect the intensity function $\lambda(t)$ by an amount $f(t-\tau,\|x-x_c\|_2)$. That is, the location of the crime only matters through its distance towards the selected center of the region. Similar to MHP, we consider only those crimes that happen within a window of $t-\tau\leq z$, where we set $z$ to be 1 hour.

We compare our method with the parametric benchmark: $f(t,\|x-x_c\|_2)=\alpha\exp(-\beta_1t-\beta_2\|x-x_c\|_2)$, where $\alpha$ is estimated under different choices of $(\beta_1,\beta_2)$ pairs using online gradient descent (OGD). We tune the step size on the first 50\% of the updates, and use the remaining 50\% as test data.

The test results are shown in Figure \ref{fig::exp3}. In Figure \ref{fig::exp3}, we plot the triggering function $f(t,d)$ estimated by NPOLE-MMHP and OGD. As can be seen from Figure \ref{fig::exp3}(c), the majority of the crime happens at around 4km away from the selected area center. Therefore, the result generated by NPOLE-MMHP is a more natural estimation that reflects this piece of information. On the other hand, it can also be seen from Figure \ref{fig::exp3}(d) that the estimation of NPOLE-MMHP generates a slightly lower negative likelihood. Note that for various parameters pairs $(\beta_1,\beta_2)$ the performance of OGD varies slightly, indicating that the OGD model is almost merely fitting a Poisson process to the data.

\section{Conclusion and Discussions}\label{sec:conclusion}

We developed a nonparametric method for learning the triggering kernels of a multivariate Hawkes process (MHP) given time series observations.
To formulate the instantaneous objective function, we adopted the method of discretizing the time axis into small intervals of lengths at most $\delta$, and we derived the corresponding upper bound for approximation error. From this point, we proposed an online learning algorithm that is based on the framework of online kernel learning and exploits the interarrival time statistics under the MHP setup. Theoretically, we derived the regret bound for our algorithm, which is $\calO(\log T)$ when the time horizon $T$ is known a priori, and we showed that per iteration cost of the proposed algorithm is $\mathcal{O}(p^2)$. Numerically, we compared our algorithm's performance with parametric online learning algorithms and nonparametric batch learning algorithms. Results o both synthetic and real data showed that we are able to achieve similar performance to that of nonparametric batch learning algorithms with a run time comparable to parametric online learning algorithms.

\bibliographystyle{ims}
\bibliography{hawkes}

\newpage
\appendix
\section{Generalization Error Bound} \label{pf::ge}

\begin{proof}[Proof of Theorem \ref{thm::concentration}]

{\bfseries\noindent General settings for the proof.}
Suppose that there are $N(t)$ arrivals, and for each $k\in\{1,\ldots,N(t)\}$, denote the interval within which the $k$-th arrival falls using $\lfloor \tau_k\rfloor_\delta$. We wish to prove the statement by invoking the concentration inequality of bounded difference. Therefore we consider two sets of arrival times: $\calS=\{\tau_n\}_{n=1}^{N(t)}$ and $\calS'=\{\tau_n'\}_{n=1}^{N(t)}$ that differ by only one element: $\tau_{n_0}$ and $\tau_{n_0}'$, and denote the update times of NPOLE-MHP given $\calS$ and $\calS'$ as $\{t_k\}_{k=1}^{M(t)}$ and $\{t_k'\}_{k=1}^{M(t)}$, respectively.

By the design of the update rule, there exist $k,k'\in\{1,\ldots,M(t)\}$, such that $t_k=\tau_{n_0}$ and $t'_{k'}=\tau_{n_0}'$. Notice that $k$ and $k'$ need not be the same because the difference in the arrival times could span multiple intervals of length $\delta$, resulting in multiple updates in between. However, when the underlying MHP has stationary increments, $\tau_{n_0+1}-\tau_{n_0-1}=\calO(1)$, implying that $|k-k'|\delta=\calO(1)$. Therefore, we can assume $k=k'$ without loss of generality.

Suppose that the solution to the optimization problem \eqref{eq::problem} is $\bbf_i$ when the arrival times is the set $\calS$, and $\bbf_i'$ when the set of arrival times is the set $\calS'$. To avoid confusion, we write $L_{i,t}^{(\delta)}(\calS,\bbf_i)$ when referring to the objective function evaluated at $\bbf_i$, with the update times coming from the arrival set $\calS$.

{\bfseries\noindent Outline of the proof.} The proof can be obtained following the outline below.
\begin{itemize}
	\item First, we show that the discretized objective function with the truncated $\lambda_i^{(z)}(t)$ is Lipschitz continuous with respect to $\bbf_i$, with a Lipschitz constant $C_L$. That is, for any $\calS$, we have
	\#\label{eq::ge::p1}
	\frac{1}{t}\left| L_{i,t}^{(\delta)}(\calS,\lambda^{(z)}_i({\bbf}_i)) - L_{i,t}^{(\delta)}(\calS,\lambda^{(z)}_i({\bbf}'_i))\right|\leq C_L \sum_{j=1}^{p}\|f_{i,j}-f_{i,j}'\|_{\infty},
	\#
	where 
	$
	C_L=(\delta^{-1}+\kappa_1)\kappa_z|\delta-\mu_{\min}^{-1}|.
	$
	\item Second, when the regularization coefficient for $f_{i,j}(t)$ is $\zeta_{i,j}$, and when $\sup_{x\in\reals}K(x,x)\leq 1$, 
	\#\label{eq::ge::p5}
	\|f_{i,j}-f_{i,j}'\|_{\infty}\leq\|f_{i,j}-f_{i,j}'\|_{\calH},
	\#
	which holds as a property of the RKHS.
	\item Next, for $\bbf_i$ and $\bbf_i'$ learned from $\calS$ and $\calS'$, respectively, we have
	\#\label{eq::ge::p4}
	\|f_{i,j}-f_{i,j}'\|_{\calH}^2\leq\frac{2\delta C_L}{\zeta_{i,j}t}\cdot\|f_{i,j}-f_{i,j}'\|_{\infty}.
	\#
	\item Lastly, we prove the generalization error bound by combining the first three steps and invoking the McDiarmid concentration inequality.
\end{itemize}

{\noindent\bf Step 1: Proving the Lipschitz continuity, \eqref{eq::ge::p1}.} By the definition of the objective function, we have
\#\label{eq::ge::p2}
\left| L_{i,t}^{(\delta)}(\calS,\lambda^{(z)}_i({\bbf}_i))-L_{i,t}^{(\delta)}(\calS,\lambda^{(z)}_i({\bbf}'_i)) \right|&\leq\sum_{k=1}^{M(t)}\bigg|(t_k-t_{k-1})\left[\lambda_i^{(z)}(t_k,\bbf_i)-\lambda_i^{(z)}(t_k,\bbf_i')\right]-\notag\\&-x_{i,k}\left[\log\lambda_i^{(z)}(t_k,\bbf_i)-\log\lambda_i^{(z)}(t_k,\bbf_i')\right]\bigg|\notag\\&\leq \sum_{k=1}^{M(t)}|\delta-\mu_{\min}^{-1}|\cdot|\lambda_i^{(z)}(t_k,\bbf_i)-\lambda_i^{(z)}(t_k,\bbf_i')|,
\#
where in the last step, we have assumed, without generality, that $\delta$ is much smaller than $|\delta-\mu_{\min}^{-1}|$. In addition, we have also used the fact that $\log\lambda$ is $\mu_{\min}^{-1}$-Lipschitz continuous when $\lambda\geq\mu_{\min}>0$.

It is not hard to see that proving \eqref{eq::ge::p1} now reduces to proving the Lipschitz continuity of $\lambda_i^{(z)}(t_k,\bbf_i)$ for every $t_k$. Indeed,
\#\label{eq::ge::p3}
|\lambda_i^{(z)}(t_k,\bbf_i)-\lambda_i^{(z)}(t_k,\bbf_i')|&=\left|\sum_{j=1}^{p}\sum_{\tau_{j,n}\in[t_k-z,t_k)}\left[f_{i,j}(t_k-\tau_{j,n})-f_{i,j}'(t_k-\tau_{j,n})\right]\right|\notag\\&\leq \kappa_z\sum_{j=1}^{p}\sup_{\tau_{j,n}\in[t_k-z,t_k)}|f_{i,j}(t_k-\tau_{j,n})-f_{i,j}'(t_k-\tau_{j,n})|\leq \kappa_z\sum_{k=1}^{p}\|f_{i,j}-f_{i,j}'\|_{\infty}.
\#
Combining \eqref{eq::ge::p2} and \eqref{eq::ge::p3}, and noticing that $M(t)\leq t/\delta +\kappa_1t$, we reach \eqref{eq::ge::p1}.

{\bfseries\noindent Step 2: Proving \eqref{eq::ge::p5}.} Equation \eqref{eq::ge::p5} holds as a property for the RKHS. Since $f_{i,j}-f_{i,j}'\in\calH$, we have
\$
|f_{i,j}(x)-f_{i,j}'(x)|&=|\la K(x,\cdot),f_{i,j}-f_{i,j}'\ra_{\calH}|\leq \|K(x,\cdot)\|_{\calH}\|f_{i,j}-f_{i,j}\|_{\calH}=K(x,x)\|f_{i,j}-f_{i,j}\|_{\calH}\leq \|f_{i,j}-f_{i,j}\|_{\calH}
\$
for any $x$. Therefore \eqref{eq::ge::p5} holds by taking supremum over $x$ on both sides in the above inequality.

{\bfseries\noindent Step 3: Proving \eqref{eq::ge::p4}.} We now turn to prove \eqref{eq::ge::p4}, which upper bounds the RKHS norm of the difference between $f_{i,j}$ and $f_{i,j}'$. The upper bound on the right-hand side decays at the speed of $\calO(M(t)^{-1})$. This implies that the more updates used for learning $\bbf_i$ and $\bbf_i'$, the less likely that they will be different when we slightly perturb the arrival epoch of a single event.

To formally show this, fix $j=j_0$. In addition, for notational simplicity, we denote $\{1,\ldots,p\}\backslash \{j_0\}$ by $\{-j_0\}$, and define the instantaneous objective function without regularization as
\$
\iota_{i,k}(\lambda_i^{(z)}(t_k,\bbf_i))=(t_k-t_{k-1})\lambda_i^{(z)}(t_k,\bbf_i)-x_{i,k}\log\lambda_i^{(z)}(t_k,\bbf_i).
\$

Let $\phi:\calH\to\reals$ be a strictly convex function. Then the Bregman divergence with respect to $\phi$  is defined as
\$
\Delta_{\phi}(f_{i,j_0},f_{i,j_0}'):=\phi(f_{i,j_0})-\phi(f_{i,j_0}')-\la f_{i,j_0}-f_{i,j_0}',\nabla \phi(f_{i,j_0}')\ra.
\$
Set $\phi(f)=\|f\|_\calH^2$, then $\nabla\|f_{i,j_0}'\|_{\calH}^2=2f_{i,j_0}'$, and we have
$
\Delta_{\phi}(f_{i,j_0},f_{i,j_0}')=\|f_{i,j_0}-f_{i,j_0}'\|_\calH^2,
$
which further implies
\$
2\|f_{i,j_0}-f_{i,j_0}'\|_\calH^2=\Delta_{\phi}(f_{i,j_0},f_{i,j_0}')+\Delta_{\phi}(f_{i,j_0}',f_{i,j_0}).
\$
Adopt the following functional notation, where $\bg=[g_1,\ldots,g_p]\in\calH^p$:
\$
T_{\calS}(\bg):=I_{\calS}(\bg)+\sum_{j=1}^{p}\frac{\zeta_{i,j}}{2}\phi(g_{j})=\mathbb{L}_i(\calS,\lambda^{(z)}_i(\bg))/M(t),
\$
and
\$
I_{\calS}(\bg):=\frac{1}{M(t)}\sum_{k=1}^{M(t)}\left[(t_k-t_{k-1})\lambda_i(t_k,\bg)-x_{i,k}\log\lambda_i(t_k,\bg)\right].
\$

Consider two choices of $\bg$: $\bbf_i,\bbf_i'\in\calH^{p}$. Recall from the previous steps that those are the solutions to the optimization problem \eqref{eq::problem} when the observed sample sets are $\calS$ and $\calS'$, respectively. It is not hard to see that the functional $T_{\calS}$ is strictly convex with respect to $\phi$, and therefore the Bregman divergence can be defined for $T_{\calS}$ as well. Hence, by the linearity of the Bregman divergence with respect to $\phi$, we have
\$
\frac{\zeta_{i,j}}{2}\left(\Delta_{\phi}(f_{i,j_0},f_{i,j_0}')+\Delta_{\phi}(f_{i,j_0}',f_{i,j_0})\right)&\leq\Delta_{T_{\calS}}(\bbf_i',\bbf_i)+\Delta_{T_{\calS'}}(\bbf_i,\bbf_i')\\&=T_{\calS}(\bbf_i')-T_{\calS}(\bbf_i)+T_{\calS'}(\bbf_i)-T_{\calS'}(\bbf_i')\\&=I_{\calS}(\bbf_i')-I_{\calS}(\bbf_i)+I_{\calS'}(\bbf_i)-I_{\calS'}(\bbf_i'),
\$
where the second step holds true since $\bbf_i$ and $\bbf_i'$ are the minimizers and the gradient of the first order terms in the definition of $\Delta_{T_{\calS}}$ and $\Delta_{T_{\calS'}}$ become zero. Notice that $\calS$ and $\calS'$ only differ in one element, which occurs at the $k$-th update of NPOLE-MHP. As a consequence, this element in question only affects at most two instantaneous objective functions: $\iota_{i,k}$ and $\iota_{i,k+1}$. Therefore,
\$
I_{\calS}(\bbf_i')-I_{\calS'}(\bbf_i')=\frac{1}{M(t)}\sum_{n=k}^{k+1}\left[\iota_{i,n}(\lambda_i^{(z)}(t_n\bbf_i'))-\iota_{i,n}(\lambda_i^{(z)}(t_n',\bbf_i'))\right]
\$
and
\$
I_{\calS}(\bbf_i)-I_{\calS'}(\bbf_i)=\frac{1}{M(t)}\sum_{n=k}^{k+1}\left[\iota_{i,n}(\lambda_i^{(z)}(t_n,\bbf_i))-\iota_{i,n}(\lambda_i^{(z)}(t_n',\bbf_i))\right].
\$
From \eqref{eq::ge::p2}, we know that the Lipschitz continuity holds for any instantaneous objective function without regularization. Therefore,
\$
2\cdot \frac{\zeta_{i,j_0}}{2}\|f_{i,j_0}-f_{i,j_0}'\|_{\calH}^2&=\frac{\zeta_{i,j_0}}{2}\left(\Delta_{\phi}(f_{i,j_0},f_{i,j_0}')+\Delta_{\phi}(f_{i,j_0}',f_{i,j_0})\right)\\&\leq\frac{2}{M(t)}\cdot C_L\|f_{i,j}-f_{i,j}'\|_{\infty},
\$
which gives us the desired equation \eqref{eq::ge::p4}, upon noticing that $M(t)= t/\delta +N(t)\geq t/\delta$.

{\bfseries\noindent Step 4: Invoking McDiarmid concentration inequality.} Combining \eqref{eq::ge::p5} and \eqref{eq::ge::p4}, we have
\$
\|f_{i,j}-f_{i,j}'\|_{\calH}^2\leq \frac{2C_L}{\zeta_{i,j}M(t)}\cdot\|f_{i,j}-f_{i,j}'\|_{\infty}\leq\frac{2C_L}{\zeta_{i,j}M(t)}\cdot\|f_{i,j}-f_{i,j}'\|_{\calH},
\$
which implies
\$
\|f_{i,j}-f_{i,j}'\|_{\calH}\leq \frac{2C_L}{\zeta_{i,j}M(t)}.
\$
Plugging the above result into \eqref{eq::ge::p1}, we have
\#\label{eq::boundeddiff}
\frac{1}{t}\left|L_{i,t}^{(\delta)}(\calS,\lambda_i^{(z)}(\bbf_i))-L_{i,t}^{(\delta)}(\calS,\lambda_i^{(z)}(\bbf_i'))\right|&\leq C_L\sum_{j=1}^{p}\|f_{i,j}-f_{i,j}'\|_{\infty}\leq C_L\sum_{j=1}^{p}\|f_{i,j}-f_{i,j}'\|_{\calH}\notag\\&\leq \frac{2C_L^2}{M(t)}\sum_{j=1}^{p}\zeta_{i,j}^{-1}\leq\frac{2C_L^2\delta}{t}\sum_{j=1}^{p}\zeta_{i,j}^{-1}.
\#

This gives us the argument that $L_{i,t}^{(\delta)}(\calS,\lambda_i^{(z)}(\bbf_i))/t$ has bounded difference when one slightly perturbs one instance in the training data of $\bbf_i$. It can also be shown that $L_{i,t}^{(\delta)}(\calS,\lambda_i^{(z)}(\bbf_i))/t$ is bounded, by noticing that
\$
\iota_{i,k}(\lambda_i^{(z)}(t_k,\bbf_i))&\leq\delta\lambda_i(t_k)-x_{i,k}\log\lambda_i(t_k)\leq\delta\lambda_i(t_k)+|\log\mu_{\min}|\\&\leq|\log\mu_{\min}|+\delta\mu_i+\sum_{j=1}^{p}\sum_{n=1}^{N_j(t_k)}f_{i,j}(t_k-\tau_{j,n})\\&\leq|\log\mu_{\min}|+\delta(\mu_i+p\kappa_1\varepsilon(0)).
\$
Furthermore, when one fixes $\{t_i\}_{i=1}^{k-1}$, and perturbs $t_k$, the upper bound given in \eqref{eq::boundeddiff} is independent of the choice of $\{t_i\}_{i=k+1}^{N(t)}$. Therefore, we can invoke the generalized McDiarmid inequality in the form of Corollary 6.10 of \citet{mcdiarmid1989method}, and obtain
\$
\Prob\left[\left|\frac{1}{t}L_{i,t}^{(\delta)}(\bbf_i)-\Expect\left[\frac{1}{t}L_{i,t}^{(\delta)}(\bbf_i)\bigg\vert M(t)\right]\right|\geq \epsilon\right]\leq 2\exp\left\{-\frac{2\epsilon^2 t}{4\kappa_1C_L^4\delta^2\left(\sum_{j=1}^{p}\zeta_{i,j}^{-1}\right)^2}\right\}.
\$
\end{proof}

{\bf \noindent  The covering number and the $\epsilon$-net argument.}
Suppose that the true $\bbf_i$ lies within the part of the Hilbert space where $\sup_{j}\|f_{i,j}\|_{\calH}\leq U$, and denote this part of the RKHS as $\calF$. Let $\calV$ be an $\epsilon$-net for $\calF$. Then, for any $f\in\calF$, there exists $v\in\calV$ such that $\|f-v\|_{\calH}\leq \epsilon$. Recall that the risk function is $C_L$-Lipschitz, we have
\$
|D_M(\bbf_i)-D_M(\bv_i)|\leq 2C_L\sum_{j=1}^{p}\|f_{i,j}-v_{i,j}\|_{\calH}\leq 2C_Lp\epsilon.
\$
Therefore,
\$
\Prob\left[\sup_{\bbf_i\in\calF}|D(\bbf_i)|\geq x\right]&\leq \Prob\left[\sup_{\bv_i\in\calV^p}|D(\bv_i)|\geq x-2C_Lp\epsilon\right]\\&\leq2|\calV^{p}|\exp\left(-\frac{t\delta^{-1}\left(x-2C_Lp\epsilon-2C_L^2\delta t^{-1}\sum_{j=1}^{p}\zeta_{i,j}^{-1}\right)^2}{2(2C_L^2\sum_{j=1}^{p}\zeta_{i,j}^{-1}+|\log\mu_{\min}|+\delta(\mu_i+p\kappa_1\varepsilon(0)))^2}\right)\\&\leq 2p\exp\left(\log|\calV|-\frac{t\delta^{-1}\left(x-2C_Lp\epsilon-2C_L^2\delta t^{-1}\sum_{j=1}^{p}\zeta_{i,j}^{-1}\right)^2}{2(2C_L^2\sum_{j=1}^{p}\zeta_{i,j}^{-1}+|\log\mu_{\min}|+\delta(\mu_i+p\kappa_1\varepsilon(0)))^2}\right).
\$
For different RKHSs, $|\calV|$ has different forms. For the RKHS associated with a one-dimensional Gaussian kernel,
\$
\log|\calV|\leq \left(3\log\frac{U}{\epsilon}+\frac{54}{\sigma^2}+6\right)\left(7\log\frac{U}{\epsilon}+\frac{90}{\sigma^2}+14\right).
\$
Therefore, for any $x=\Omega(t^{\nu})$ with $\nu>-0.5$, we can pick $\epsilon=\Theta(x/(2C_Lp))$, and
\$
\lim_{t\to\infty}\Prob\left[\sup_{\bbf_i}|D(f_i)|\geq x\right]=0.
\$
Therefore the generalization error bound of the optimization problem (8) is $\calO(t^{-1/2})$.

\section{Proof of Proposition \ref{prop::approx}}\label{pf::prop::approx}

Fix the triggering functions $\bbf_i$ and the constant base intensity $\mu_i$. Then,
\#\label{eq::p11}
\left\vert L_{i,t}^{(\delta)}(\lambda^{(z)}_i)-L_{i,t}(\lambda_i)\right\vert&\leq \left\vert L_{i,t}^{(\delta)}(\lambda^{(z)}_i)-L_{i,t}^{(\delta)}(\lambda_i)\right\vert+\left\vert L_{i,t}^{(\delta)}(\lambda_i)-L_{i,t}(\lambda_i)\right\vert.
\#
We bound the first term on the right-hand side that is corresponding to the truncation error as follows:
\#\label{eq::p12}
\left\vert L_{i,t}^{(\delta)}(\lambda^{(z)}_i)-L_{i,t}^{(\delta)}(\lambda_i)\right\vert&=\left\vert\sum_{k=1}^{M(t)}(t_k-t_{k-1})\left(\lambda^{(z)}_i(t_k)-\lambda_i(t_k)\right)-\sum_{k=1}^{M(t)}x_{i,k}\left(\log\lambda^{(z)}_i(t_k)-\log\lambda_i(t_k)\right)\right\vert\notag\\
&\leq\left\vert\sum_{k=1}^{M(t)}(t_k-t_{k-1})\left(\lambda^{(z)}_i(t_k)-\lambda_i(t_k)\right)\right\vert+\left\vert\sum_{k=1}^{M(t)}\frac{x_{i,k}}{\mu_{\min}}\left(\lambda^{(z)}_i(t_k)-\lambda_i(t_k)\right)\right\vert\notag\\
&\leq\sum_{k=1}^{M(t)}(t_k-t_{k-1}+\frac{x_{i,k}}{\mu_{\min}})\left\vert\lambda^{(z)}_i(t_k)-\lambda_i(t_k)\right\vert.
\#
First, we define $\alpha(t):=\indc{t\leq z}$. 
Using the fact that $1-\alpha(t_k-\tau_{j,n})=\indc{t_k-\tau_{j,n}>z}$, we obtain
\#\label{eq::p13}
\sum_{k=1}^{M(t)}(t_k-t_{k-1})\left\vert\lambda^{(z)}_i(t_k)-\lambda_i(t_k)\right\vert&=\sum_{k=1}^{M(t)}\sum_{j=1}^{p}\sum_{n=1}^{N_j(t_k)}(t_k-t_{k-1}) f_{i,j}(t_k-\tau_{j,n})(1-\alpha(t_k-\tau_{j,n}))\notag\\
&=\sum_{j=1}^{p}\sum_{k:t_k\in[z,t)}\sum_{\tau_{j,n}\in[0,t_k-z)}(t_k-t_{k-1})f_{i,j}(t_k-\tau_{j,n})\notag\\
&= \sum_{j=1}^{p}\sum_{\tau_{j,n}\in[0,t-z)}\sum_{k:t_k-\tau_{j,n}\geq z}(t_k-\tau_{j,n}-t_{k-1}+\tau_{j,n})f_{i,j}(t_k-\tau_{j,n})\notag\\
&\leq\sum_{j=1}^{p}\sum_{\tau_{j,n}\in[0,t-z)}\varepsilon_{f_{i,j},\delta}(z)\leq\sum_{j=1}^{p}N_j(t-z)\varepsilon(z)\notag\\
&=N(t-z)\varepsilon(z),
\#
where $\varepsilon(t)$ is a tail function such that for any $i$ and $j$, $\varepsilon_{f_{i,j},\delta}(t)\leq \varepsilon(t)$.
The above inequality is due to Assumption \ref{asm::f}. 	
Suppose that the $m$-th arrival of the $i$-th dimension is in $[t_{k_{m}-1},t_{k_m})$. Then,
\#\label{eq::p120}
\sum_{k=1}^{M(t)}x_{i,k}\left\vert\lambda^{(z)}_i(t_k)-\lambda_i(t_k)\right\vert&=\sum_{m=1}^{N_i(t)}\left\vert\lambda^{(z)}_i(t_{k_m})-\lambda_i(t_{k_m})\right\vert\notag\\&=\sum_{m=1}^{N_i(t)}\sum_{j=1}^{p}\sum_{n=1}^{N_j(t_{k_m})}f_{i,j}(t_{k_m}-\tau_{j,n})(1-\alpha(t_{k_m}-\tau_{j,n}))\notag\\&=\sum_{j=1}^{p}\sum_{m:z<t_{k_m}<t}\sum_{\tau_{j,n}\in[0,t_{k_m}-z)}f_{i,j}(t_{k_m}-\tau_{j,n})\notag\\
&\leq \sum_{j=1}^{p}\sum_{\tau_{j,n}\in[0,t-z)}\sum_{m:t_{k_m}-\tau_{j,n}>z}f_{i,j}(t_{k_m}-\tau_{j,n})\notag\\
&\leq \sum_{j=1}^{p}\sum_{\tau_{j,n}\in[0,t-z)}\kappa_1\varepsilon(z)= N(t-z)\kappa_1\varepsilon(z).
\#
The last inequality uses Assumption \ref{asm::f} and the fact that the number of arrivals in an interval of length one is bounded by $\kappa_1$.
Therefore, by combining \eqref{eq::p12} with \eqref{eq::p13} and \eqref{eq::p120}, we get
\#\label{eq::p121}
\left\vert L_{i,t}^{(\delta)}(\lambda^{(z)}_i)-L_{i,t}^{(\delta)}(\lambda_i)\right\vert\leq \left(1+\frac{\kappa_1}{\mu_{\min}}\right)N(t-z)\varepsilon(z).
\#
We now proceed to bound the second term in \eqref{eq::p11}. By the definition, we have
\#\label{eq::p18}
\left\vert L_{i,t}^{(\delta)}(\lambda_i)-L_{i,t}(\lambda_i)\right\vert&=\left\vert\sum_{k=1}^{M(t)}(t_k-t_{k-1})\lambda_{i}(t_k)-\int_{0}^{t}\lambda_i(\tau)\ud\tau\right\vert.
\#
To bound the right-hand side, using the definition of $\lambda_i$, we have that (\ref{eq::p18}) is bounded above by
\$
&\sum_{k=1}^{M(t)}\sum_{j=1}^{p}\left[\left\vert\int_{t_{k-1}}^{t_k}\sum_{\tau_{j,n}<\tau}f_{i,j}(\tau-\tau_{j,n})-\sum_{\tau_{j,n}<t_k}f_{i,j}(t_k-\tau_{j,n})\ud\tau\right\vert\right]\notag\\
&\overset{(a)}{=} \sum_{k=1}^{M(t)}\sum_{j=1}^{p}\left[\left\vert\int_{t_{k-1}}^{t_k} \sum_{\tau_{j,n}<t_{k}}[f_{i,j}(\tau-\tau_{j,n})-f_{i,j}(t_k-\tau_{j,n}) ]\ud\tau\right\vert\right]\notag\\
&\leq\sum_{j=1}^{p}\sum_{k=1}^{M(t)}\left[\sum_{\tau_{j,n}<t_{k}} (t_k-t_{k-1})^2\sup_{x\in(t_{k-1}-\tau_{j,n},t_k-\tau_{j,n}]}\left\vert\frac{\ud f_{i,j}(x)}{\dx}\right\vert\right]\notag\\
&= \sum_{j=1}^{p}\sum_{\tau_{j,n}<t}\sum_{t_k\geq \tau_{j,n}} (t_k-t_{k-1})^2\sup_{x\in(t_{k-1}-\tau_{j,n},t_k-\tau_{j,n}]}\left\vert\frac{\ud f_{i,j}(x)}{\dx}\right\vert \\
&\leq\sum_{j=1}^{p}\sum_{\tau_{j,n}<t} \delta \varepsilon_{f'_{i,j},\delta}(\tau_{j,n}) \overset{(b)}{\leq} \delta N(t)\varepsilon'(0),
\$
where $\varepsilon'(t)$ is a tail function such that for any $i$ and $j$, $\varepsilon_{f'_{i,j},\delta}(t)\leq \varepsilon'(t)$.
In the above equations,  (a) uses the fact that in an interval $[t_{k-1}, t_k]$, arrivals can only happen at the endpoints. Moreover, (b) uses Assumption \ref{asm::f}. 
Using the upper bounds of (\ref{eq::p18}) and (\ref{eq::p121}) in \eqref{eq::p11} will imply the result.

\section{Proof of Theorem \ref{thm::main}} \label{pf::main}

We prove this regret bound following the proof technique for Theorem 4 of \citet{kivinen2004online} and the proof technique for Theorem 3.3 of \citet{hazan2016introduction}. The outline of this proof is as follows:
\begin{itemize}
	\item Firstly, we derive the following upper bound:
	\#\label{eq::main1}
\sum_{k=1}^{M(t)}\left(l_{i,k}[\lambda_i^{(z)}(\hat{\mu}_i^{(k)},\hat{\bbf}_i^{(k)})]-l_{i,k}[\lambda_i^{(z)}(\mu_i,\hat{\bbf}_i^{(k)})]\right)\leq 2\zeta^{-1}|\delta-\mu_{\min}^{-1}|^2(1+\log M(t)).
	\#
	\item Next, we derive the following upper bound:
\#\label{eq::main2}
\sum_{k=1}^{M(t)}\left(l_{i,k}[\lambda_i^{(z)}(\mu_i,\hat{\bbf}^{(k)}_i)]-l_{i,k}[\lambda_i^{(z)}(\mu_i,\bbf_i)]\right)&\leq 2p\kappa_z^2\zeta^{-1}|\delta-\mu_{\min}^{-1}|^2(1+\log M(t)).
\#
	To do this, we need three separate steps:
	\begin{itemize}
		\item Prove the Lemma \ref{lem::1}, which we state below.
		\item Prove that the instantaneous objective function is strongly convex with respect to $\hat{\bbf}_i^{(k)}$ and $\|\cdot\|_{\calH}^2$.
		\item Use the result of Lemma \ref{lem::1} and apply the proof procedure of Theorem 4 of \citet{kivinen2004online} and Theorem 3.3 of \citet{hazan2016introduction}.
	\end{itemize}
	\item Lastly, we combine the results of \eqref{eq::main1} and \eqref{eq::main2} to obtain the regret bound:
	\#\label{eq::pf::main}
	R_t^{(\delta)}[\lambda_i^{(z)}(\mu_i,\bbf_i))\leq C_1(1+\log M(t)],
	\#
	where $C_1=2(1+p\kappa_z^2)|\delta-\mu_{\min}^{-1}|^2$.
\end{itemize}

{\bfseries\noindent Step 0: Technical assumptions and lemma.}
Before the main body of the proof, we need to introduce the following technical assumption, as well as a lemma that bounds the $\calH$-norm of $\partial_{f_{i,j}} l_{i,k}$. These result will be frequently referred to throughout the main body of the proof.

\begin{assumption}\label{asm::tech} We assume that $\delta$ is set small enough such that $|\delta-\mu_{\min}^{-1}|>\delta$.	
\end{assumption}
This assumption does not affect the implementation of the algorithm since $\delta$ and $\mu_{\min}$ are both manually set.

\begin{assumption}\label{asm::initial} We assume that the initialization of the algorithm is nice enough:
\$
	\hat{\mu}_i^{(0)}\leq\omega_i^{-1}|\delta-\mu_{\min}^{-1}|,
\$
and
\$
\left\Vert\hat{f}_{i,j}^{(0)}\right\Vert_{\calH}\leq \kappa_z\zeta_{i,j}^{-1}\left|\delta-\mu^{-1}_{\min}\right|.
\$
\end{assumption}
Similar to \ref{asm::tech}, this assumption does not affect the implementation of the algorithm as we can set $\mu_{\min}$ to be small.

The following lemma is needed in Step 2, and will be proved in Step 2.
\begin{lemma}\label{lem::1} Suppose that Assumptions \ref{asm::stationary} and \ref{asm::f} hold. Then, for any $i,j,k$, the intermediate output of Algorithm \ref{alg::algorithm} at step $k$ satisfies
	\$	
	\left\|\partial_{f_{i,j}} l_{i,k}\left(\lambda^{(z)}_i(\hat{\mu}_i^{(k)},\hat{\bbf}_i^{(k)})\right)\right\|_\calH\leq\left\{\begin{array}{ll} 2\left|\delta-\mu^{-1}_{\min}\right|\kappa_z \quad & \quad \text{if}~ x_{i,k}=1\\
		2\delta \kappa_z \quad & \quad \text{if}~ x_{i,k}=0\end{array}\right.,
	\$
	and
	\$
	\left\|\hat{f}_{i,j}^{(k)}\right\|_{\calH}\leq \zeta_{i,j}^{-1}\kappa_z\left|\delta-\mu^{-1}_{\min}\right|.
	\$	
\end{lemma}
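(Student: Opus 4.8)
The plan is to prove the two inequalities by a single coupled induction on the step index $k$, since the bound on $\|\partial_{f_{i,j}}l_{i,k}\|_\calH$ and the bound on $\|\hat{f}_{i,j}^{(k)}\|_\calH$ feed into each other. The starting point is the explicit gradient from \eqref{eq::partial}, $\partial_{f_{i,j}}l_{i,k}(\lambda^{(z)}_i(\hat\mu_i^{(k)},\hat{\bbf}_i^{(k)}))=\rho_k\sum_{\tau_{j,n}\in[t_k-z,t_k)}K(t_k-\tau_{j,n},\cdot)+\zeta_{i,j}\hat{f}_{i,j}^{(k)}(\cdot)$, together with the technical Assumptions \ref{asm::tech} and \ref{asm::initial}, which pin down the base case and guarantee $|\delta-\mu_{\min}^{-1}|>\delta$.

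For the gradient bound I would first control the kernel-sum term. Since $K(x,x)=1$, each summand has unit norm, $\|K(t_k-\tau_{j,n},\cdot)\|_\calH=\sqrt{K(t_k-\tau_{j,n},t_k-\tau_{j,n})}=1$, and the number of arrivals of dimension $j$ inside the window $[t_k-z,t_k)$ is at most $\kappa_z$ by Assumption \ref{asm::stationary}; hence the triangle inequality gives $\|\sum_{\tau_{j,n}\in[t_k-z,t_k)}K(t_k-\tau_{j,n},\cdot)\|_\calH\le\kappa_z$. Next I would bound the scalar $\rho_k$ of \eqref{eq::rhok} by cases. If $x_{i,k}=0$ then $\rho_k=t_k-t_{k-1}\in(0,\delta]$, so $|\rho_k|\le\delta$. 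If $x_{i,k}=1$, the positivity of the projected triggering functions forces $\lambda^{(z)}_i\ge\hat\mu_i^{(k)}\ge\mu_{\min}$, so $[\lambda^{(z)}_i]^{-1}\in(0,\mu_{\min}^{-1}]$ and $\rho_k=(t_k-t_{k-1})-[\lambda^{(z)}_i]^{-1}$ is a difference of a quantity in $[0,\delta]$ and one in $[0,\mu_{\min}^{-1}]$, giving $|\rho_k|\le|\delta-\mu_{\min}^{-1}|$. Combining $|\rho_k|\,\kappa_z$ with the regularization term $\zeta_{i,j}\|\hat{f}_{i,j}^{(k)}\|_\calH\le\kappa_z|\delta-\mu_{\min}^{-1}|$ (the inductive hypothesis) yields the two-case bound, with the factor $2$ arising exactly from adding the data term and the regularization term.

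The norm bound is the inductive invariant. The base case $k=0$ is Assumption \ref{asm::initial}. For the step, I rewrite the half-update as $\hat{f}_{i,j}^{(k+\frac12)}=(1-\eta_{k+1}\zeta_{i,j})\hat{f}_{i,j}^{(k)}-\eta_{k+1}\rho_k\sum_{\tau_{j,n}\in[t_k-z,t_k)}K(t_k-\tau_{j,n},\cdot)$, which cleanly separates the contraction produced by the Tikhonov term from the data-driven increment. Taking $\calH$-norms, using $\eta_{k+1}=1/(\zeta(k+1)+b)$ so that $0\le\eta_{k+1}\zeta_{i,j}\le1$, and inserting the inductive hypothesis with the $\rho_k$ and kernel-sum bounds turns the right-hand side into a convex combination of two quantities both at most $\zeta_{i,j}^{-1}\kappa_z|\delta-\mu_{\min}^{-1}|$, hence bounded by $\zeta_{i,j}^{-1}\kappa_z|\delta-\mu_{\min}^{-1}|$. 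Finally, because the projection $\Pi$ of \eqref{eq::projectionop} is onto the convex cone $\calH_{+}$ (intersected with the current representer span), which contains $0$ and fixes it, $\Pi$ is non-expansive toward the origin, so $\|\hat{f}_{i,j}^{(k+1)}\|_\calH=\|\Pi[\hat{f}_{i,j}^{(k+\frac12)}]\|_\calH\le\|\hat{f}_{i,j}^{(k+\frac12)}\|_\calH$, closing the induction.

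I expect the delicate step to be the estimate on $|\rho_k|$ when $x_{i,k}=1$: one must collapse the competition between the interarrival length $t_k-t_{k-1}\le\delta$ and the inverse truncated intensity $[\lambda^{(z)}_i]^{-1}\le\mu_{\min}^{-1}$ into the single constant $|\delta-\mu_{\min}^{-1}|$, and it is precisely here that Assumption \ref{asm::tech} is invoked so that the $x_{i,k}=0$ value $\delta$ is dominated and both cases, as well as the regularization contribution, fold into one constant that keeps the induction invariant and feeds the $|\delta-\mu_{\min}^{-1}|^2$ factor of $C_1$ in Theorem \ref{thm::main}. The remaining ingredients—the reproducing-kernel norm identity, the counting bound $\kappa_z$, and the non-expansiveness of $\Pi$—are then routine.
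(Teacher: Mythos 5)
Your proposal is correct and takes essentially the same route as the paper's proof of Lemma \ref{lem::1}: bounding the kernel sum by $\kappa_z$ via $K(x,x)=1$ and Assumption \ref{asm::stationary}, bounding $\rho_k$ in \eqref{eq::rhok} by cases using $\lambda_i^{(z)}\geq\mu_{\min}$ and Assumption \ref{asm::tech}, and closing the norm bound by induction on the recursion $\|\hat{f}_{i,j}^{(k+1)}\|_\calH\leq(1-\eta_k\zeta_{i,j})\|\hat{f}_{i,j}^{(k)}\|_\calH+\eta_k\kappa_z|\delta-\mu_{\min}^{-1}|$ with base case Assumption \ref{asm::initial}, while your explicit treatment of the projection (non-expansive toward the origin since $0\in\calH_{+}$ in \eqref{eq::projectionop}) only makes precise a step the paper leaves implicit. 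One shared gloss worth noting: in the $x_{i,k}=0$ case your chain, like the paper's, actually yields $\delta\kappa_z+\kappa_z|\delta-\mu_{\min}^{-1}|$ (the regularization term does not shrink to $\delta\kappa_z$), so strictly one obtains the uniform bound $2\kappa_z|\delta-\mu_{\min}^{-1}|$ in both cases rather than the stated $2\delta\kappa_z$ --- which is harmless, since the proof of Theorem \ref{thm::main} only invokes that uniform bound.
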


We are now ready to prove the main part of the theorem.

{\bfseries\noindent Step 1: Proving equation \eqref{eq::main1}.} We start the proof of \eqref{eq::main1} by observing the following fact: given $\hat{\bbf}_i^{(k)}$, the objective function $l_{i,k}(\lambda_i^{(z)}(\hat{\mu}_i^{(k)},\hat{\bbf}_i^{(k)}))$ is $\omega_i$-strongly convex with respect to $\hat{\mu}_i$ and the square operator. This implies that
\$
l_{i,k}[\lambda_i^{(z)}(\mu_i,\hat{\bbf}_i^{(k)})]&\geq l_{i,k}[\lambda_i^{(z)}(\hat{\mu}_i^{(k)},\hat{\bbf}_i^{(k)})]+\left\la\partial_{\mu_i}l_{i,k}[\lambda_i^{(z)}(\hat{\mu}_i^{(k)},\hat{\bbf}_i^{(k)})],\mu_i-\hat{\mu}_i^{(k)}\right\ra+\frac{\omega_i}{2}(\mu_i-\hat{\mu}_i^{(k)})^2,
\$
which further indicates that
\#\label{eq::main6}
2l_{i,k}[\lambda_i^{(z)}(\hat{\mu}_i^{(k)},\hat{\bbf}_i^{(k)})]-2l_{i,k}[\lambda_i^{(z)}(\mu_i,\hat{\bbf}_i^{(k)})]&\leq 2\left\la\partial_{\mu_i}l_{i,k}[\lambda_i^{(z)}(\hat{\mu}_i^{(k)},\hat{\bbf}_i^{(k)})],\hat{\mu}_i^{(k)}-\mu_i\right\ra-\omega_i(\mu_i-\hat{\mu}_i^{(k)})^2.
\#
By the update rule, we have
\$
\hat{\mu}_i^{(k+1)}=\Pi\left[\hat{\mu}_i^{(k)}-\eta_k\partial_{\mu_i}l_{i,k}[\lambda_i^{(z)}(\hat{\mu}_i^{(k)},\hat{\bbf}_i^{(k)})]\right].
\$
Since the projection is contractive, we have
\$
\left(\hat{\mu}_i^{(k+1)}-\mu_i\right)^2\leq\left(\hat{\mu}_i^{(k+\frac{1}{2})}-\mu_i\right)^2&=\left(\hat{\mu}_i^{(k)}-\eta_k\partial_{\mu_i}l_{i,k}[\lambda_i^{(z)}(\hat{\mu}_i^{(k)},\hat{\bbf}_i^{(k)})]-\mu_i\right)^2\notag\\&=\left(\hat{\mu}_i^{(k)}-\mu_i\right)^2-2\eta_k\left\la\hat{\mu}_i^{(k)}-\mu_i,l_{i,k}[\lambda_i^{(z)}(\hat{\mu}_i^{(k)},\hat{\bbf}_i^{(k)})]\right\ra+\\& +\eta_k^2\left[\partial_{\mu_i}l_{i,k}[\lambda_i^{(z)}(\hat{\mu}_i^{(k)},\hat{\bbf}_i^{(k)})]\right]^2.
\$
Hence,
\#\label{eq::main3}
2\left\la\hat{\mu}_i^{(k)}-\mu_i,l_{i,k}[\lambda_i^{(z)}(\hat{\mu}_i^{(k)},\hat{\bbf}_i^{(k)})]\right\ra&\leq \frac{1}{\eta_k}\left[\left(\hat{\mu}_i^{(k)}-\mu_i\right)^2-\left(\hat{\mu}_i^{(k+1)}-\mu_i\right)^2\right]+\notag\\&+\eta_k\left[\partial_{\mu_i}l_{i,k}[\lambda_i^{(z)}(\hat{\mu}_i^{(k)},\hat{\bbf}_i^{(k)})]\right]^2.
\#

It is not hard to verify that $|\partial_{\mu_i}l_{i,k}[\lambda_i^{(z)}(\hat{\mu}_i^{(k)},\hat{\bbf}_i^{(k)})]|$ is bounded: first, notice that
\#\label{eq::main4}
\left|\partial_{\mu_i}l_{i,k}[\lambda_i^{(z)}(\hat{\mu}_i^{(k)},\hat{\bbf}_i^{(k)})]\right|&=\left|\partial_{\mu_i}(t_k-t_{k-1})\lambda_i(t_k)-x_{i,k}\log\lambda_i(t_k)+\frac{\omega_i}{2}[\hat{\mu}_i^{(k)}]^2\right|\notag\\&=\rho_k+\omega_i\hat{\mu}_i^{(k)}\leq |\delta-\mu_{\min}^{-1}|+\omega_i\hat{\mu}_i^{(k)},
\#
where the last step uses the result $\rho_k\leq|\delta-\mu_{\min}^{-1}|$, which is a direct consequence from Assumption \ref{asm::tech}. By the update rule of $\hat{\mu}_i^{(k)}$, we can see that if $\hat{\mu}_i^{(k)}\leq \omega_{i}^{-1}|\delta-\mu_{\min}^{-1}|$, then
\$
\hat{\mu}_i^{(k+1)}&\leq \hat{\mu}_i^{(k)}(1-\omega_i\eta_k)+\eta_k|\delta-\mu_{\min}^{-1}|\leq(1-\omega_i\eta_k)\omega_i^{-1}|\delta-\mu_{\min}^{-1}|+\eta_k|\delta-\mu_{\min}^{-1}|\\&=\omega_i^{-1}|\delta-\mu_{\min}^{-1}|.
\$
Therefore by Assumption \ref{asm::initial} and mathematical induction, $\hat{\mu}_i^{(k)}\leq\omega_i^{-1}|\delta-\mu_{\min}^{-1}|$ for every $k\geq 0$. Combining this result with \eqref{eq::main4}, we have
\#\label{eq::main5}
\left|\partial_{\mu_i}l_{i,k}[\lambda_i^{(z)}(\hat{\mu}_i^{(k)},\hat{\bbf}_i^{(k)})]\right|\leq 2|\delta-\mu_{\min}^{-1}|.
\#

With \eqref{eq::main3}, \eqref{eq::main5} and \eqref{eq::main6}, we have
\$
&2\sum_{k=1}^{M(t)}\left(l_{i,k}[\lambda_i^{(z)}(\hat{\mu}_i^{(k)},\hat{\bbf}_i^{(k)})]-l_{i,k}[\lambda_i^{(z)}(\mu_i,\hat{\bbf}_i^{(k)})]\right)\\&\leq -\omega_i\sum_{k=1}^{M(t)}\left(\mu_i-\hat{\mu}_i^{(k)}\right)^2+\sum_{k=1}^{M(t)}\frac{1}{\eta_k}\left[\left(\hat{\mu}_i^{(k)}-\mu_i\right)^2-\left(\hat{\mu}_i^{(k+1)}-\mu_i\right)^2\right]+\sum_{k=1}^{M(t)}\eta_k\left[\partial_{\mu_i}l_{i,k}[\lambda_i^{(z)}(\hat{\mu}_i^{(k)},\hat{\bbf}_i^{(k)})]\right]^2\\&=\sum_{k=1}^{M(t)}\left[\frac{1}{\eta_k}-\frac{1}{\eta_{k-1}}-\omega_i\right]\left(\mu_i-\hat{\mu}_i^{(k)}\right)^2+4|\delta-\mu_{\min}^{-1}|^2\sum_{k=1}^{M(t)}\eta_k\\&\leq 4|\delta-\mu_{\min}^{-1}|^2\sum_{k=1}^{M(t)}\eta_k,
\$
where in the last step, we have invoked the assumption that when $\omega_i\geq\zeta$, and $\eta_k=1/(k\zeta+b)$ for $k>0$ \footnote{we assume $1/\eta_0=0$ since $\hat{\mu}_i^{(0)}$ was not involved in the summation.}, $\eta_k^{-1}-\eta_{k-1}^{-1}\leq\omega_i$. Furthermore,
$
\sum_{k=1}^{M(t)}\eta_k\leq \zeta^{-1}(1+\log M(t)).
$
Hence, plugging this result into the previous equation, we have
\$
\sum_{k=1}^{M(t)}\left(l_{i,k}[\lambda_i^{(z)}(\hat{\mu}_i^{(k)},\hat{\bbf}_i^{(k)})]-l_{i,k}[\lambda_i^{(z)}(\mu_i,\hat{\bbf}_i^{(k)})]\right)\leq 2\zeta^{-1}|\delta-\mu_{\min}^{-1}|^2(1+\log M(t)),
\$
completing the proof of Step 1.

{\bfseries\noindent Step 2: Proving equation \eqref{eq::main2}.} The proof of \eqref{eq::main2} follows the same procedure as the proof of \eqref{eq::main1}. However, proving the counterpart of \eqref{eq::main5} is more complicated. We stated it in Lemma \ref{lem::1}, and we now formally prove it.

{\bfseries\noindent Step 2.1: Proof of Lemma \ref{lem::1}.}

Recall from equation \eqref{eq::partial} that, at the $k$-th update epoch, the update rule for $\hat{f}_{i,j}^{(k)}$ can be written as
\$
\hat{f}_{i,j}^{(k+1)}&= -\eta_k\left[(t_{k}-t_{k-1})-\frac{x_{i,k}}{\lambda^{(z)}_i\left(\hat{\mu}_i^{(k)},\hat{\bbf}_i^{(k)}\right)}\right]\sum_{\tau_{j,n}\in[t_k-z,t_k)}K(t_k-\tau_{j,n},\cdot)\notag\\
&+(1-\eta_k\zeta_{i,j})\hat{f}_{i,j}^{(k)},
\$
where, by Assumption \ref{asm::f}, $K(x,x)\leq 1$ for all $x\in\reals$. Since we have used the truncated intensity function $\lambda_i^{(z)}$, we have, by triangle inequality,
\$
\big\Vert\sum_{\tau_{j,n}\in[t_k-z,t_k)}K(t_k-\tau_{j,n},\cdot)\big\Vert_{\calH}^2&\leq  \big[\sum_{\tau_{j,n}\in[t_k-z,t_k)}\|K(t_k-\tau_{j,n},\cdot)\|_{\calH}\big]^2=\big[\sum_{\tau_{j,n}\in[t_k-z,<t_k)}K(t_k-\tau_{j,n},t_k-\tau_{j,n})\big]^2\leq \kappa_z^2,
\$	
where $z$ is the window size that is selected at the beginning of the algorithm.
Here, we have used the assumption that the number of arrivals within $[t_k-z,t_k)$ is upper bounded by $\kappa_z$, by Assumption \ref{asm::stationary}.	
In addition, by the design of the algorithm, we always have
$
\lambda^{(z)}_i\left(\hat{\mu}_i^{(k)},\hat{\bbf}_i^{(k)}\right)\geq\mu_{\min}.
$
Hence, when $x_{i,k}=1$,
\#\label{eq::l11}
\left\|\hat{f}_{i,j}^{(k+1)}\right\|_{\calH}&\leq (1-\eta_k\zeta_{i,j})\left\|\hat{f}_{i,j}^{(k)}\right\|_{\calH}+\left|\eta_k\left(t_k-t_{k-1}-\frac{x_{i,k}}{\lambda^{(z)}_i\left(\hat{\mu}_i^{(k)},\hat{\bbf}_i^{(k)}\right)}\right)\right|\kappa_z\notag\\
&\leq (1-\eta_k\zeta_{i,j})\left\|\hat{f}_{i,j}^{(k)}\right\|_{\calH}+\eta_k\left|\delta-\mu^{-1}_{\min}\right|\kappa_z ,
\#
where in the last step of \eqref{eq::l11}, we have used the technical assumption \ref{asm::tech}.
When the algorithm initializes with $\hat{f}_{i,j}^{(0)}$ satisfies Assumption \ref{asm::initial}, \ie,
\$
\left\Vert\hat{f}_{i,j}^{(0)}\right\Vert_{\calH}\leq \kappa_z\zeta_{i,j}^{-1}\left|\delta-\mu^{-1}_{\min}\right|,
\$
we can use induction and \eqref{eq::l11} to show that every $\hat{f}_{i,j}^{(k)}$ satisfies the above bound.
In addition, by \eqref{eq::partial}, we have
\$
\left\|\partial_{f_{i,j}}l_{i,k}\left(\hat{f}_{i,j}^{(k)}\right)\right\|_\calH&\leq 2\kappa_z\left|\delta-\mu^{-1}_{\min}\right|.
\$

Similarly, when $x_{i,k}=0$, the term $\mu_{\min}^{-1}$ vanishes because $x_{i,k}=0$, and hence we reach the desired statement.

{\bfseries\noindent Step 2.2: Strong convexity of the objective function.} The instantaneous objective function is strongly convex in the following sense:
\$
l_{i,k}[\lambda_i^{(z)}(\mu_i,\bbf_i)]&\geq l_{i,k}[\lambda_i^{(z)}(\mu_i,\hat{\bbf}^{(k)}_i)]+\sum_{j=1}^{p}\left\la\partial_{f_{i,j}}l_{i,k}(\lambda_i^{(z)}(\mu_i,\hat{\bbf}_i^{(k)})),f_{i,j}-\hat{f}_{i,j}^{(k)}\right\ra+\sum_{j=1}^{p}\frac{\zeta_{i,j}}{2}\left\|f_{i,j}-\hat{f}_{i,j}^{(k)}\right\|_{\calH}^2.
\$
In particular, the instantaneous objective function is strongly convex with respect to any one of the $f_{i,j}(t)$s and $\|\cdot\|_{\calH}^2$ when the remaining $p-1$ are fixed.
The proof follows directly from the strong convexity of $\|\cdot\|_{\calH}^2$.

{\bfseries\noindent Step 2.3: Proof of \eqref{eq::main2}.} We now prove \eqref{eq::main2}.  By the strong convexity of the instantaneous objective function proved in Step 2.2, we have
\#
2l_{i,k}[\lambda_i^{(z)}(\mu_i,\bbf_i)]&\geq 2l_{i,k}[\lambda_i^{(z)}(\mu_i,\hat{\bbf}^{(k)}_i)]+ 2\sum_{j=1}^{p}\left\la\partial_{f_{i,j}}l_{i,k}(\hat{f}^{(k)}_{i,j}),f_{i,j}-\hat{f}_{i,j}^{(k)}\right\ra_{\calH}+\sum_{j=1}^{p}\zeta_{i,j}\left\Vert f_{i,j}-\hat{f}_{i,j}^{(k)}\right\Vert_\calH^2.
\#
This can be written as follows
\#\label{eq::t13}
2l_{i,k}[\lambda_i^{(z)}(\mu_i,\hat{\bbf}^{(k)}_i)]- 2l_{i,k}[\lambda_i^{(z)}(\mu_i,\bbf_i)]&\leq 2\sum_{j=1}^{p}\left\la\partial_{f_{i,j}}l_{i,k}(\hat{f}^{(k)}_{i,j}),\hat{f}_{i,j}^{(k)}-f_{i,j}\right\ra-\sum_{j=1}^{p}\zeta_{i,j}\left\Vert f_{i,j}-\hat{f}_{i,j}^{(k)}\right\Vert_\calH^2.
\#
For any $j\in\{1,\ldots,p\}$, since $\hat{f}_{i,j}^{(k+1)}=\Pi[\hat{f}_{i,j}^{(k)}-\eta_k\partial_{f_{i,j}}l_{i,k}[\lambda_i^{(z)}(\mu_i,\hat{\bbf}^{(k)}_i)]]$ and $\Pi[\cdot]$ is contractive, we have
\$
\left\Vert\hat{f}_{i,j}^{(k+1)}-f_{i,j}\right\Vert_\calH^2&\leq\left\Vert\hat{f}_{i,j}^{(k)}-f_{i,j}-\eta_k\partial_{f_{i,j}}l_{i,k}[\lambda_i^{(z)}(\mu_i,\hat{\bbf}^{(k)}_i)]\right\Vert_\calH^2\\
&=\left\Vert\hat{f}_{i,j}^{(k)}-f_{i,j}\right\Vert_\calH^2+\eta_k^2\left\Vert\partial_{f_{i,j}}l_{i,k}[\lambda_i^{(z)}(\mu_i,\hat{\bbf}^{(k)}_i)]\right\Vert_\calH^2-2\eta_k\left\la\partial_{f_{i,j}}l_{i,k}[\lambda_i^{(z)}(\mu_i,\hat{\bbf}^{(k)}_i)],\hat{f}_{i,j}^{(k)}-f_{i,j}\right\ra_\calH.
\$
Therefore,
\#\label{eq::t11}
2\left\la\partial_{f_{i,j}}l_{i,k}[\lambda_i^{(z)}(\mu_i,\hat{\bbf}^{(k)}_i)],\hat{f}_{i,j}^{(k)}-f_{i,j}\right\ra_\calH&\leq\frac{1}{\eta_k}\left[\left\Vert\hat{f}_{i,j}^{(k)}-f_{i,j}\right\Vert_\calH^2-\left\Vert\hat{f}_{i,j}^{(k+1)}-f_{i,j}\right\Vert_\calH^2\right]+\eta_k\left\Vert\partial_{f_{i,j}}l_{i,k}[\lambda_i^{(z)}(\mu_i,\hat{\bbf}^{(k)}_i)]\right\Vert_\calH^2.
\#
Using Lemma \ref{lem::1}, we have
\$
\eta_k\left\Vert\partial_{f_{i,j}}l_{i,k}[\lambda_i^{(z)}(\mu_i,\hat{\bbf}^{(k)}_i)]\right\Vert_\calH^2\leq 4\eta_k \kappa_z^2|\delta-\mu_{\min}^{-1}|^2
\$
when $x_{i,k}=1$, and
\$
\eta_k\left\Vert\partial_{f_{i,j}}l_{i,k}[\lambda_i^{(z)}(\mu_i,\hat{\bbf}^{(k)}_i)]\right\Vert_\calH^2\leq 4\eta_k \kappa_z^2\delta^2\leq 4\eta_k\kappa_z^2|\delta-\mu_{\min}^{-1}|^2
\$
when $x_{i,k}=0$. 

We now proceed to final step, which sums \eqref{eq::t13} over $k\in\{1,\ldots,M(t)\}$ and then combines the result with \eqref{eq::t11} summed over $j\in\{1,\ldots,p\}$. To obtain stronger intuition, we choose to use the uniform upper bound $4\eta_k\kappa_z^2|\delta-\mu_{\min}^{-1}|^2$ for $\eta_k\|\partial_{f_{i,j}}\|l_{i,k}[\lambda_i^{(z)}(\mu_i,\hat{\bbf}_i^{(k)})]$, which holds for all values of $x_{i,k}$. We thus obtain
\#\label{ew:1}\notag
2\sum_{k=1}^{M(t)}\left(l_{i,k}[\lambda_i^{(z)}(\mu_i,\hat{\bbf}^{(k)}_i)]-l_{i,k}[\lambda_i^{(z)}(\mu_i,\bbf_i)]\right)\leq
&\sum_{k=1}^{M(t)}\sum_{j=1}^{p}\left\Vert\hat{f}_{i,j}^{(k)}-f_{i,j}\right\Vert_\calH^2\left(\frac{1}{\eta_k}-\frac{1}{\eta_{k-1}}-\zeta_{i,j}\right)+\\&+4\kappa_z^2|\delta-\mu_{\min}|^2\sum_{k=1}^{M(t)}\sum_{j=1}^{p}\eta_k.
\#

Since $\eta_k=1/(k\zeta+b)$, we obtain
\$
\sum_{k=1}^{M(t)}\eta_k\leq \zeta^{-1}(1+\log M(t)).
\$

Furthermore, $1/\eta_k-1/\eta_{k-1}-\zeta_{i,j}\leq 0$. Therefore, substituting the above inequalities into \eqref{ew:1}, we get
\$
\sum_{k=1}^{M(t)}\left(l_{i,k}[\lambda_i^{(z)}(\mu_i,\hat{\bbf}^{(k)}_i)]-l_{i,k}[\lambda_i^{(z)}(\mu_i,\bbf_i)]\right)&\leq 2p\zeta^{-1}\kappa_z^2|\delta-\mu_{\min}^{-1}|^2(1+\log M(t)).
\$
{\bfseries\noindent Step 3.} The overall regret bound can be obtained by adding \eqref{eq::main1} and \eqref{eq::main2}.


\subsection{Proof of Corollary \ref{cor::main}}\label{cor::mainp}

From the result of Proposition \ref{prop::approx}, we have 
\$
\left|L_{i,t}^{(\delta)}(\lambda^{(z)}_i)-L_{i,t}(\lambda_i)\right|\leq (1+\frac{\kappa_1}{\mu_{\min}})N(t-z)\varepsilon(z)+\delta N(t)\varepsilon'(0).
\$	
Using the above inequality, the results of Theorem \ref{thm::main}, and the triangle inequality, we obtain
\$
\sum_{k=1}^{M(t)}\left(l_{i,k}(\lambda^{(z)}_i(\hat{\mu}_i,\hat{\bbf}_i^{(k)}))-l_{i,k}(\lambda_i({\mu}_i,{\bbf}_i))\right) \leq (C_1+C_2)(1+\log M(t))+C_3 N(t),
\$
where $C_1=(1+\zeta)^{-2}|\delta-\mu_{\min}^{-1}|^2+2\kappa_z^2\delta^2p$ and $C_2=2\kappa_z^2\mu_{\min}^{-2}-4\kappa_z^2\delta\mu_{\min}^{-1}$.

\section{Proof of Proposition \ref{prop::sosproj}}\label{pf::sosproj}

Generally speaking, the projection operation is a QP problem:
\#\label{eq:weq}
\hat{f}_{i,j}^{(k+1)}=\argmin_{f\in\mathcal{H},f(t)\geq0}|| \hat{f}_{i,j}^{(k+\frac{1}{2})}-f||^2_\mathcal{H}=\argmin_{f\in\mathcal{H},f(t)\geq0}-2\la f,\hat{f}_{i,j}^{(k+\frac{1}{2})}\ra_{\mathcal{H}}+||f||^2_\mathcal{H}.
\#
Recall $\hat{f}_{i,j}^{(k+\frac{1}{2})}(\cdot)=\sum_{s\in\mathcal{S}} a_s K(s,\cdot)$, where 
$
\mathcal{S}=\cup_{r\leq k}\{t_r-\tau_{j,n}:\ t_r-z\leq\tau_{j,n}<t_r\}.
$
Hence, \eqref{eq:weq} can be written as
\#\label{eq:weq2}
\argmin_{f\in\mathcal{H},f(t)\geq0}-2\sum_{s\in\mathcal{S}} a_s f(s)+||f||^2_\mathcal{H}.
\#
Let $\mathcal{H}$ be the RKHS with kernel $K(x,y)=(1+xy)^{2d}$, for some integer $d$. By the solution of Hilbert's 17th problem \citep{bochnak2013real}, we know that a 1-dimensional and $2d$-degree polynomial is nonnegative iff it can be written as the sum of squares of $d$-degree polynomials, \ie, a quadratic form of $d$-degree polynomials.  This allows us to substitute the constraint in (\ref{eq:weq2}) with $f\in\{\phi^\top(x)\Qb\phi(x):\ \Qb\succeq0\}\subset\mathcal{H}$, where $\phi(x)$ is the feature map of the kernel function $K'(x,y)=(1+xy)^d$, \ie, $\phi^\top(x)\phi(y)=K'(x,y)$.

Finally, by the representer Theorem in \citet{bagnell2015learning} for positive functions, we obtain that the minimizer to (\ref{eq:weq2}) is of the form $\sum_{s\in\mathcal{S}} b_s K(s,\cdot)$. Hence,  (\ref{eq:weq2}) can be written as follows,
\#
\arg\min_{\bbb}& \ -2\ab^\top\Kb\bbb+\bbb^\top\Kb\bbb\\ \notag
\text{s.t.}  \ \ f(x)=\sum_{s\in\mathcal{S}} b_s & K(s,x)=\phi^\top(x)\Qb\phi(x),\ \text{for some}\ \Qb\succeq0.
\#
Upon simple manipulations, the above problem can be rewritten as
\#
\arg\min_{\bbb}& \ -2\ab^{\top}\Kb\bbb+\ab^\top\Kb\bbb\ \ \ \ \ \quad \ \\ \notag
\text{s.t.} \ \ \Gb\cdot\diag&(\bbb)+\diag(\bbb)\cdot\Gb\succeq0,\ \ \ 
\#
where $\Kb=[K(s,s')]$, and $\Gb=[\phi^\top(s)\phi(s')]=[K'(s,s')]$.

\section{Experiment Details} \label{app::expdet}


\begin{figure}
\centering
	\begin{tabular}{ccccc}
		\includegraphics[width=0.16\linewidth]{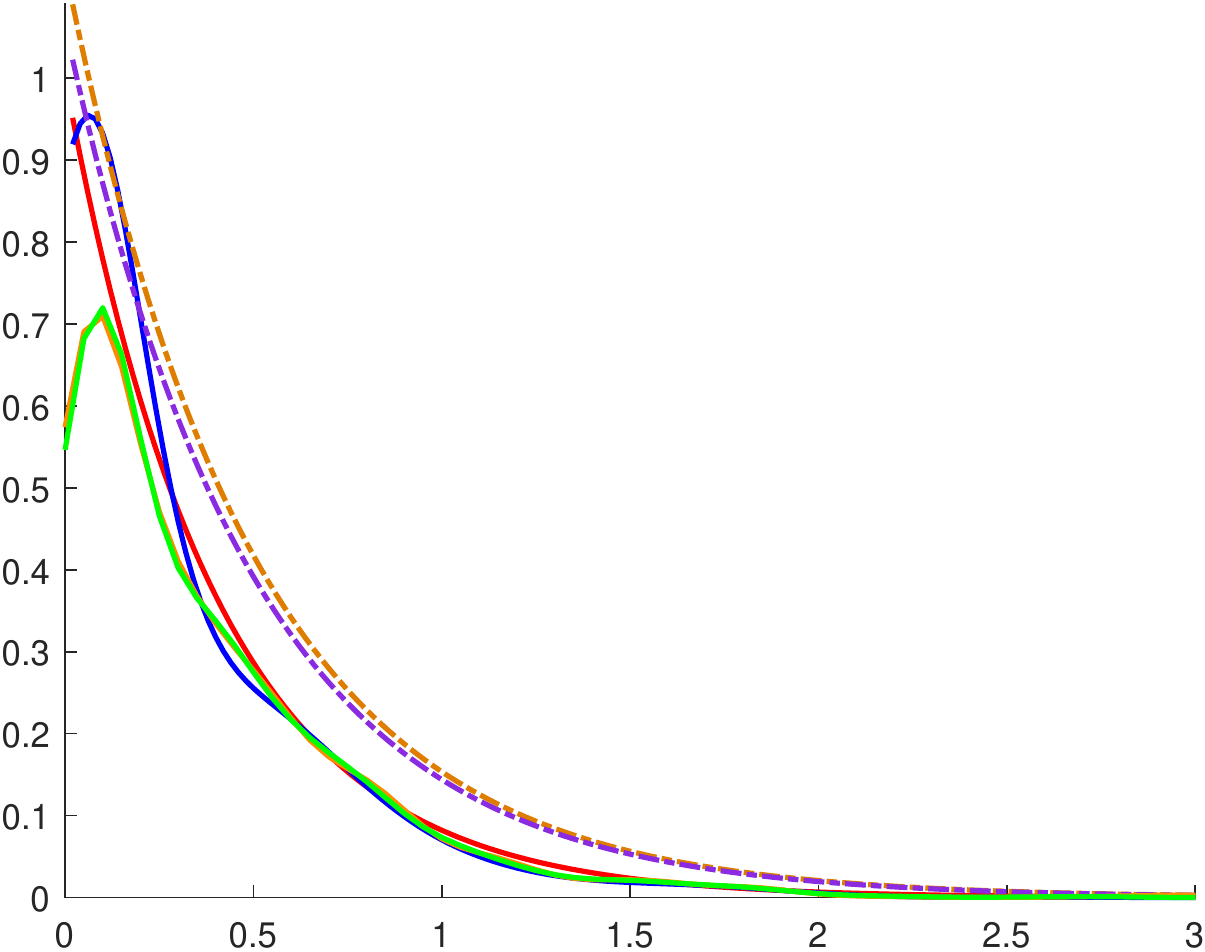} & \includegraphics[width=0.16\linewidth]{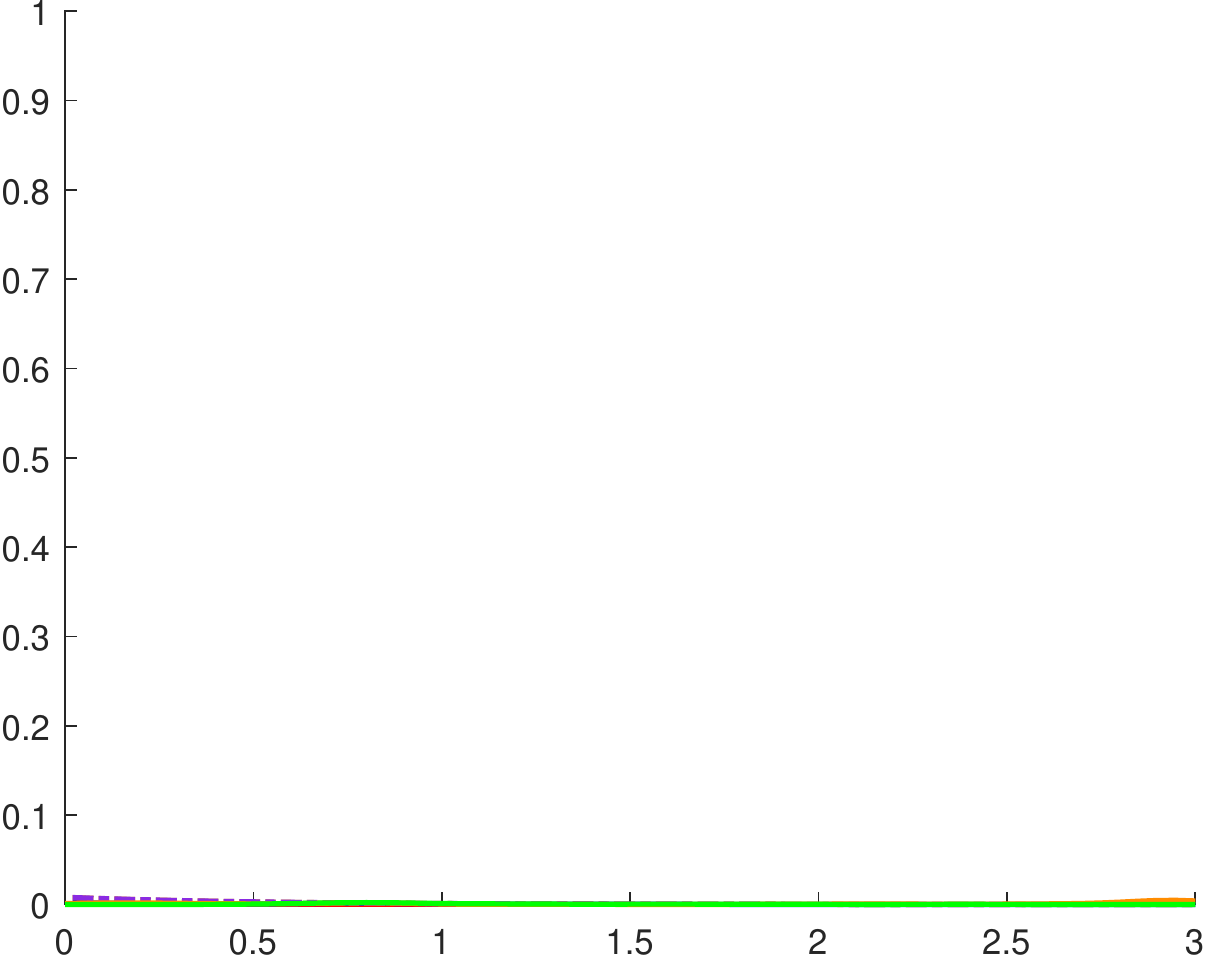} & \includegraphics[width=0.16\linewidth]{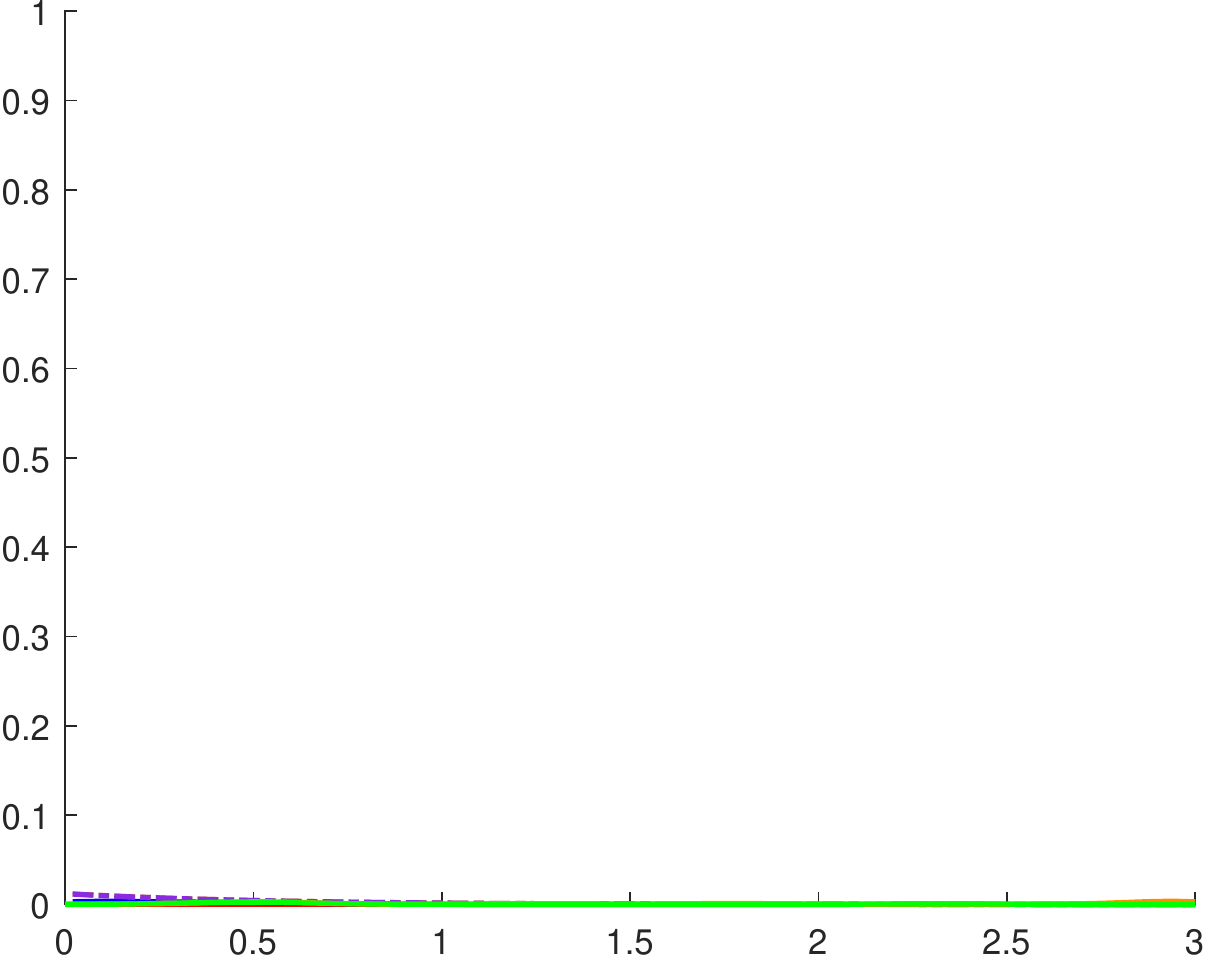} & \includegraphics[width=0.16\linewidth]{14-eps-converted-to.pdf} & \includegraphics[width=0.16\linewidth]{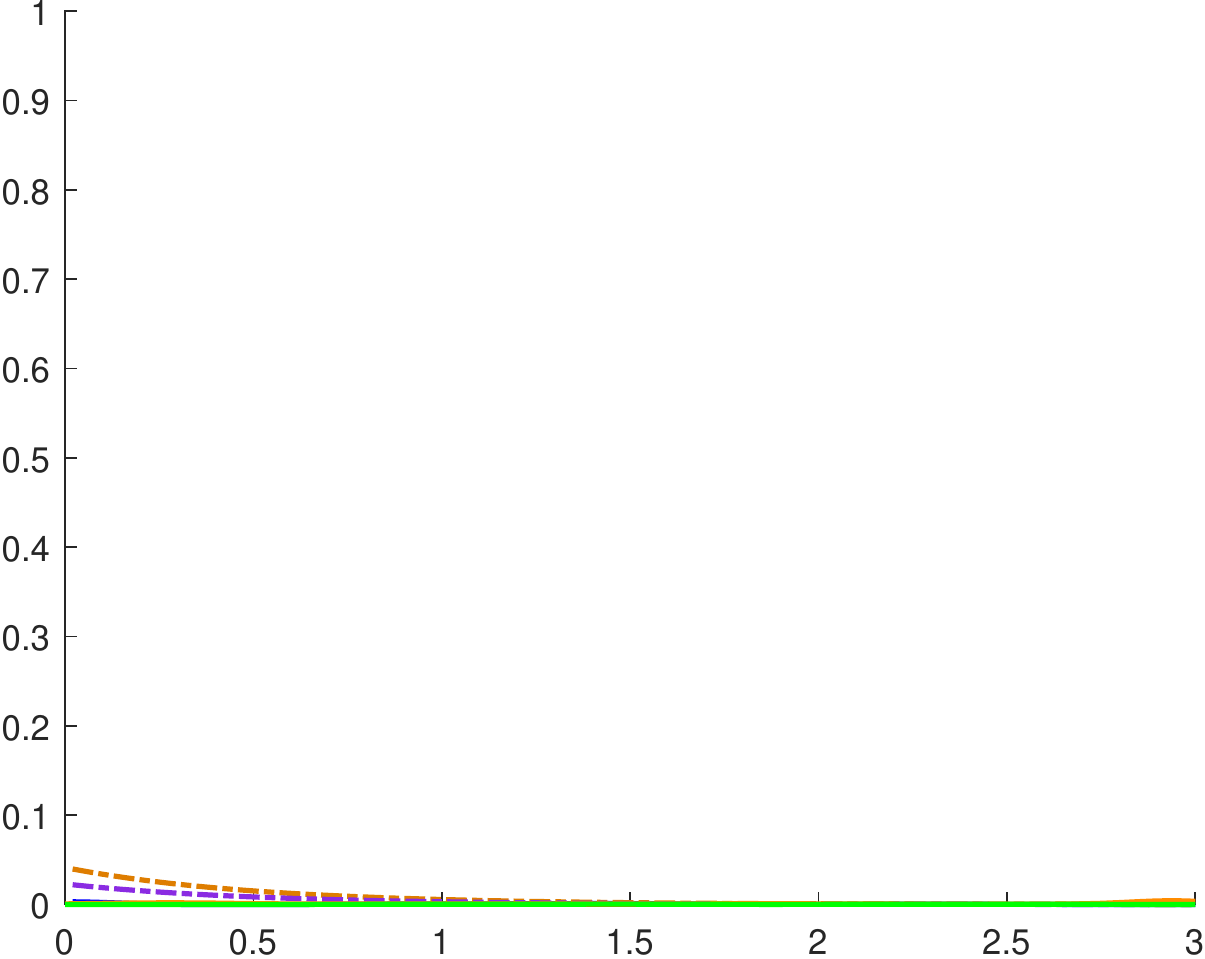} \\
		\includegraphics[width=0.16\linewidth]{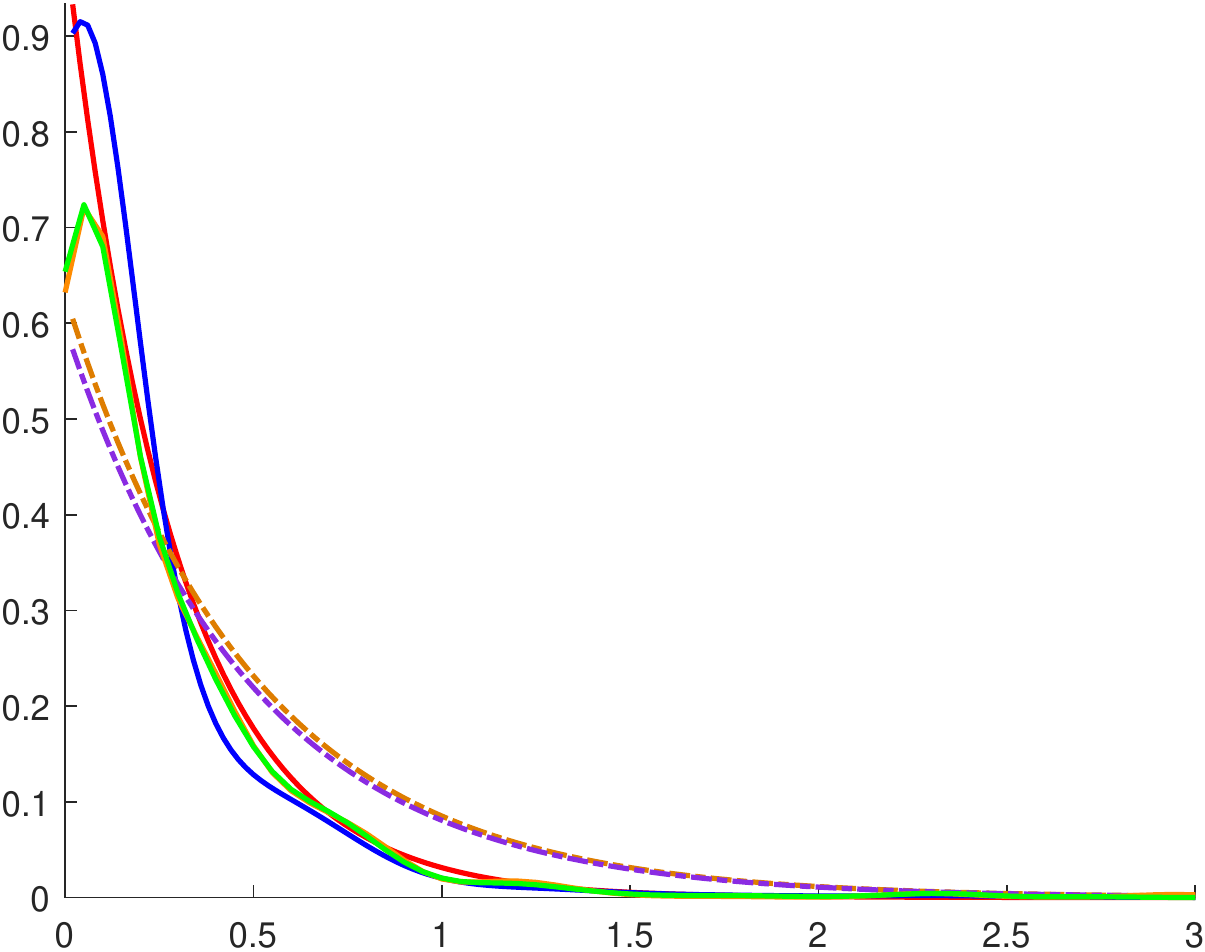} & \includegraphics[width=0.16\linewidth]{22-eps-converted-to.pdf} & \includegraphics[width=0.16\linewidth]{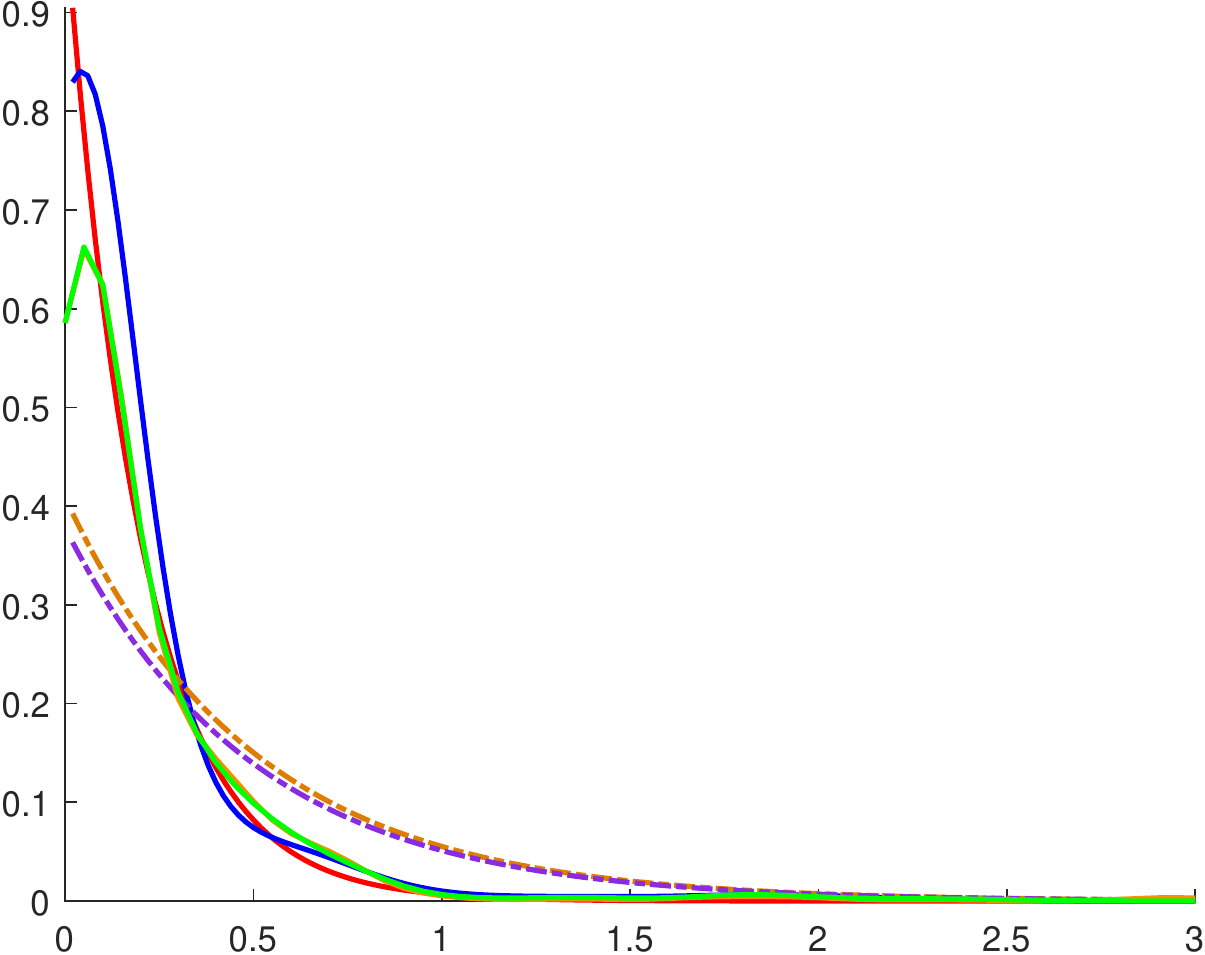} & \includegraphics[width=0.16\linewidth]{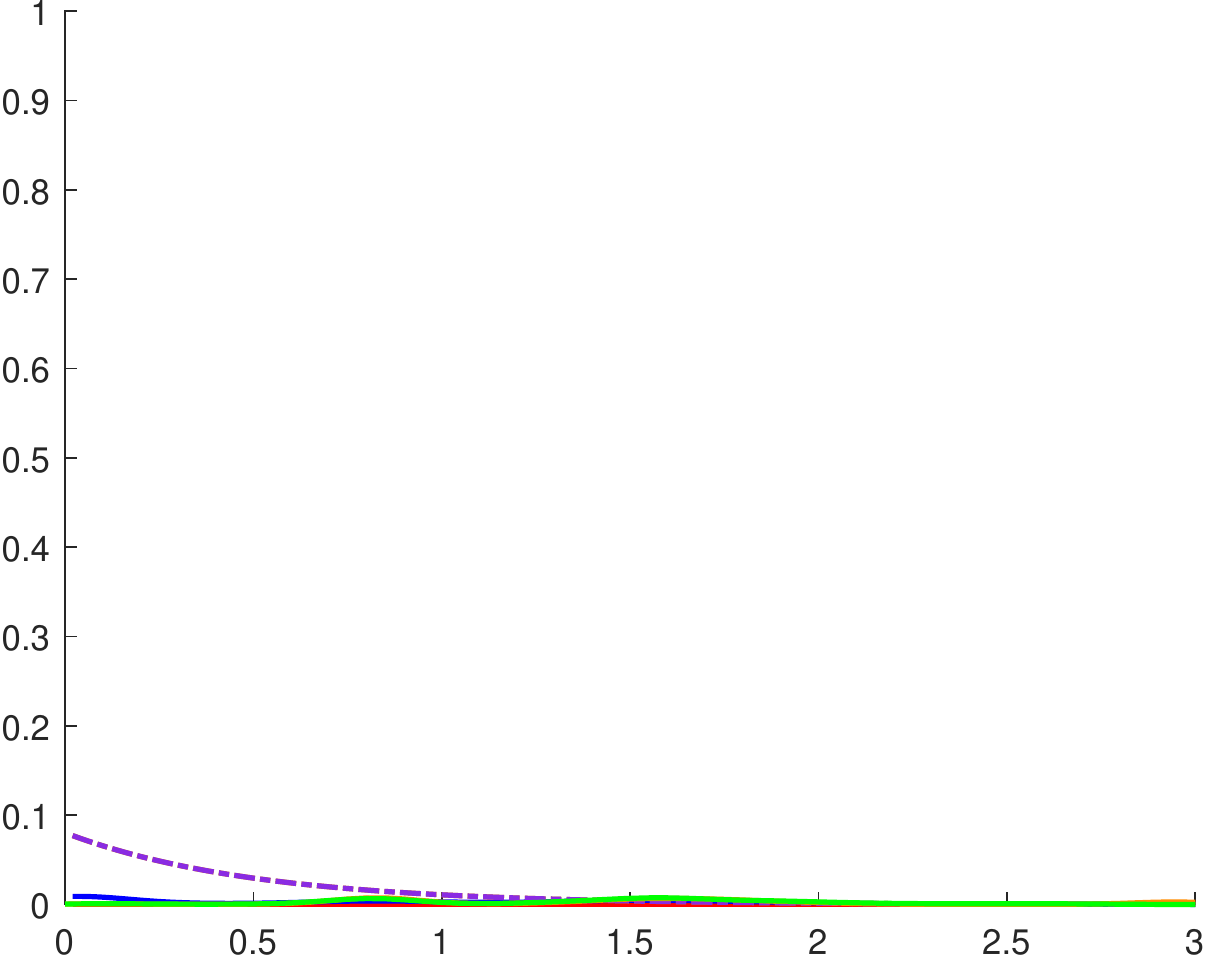} & \includegraphics[width=0.16\linewidth]{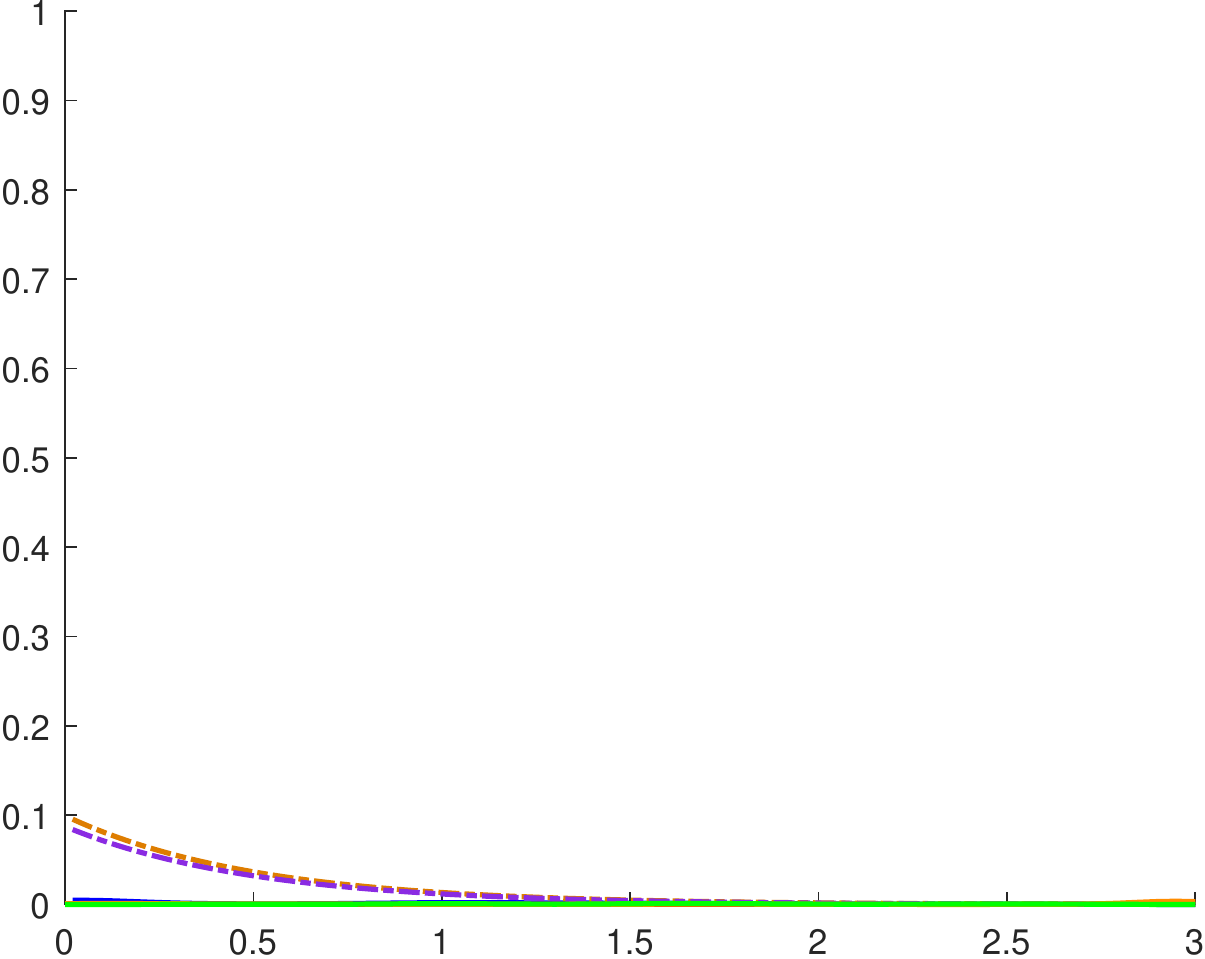} \\
		\includegraphics[width=0.16\linewidth]{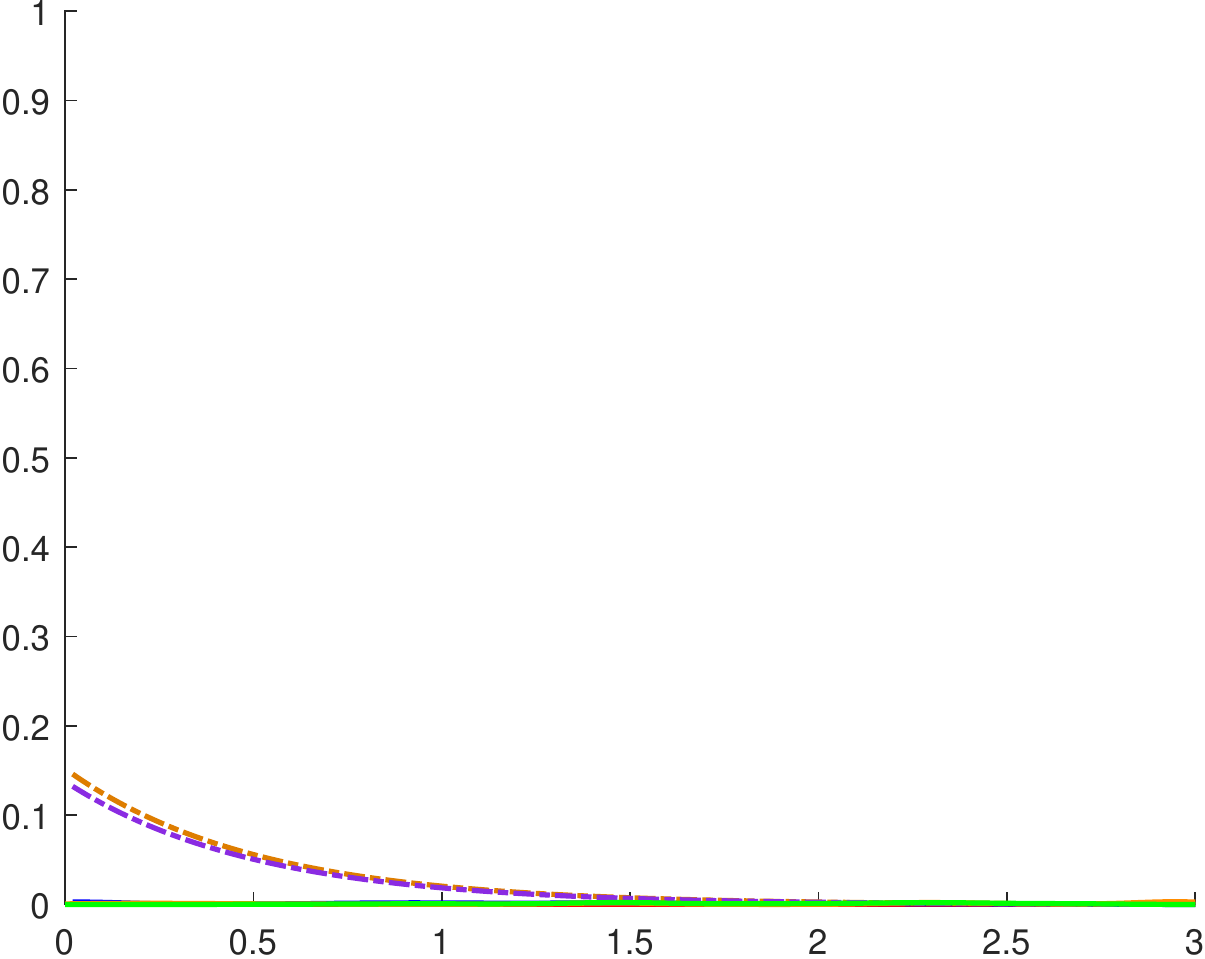} & \includegraphics[width=0.16\linewidth]{32-eps-converted-to.pdf} & \includegraphics[width=0.16\linewidth]{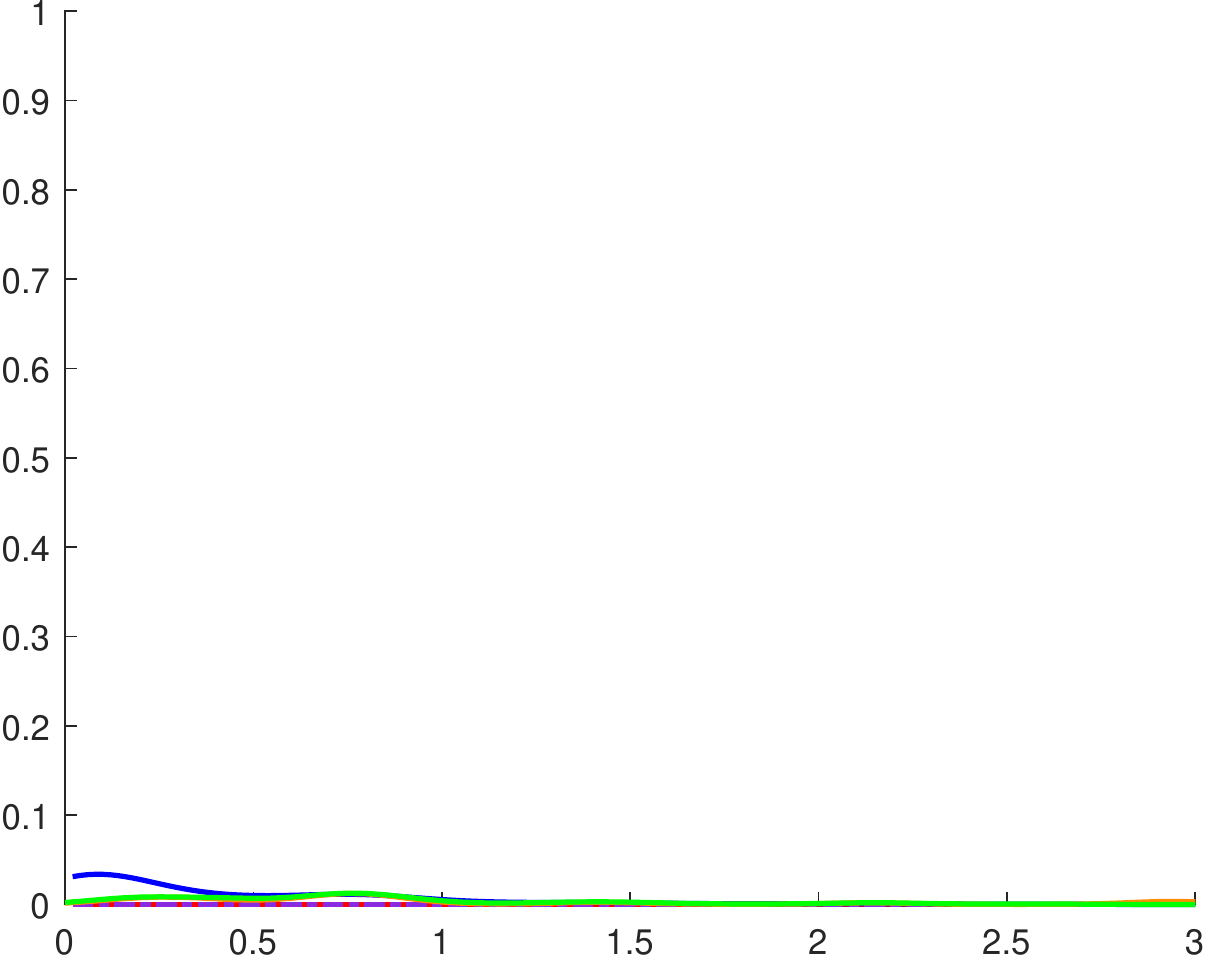} & \includegraphics[width=0.16\linewidth]{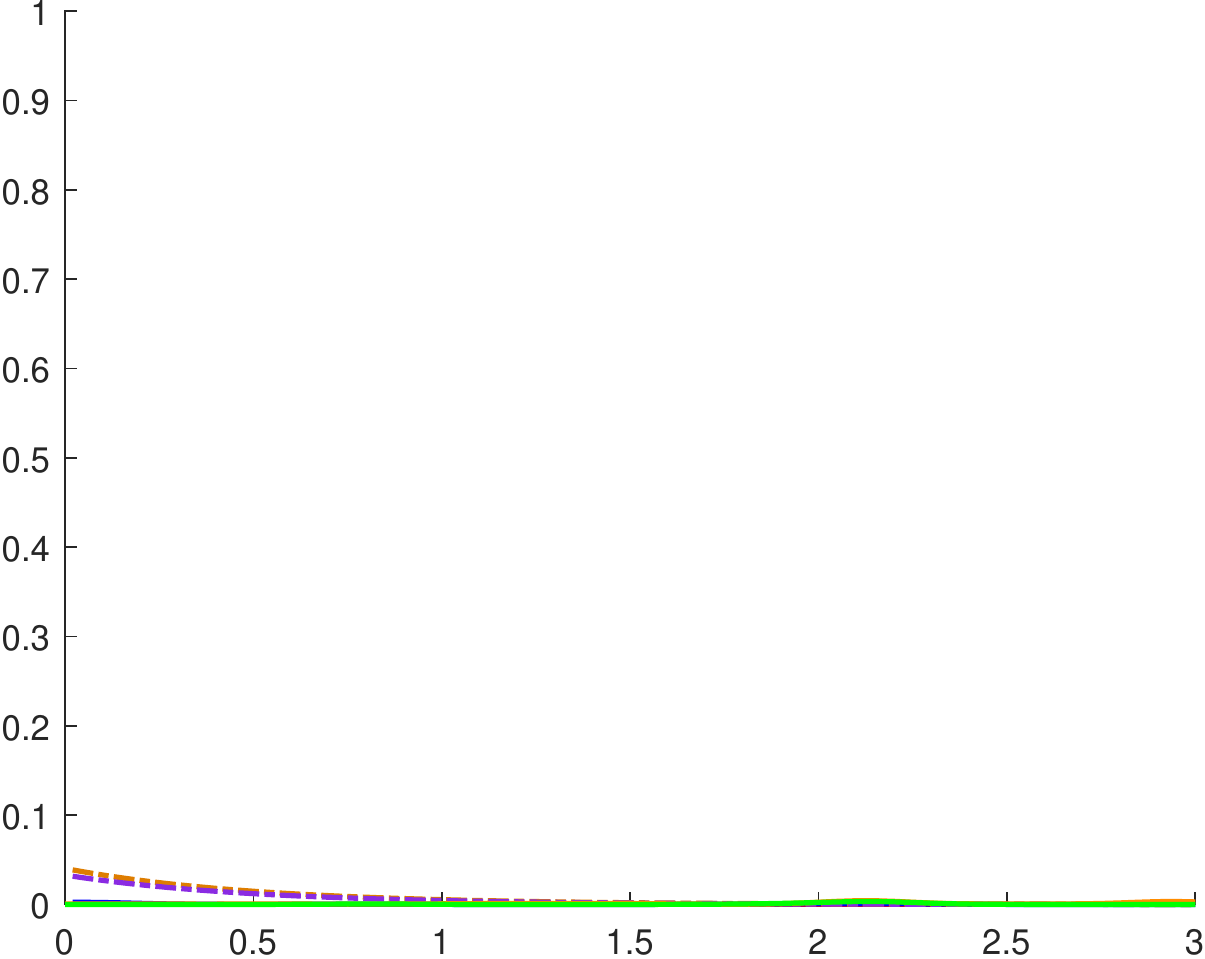} & \includegraphics[width=0.16\linewidth]{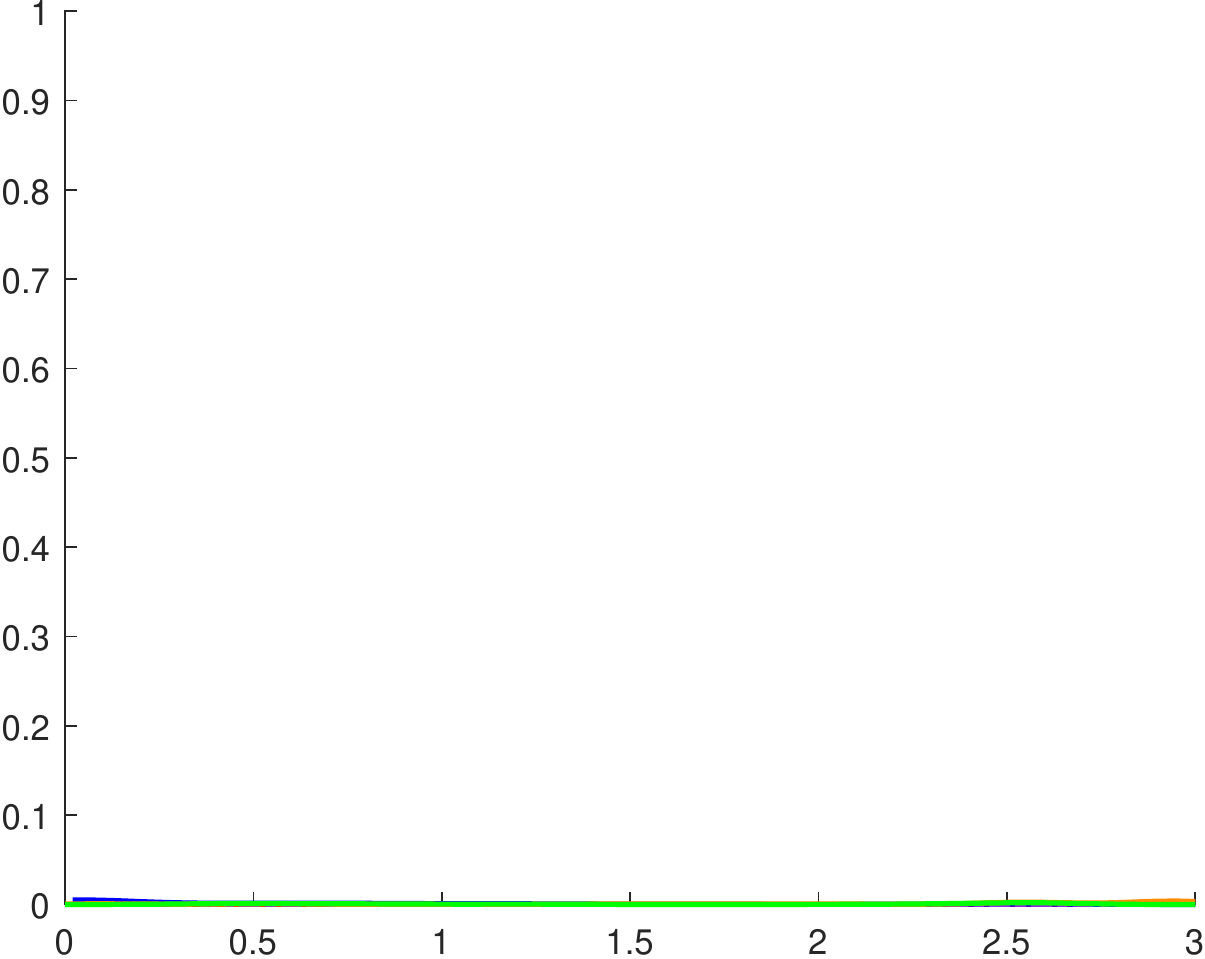} \\
		\includegraphics[width=0.16\linewidth]{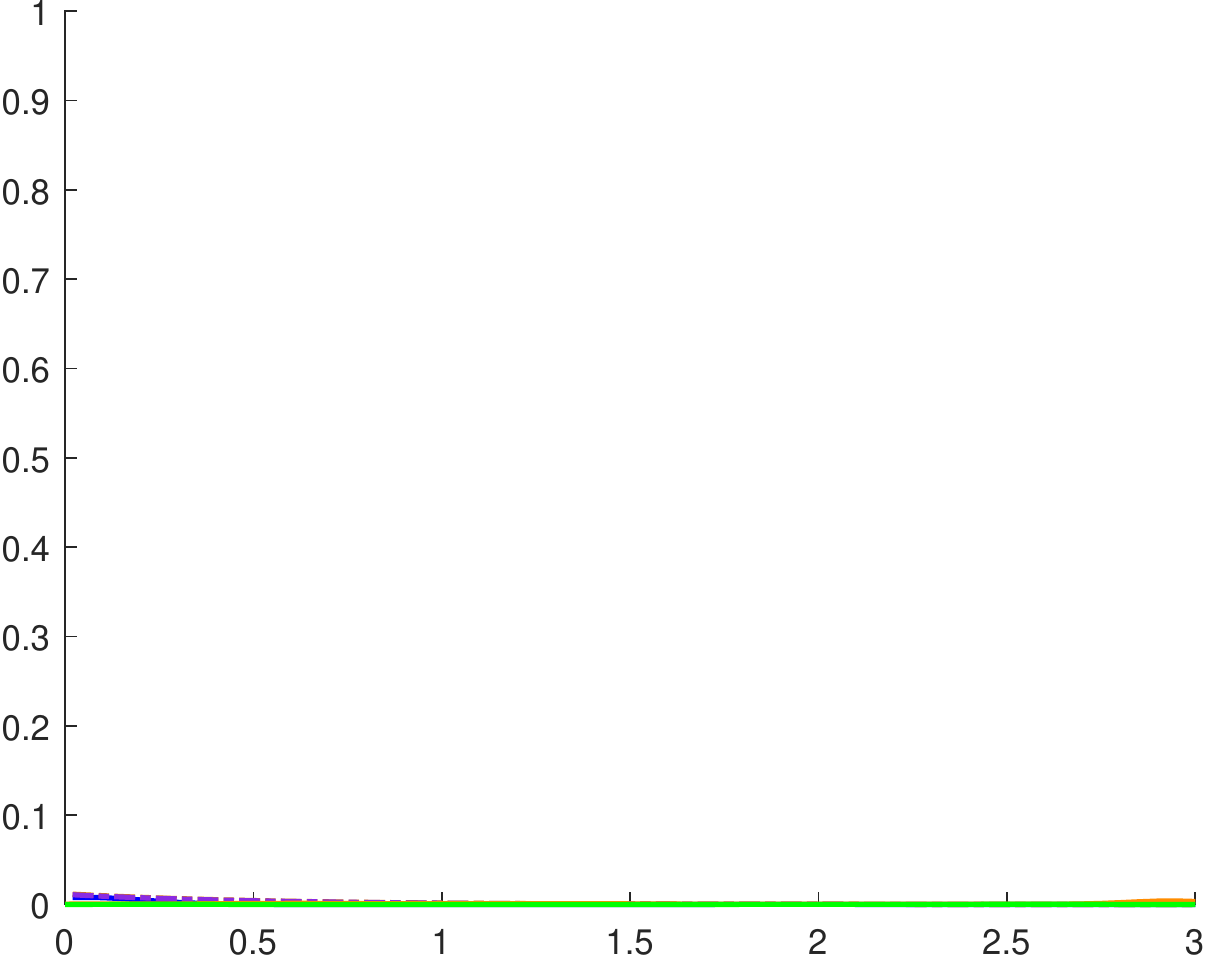} & \includegraphics[width=0.16\linewidth]{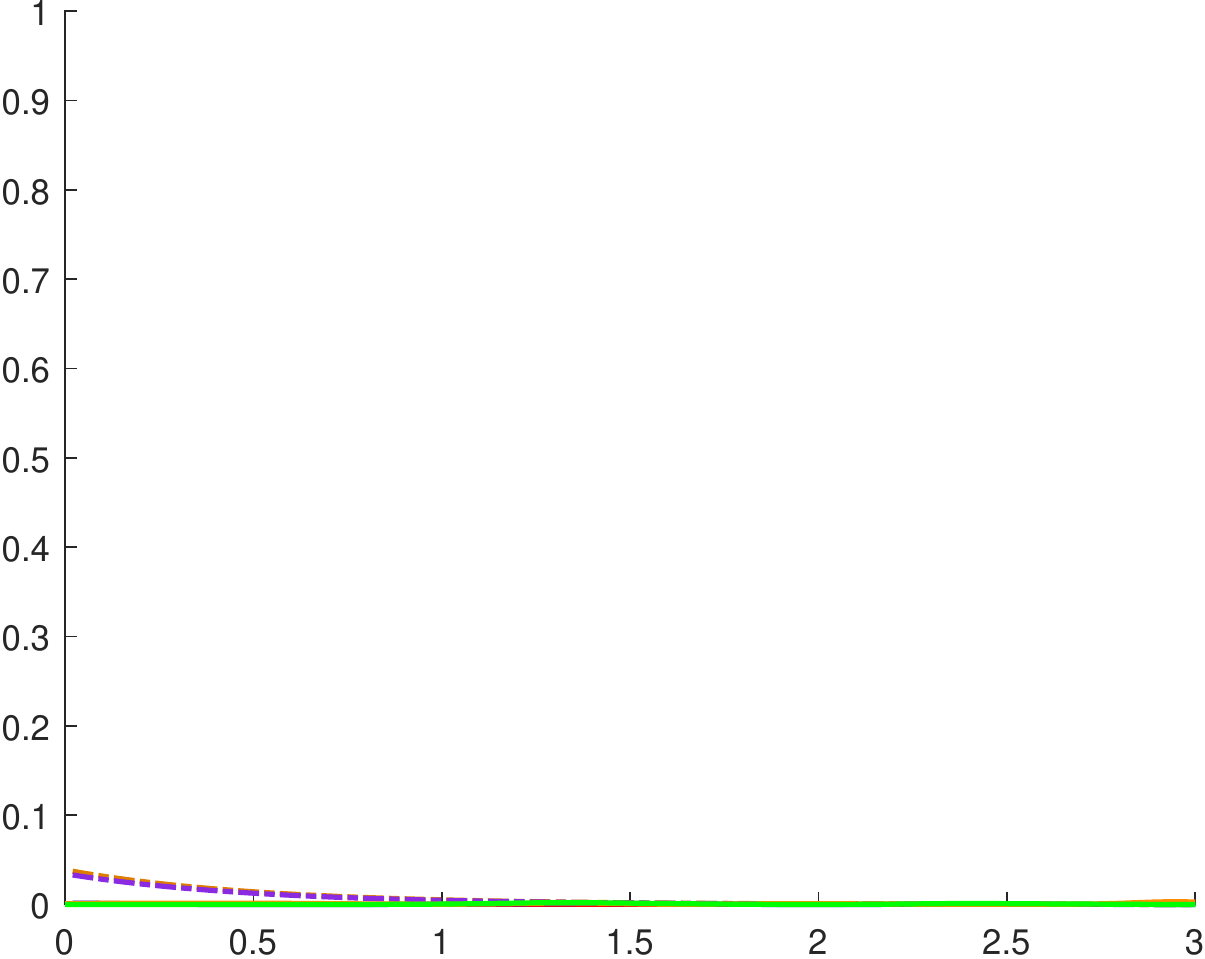} & \includegraphics[width=0.16\linewidth]{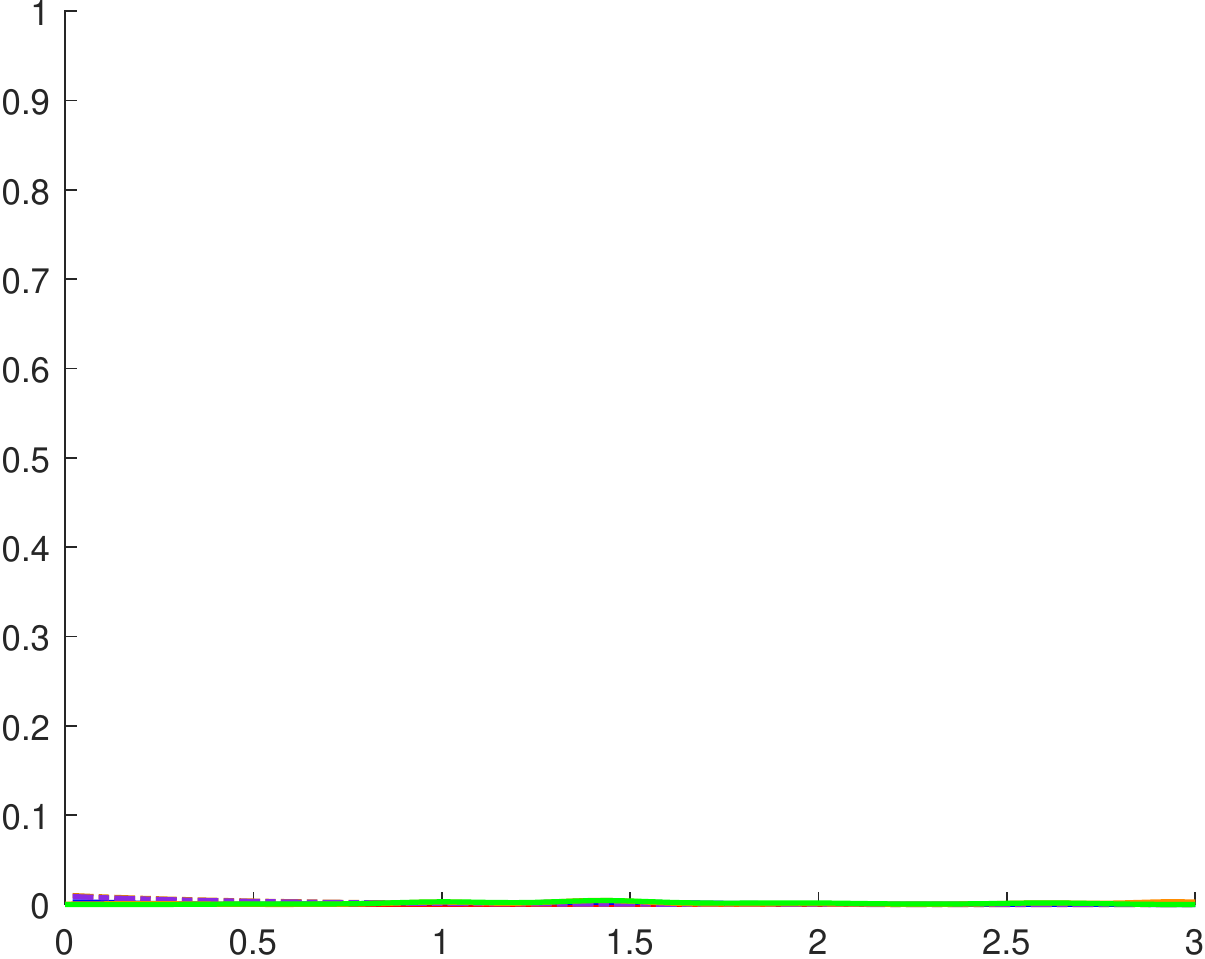} & \includegraphics[width=0.16\linewidth]{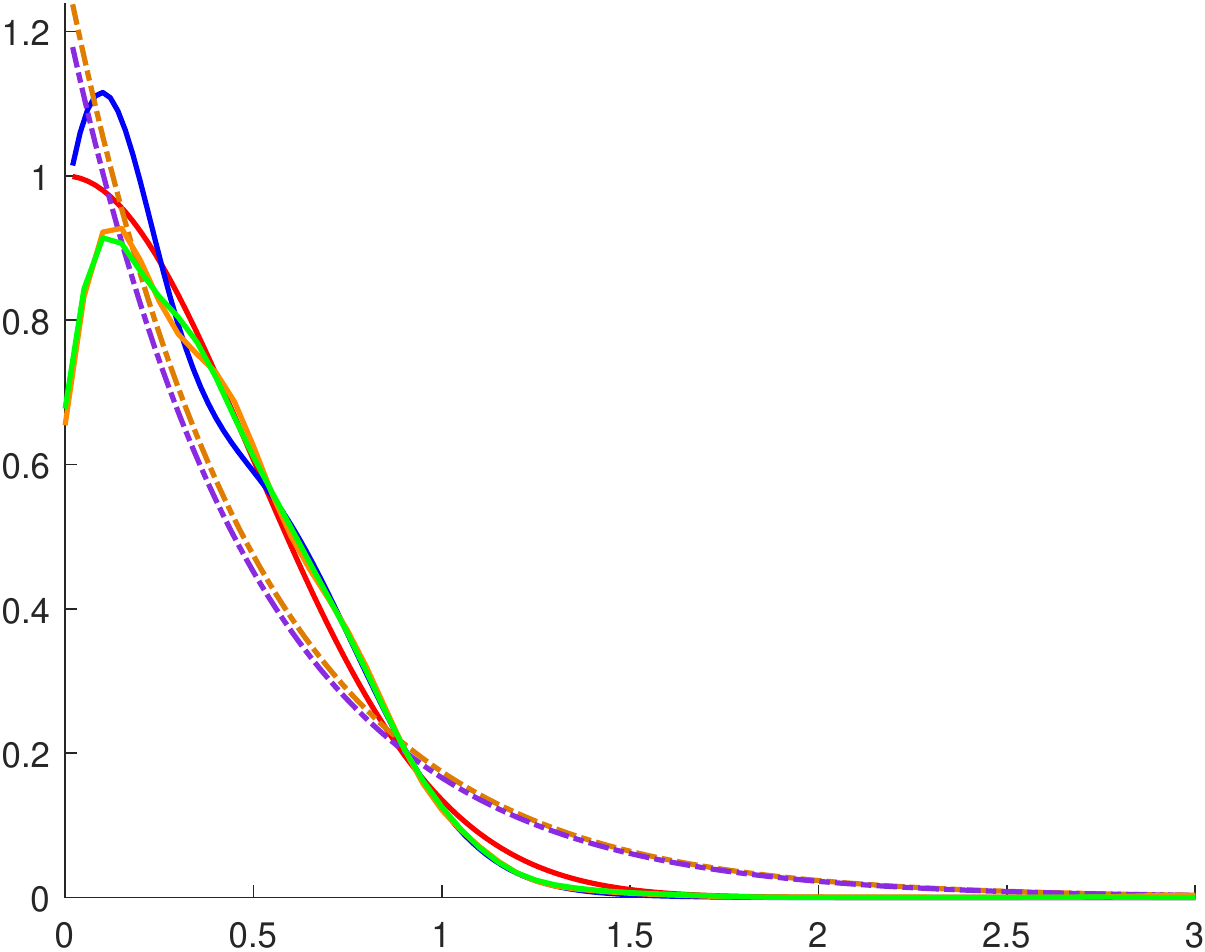} & \includegraphics[width=0.16\linewidth]{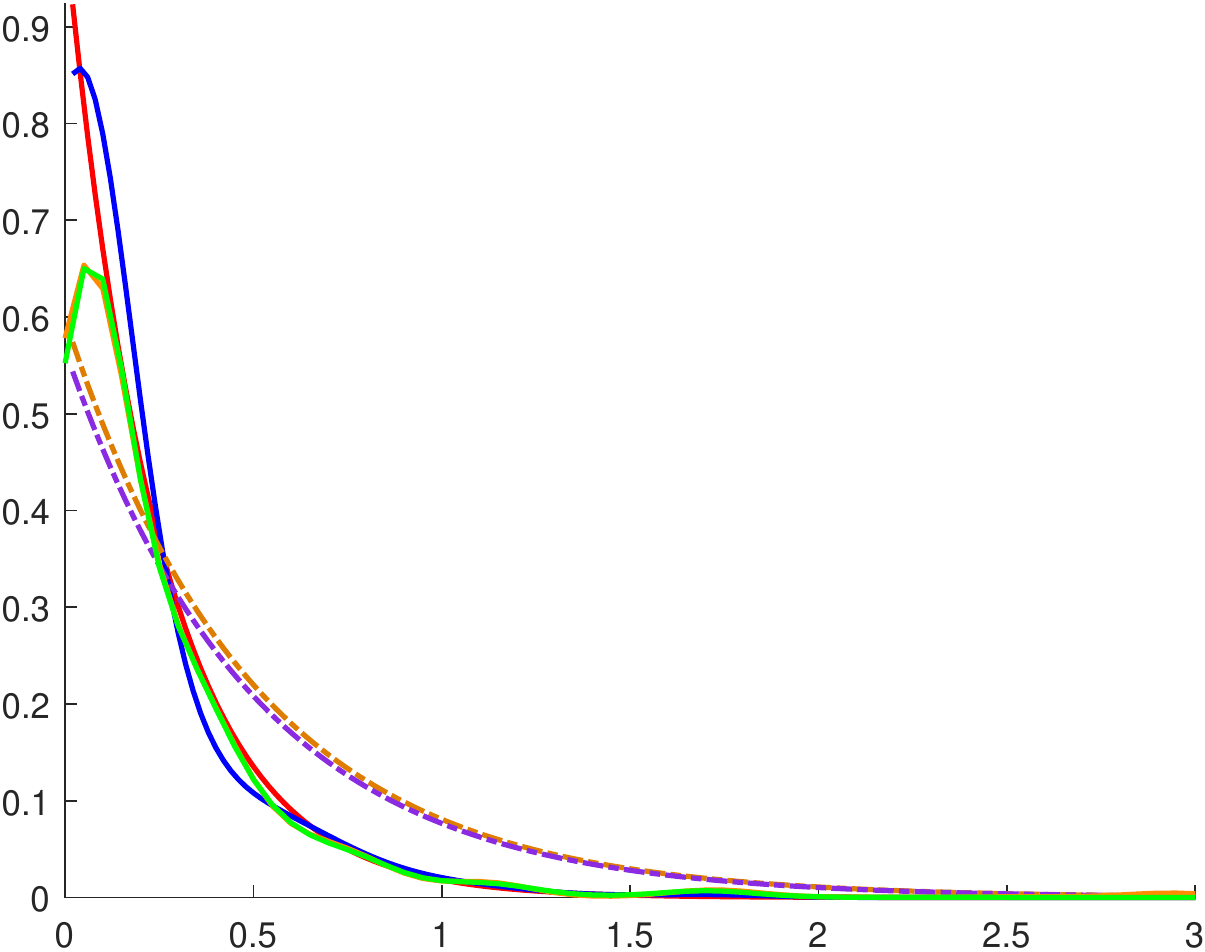} \\
		\includegraphics[width=0.16\linewidth]{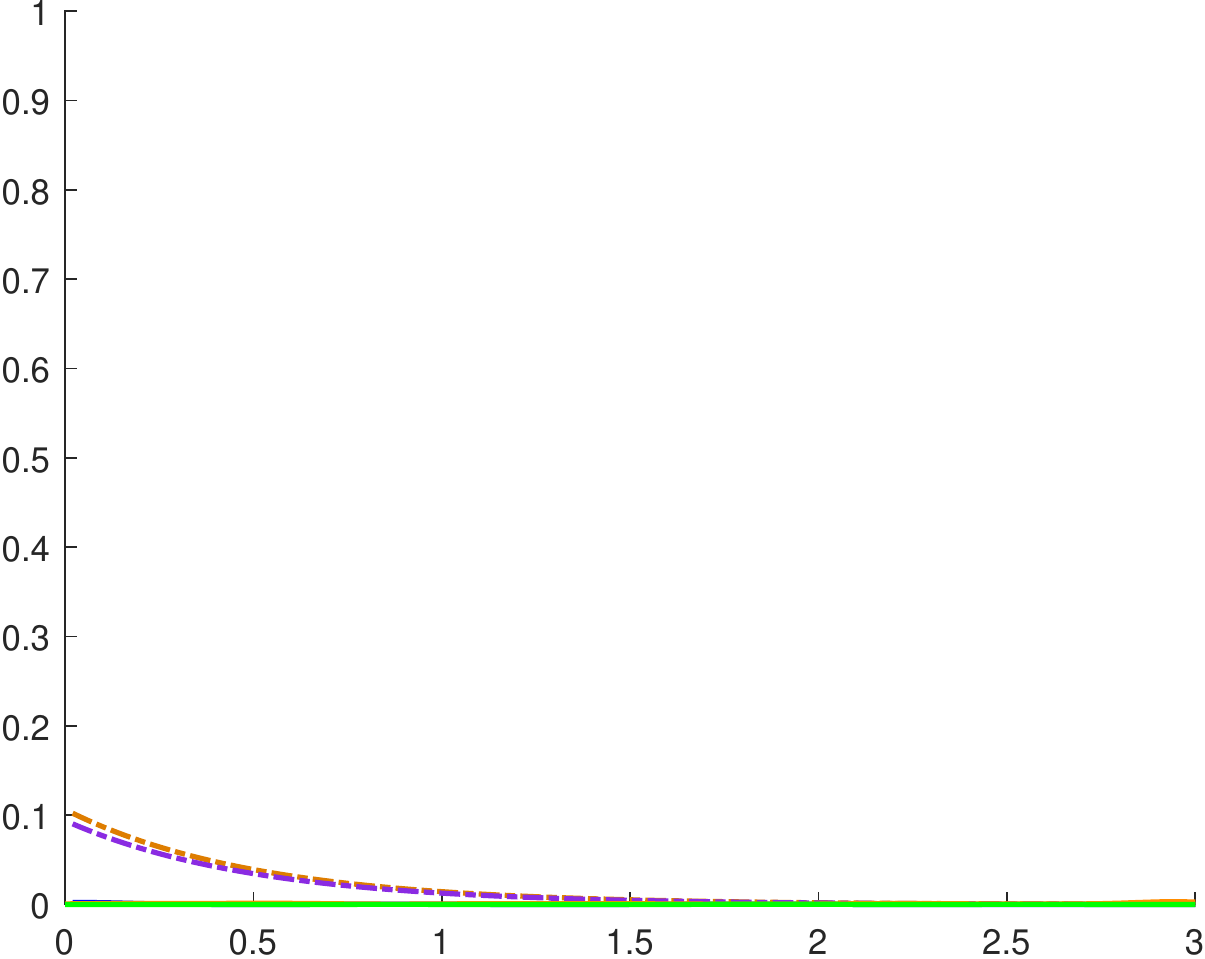} & \includegraphics[width=0.16\linewidth]{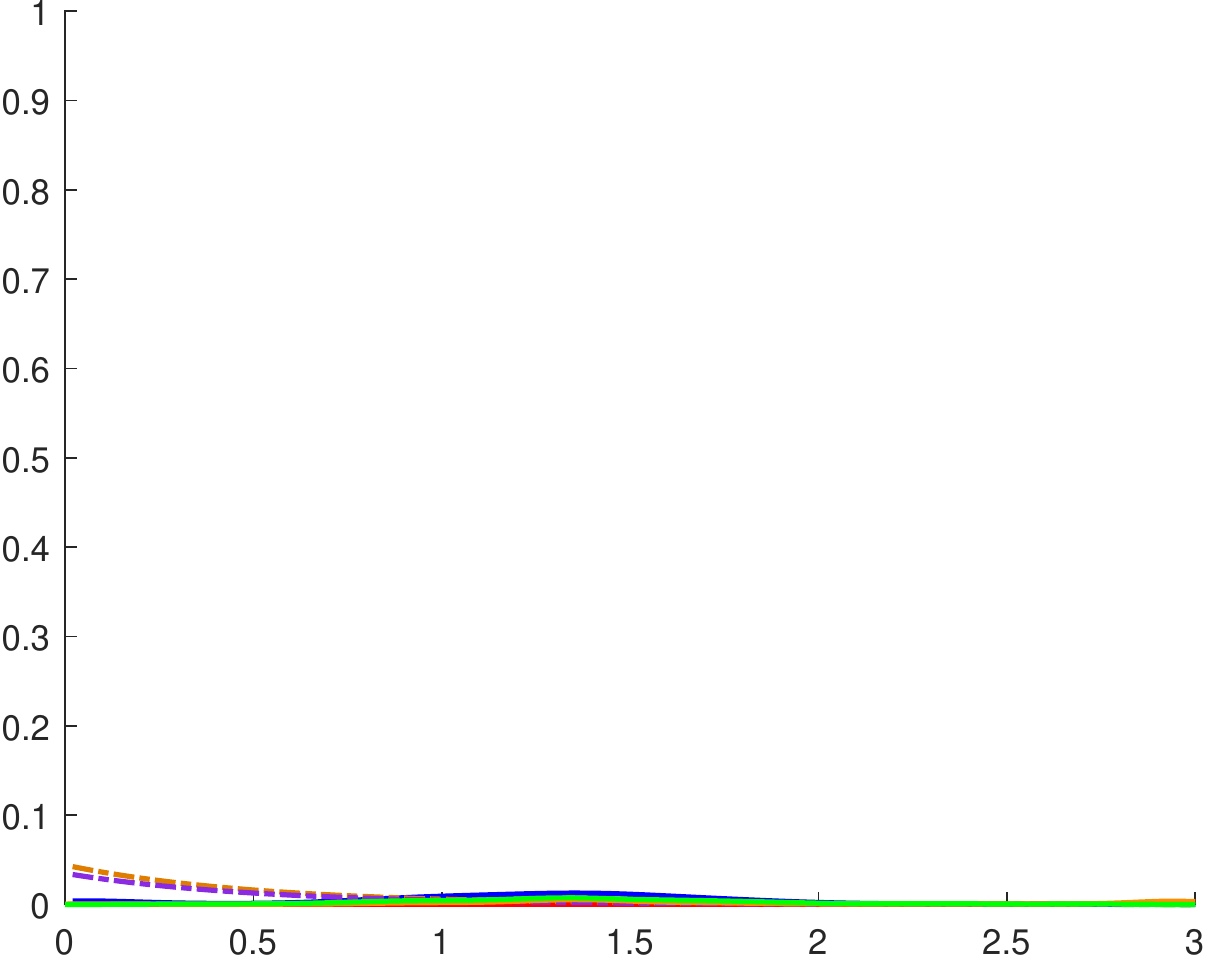} & \includegraphics[width=0.16\linewidth]{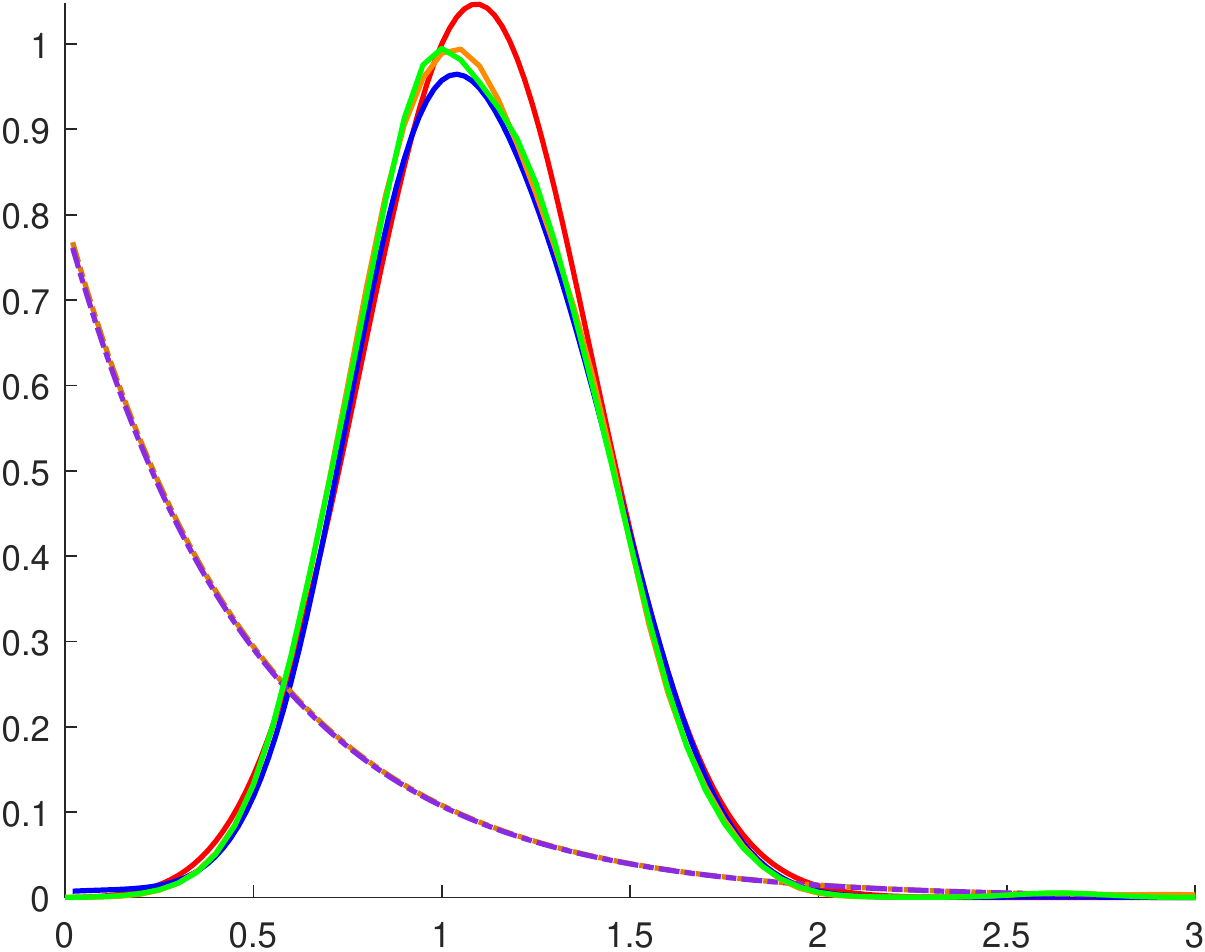} & \includegraphics[width=0.16\linewidth]{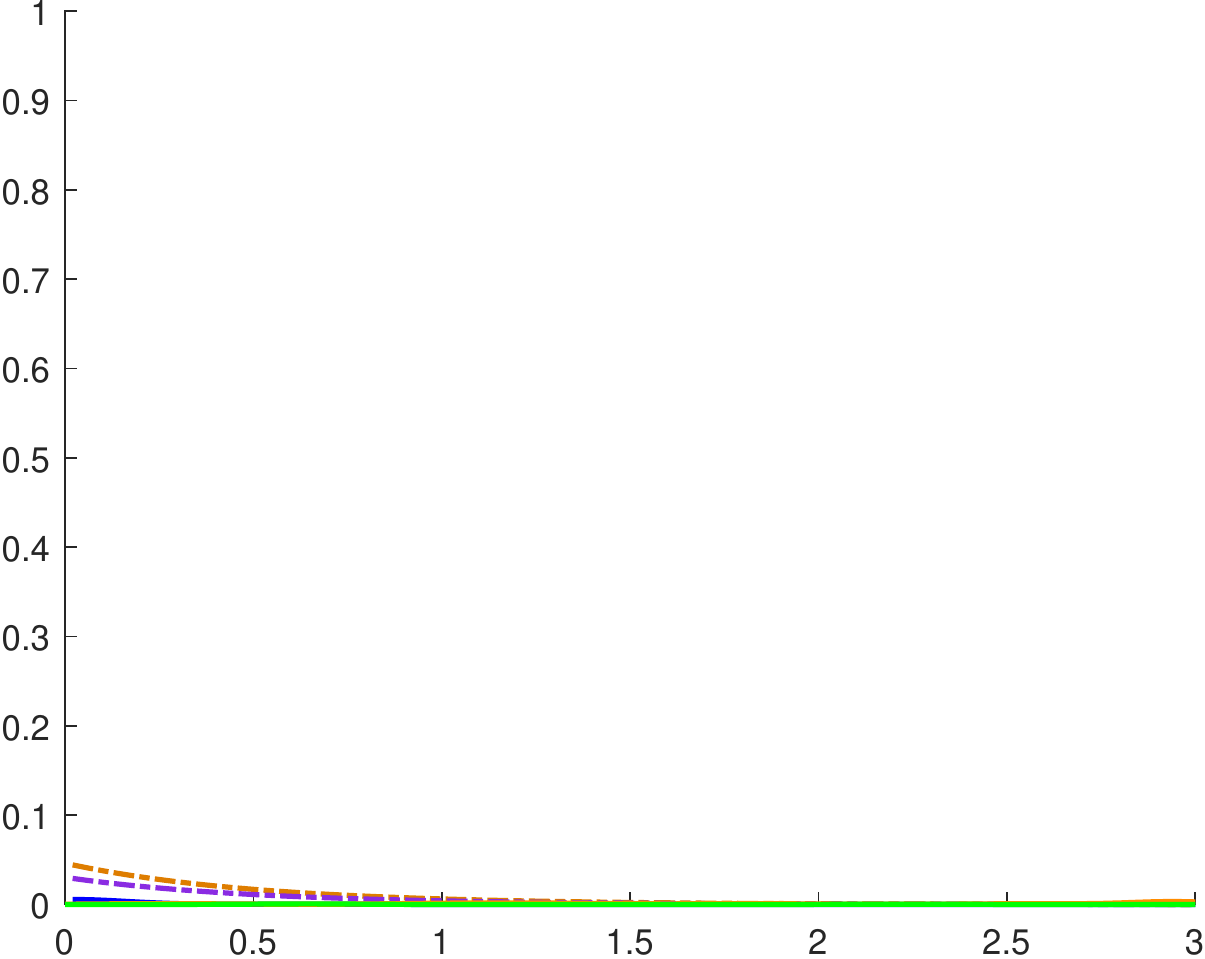} & \includegraphics[width=0.16\linewidth]{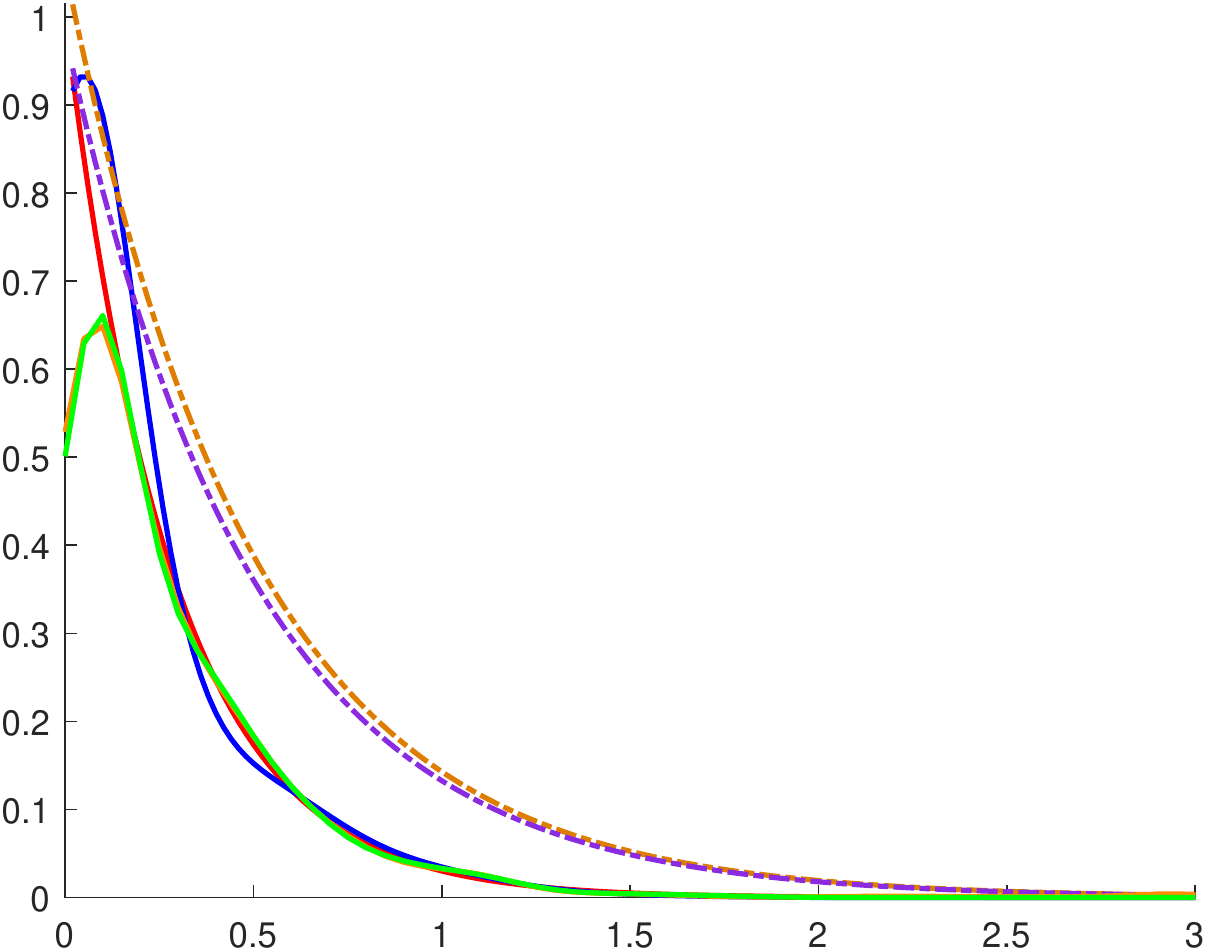}
		\\ 
		\multicolumn{5}{c}{\includegraphics[width=0.5\linewidth]{legendcrop.pdf}}
	\end{tabular}
	\caption{Performance of different algorithms for estimating $\bF=[f_{i,j}(t)]_{i,j=1,\ldots,5}$. 
	}
	\label{fig::exp2d}
\end{figure}




\begin{figure}
\centering
\begin{tabular}{cc}		
		\includegraphics[width=0.45\textwidth]{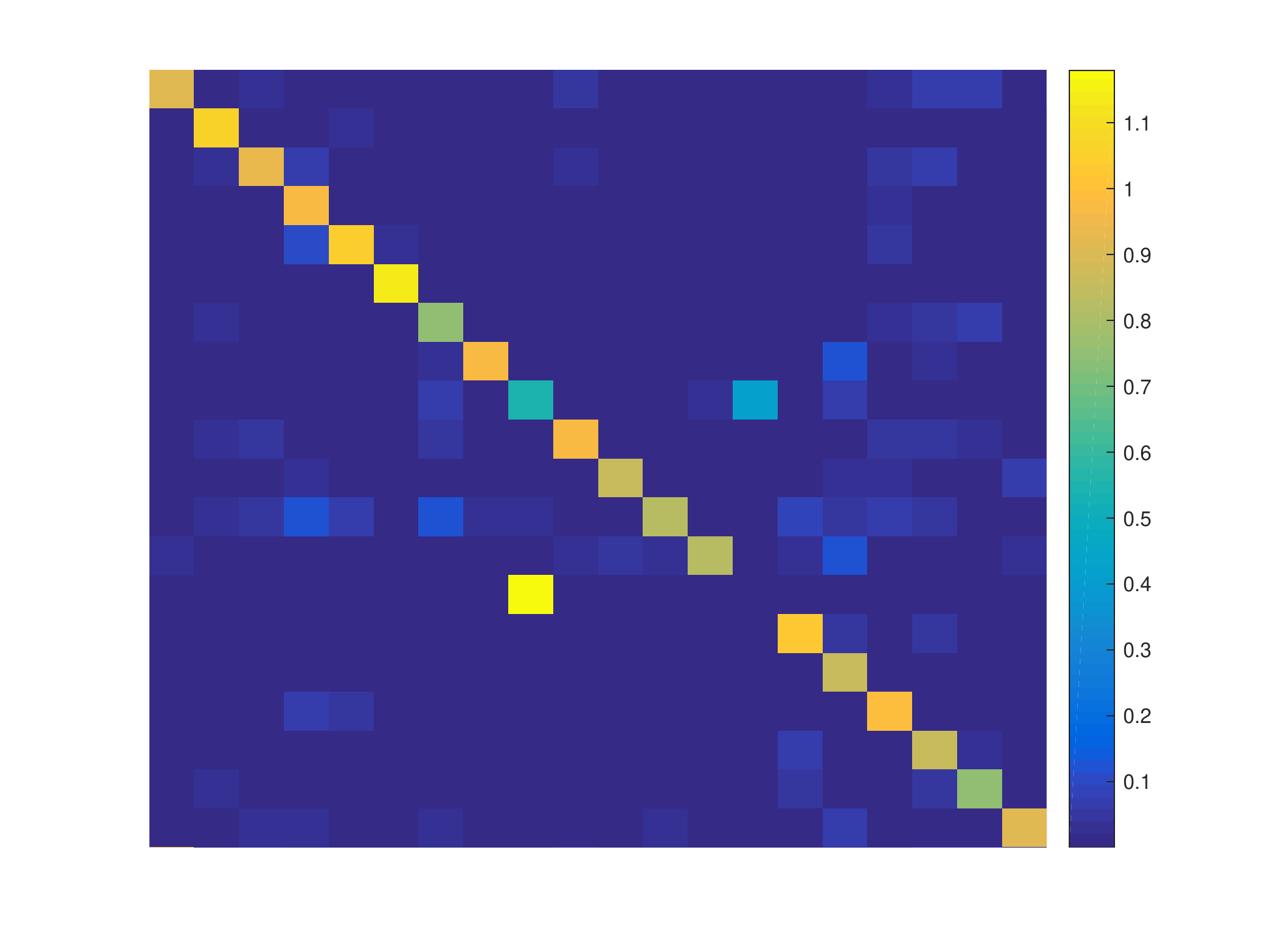}
		&\includegraphics[width=0.45\textwidth]{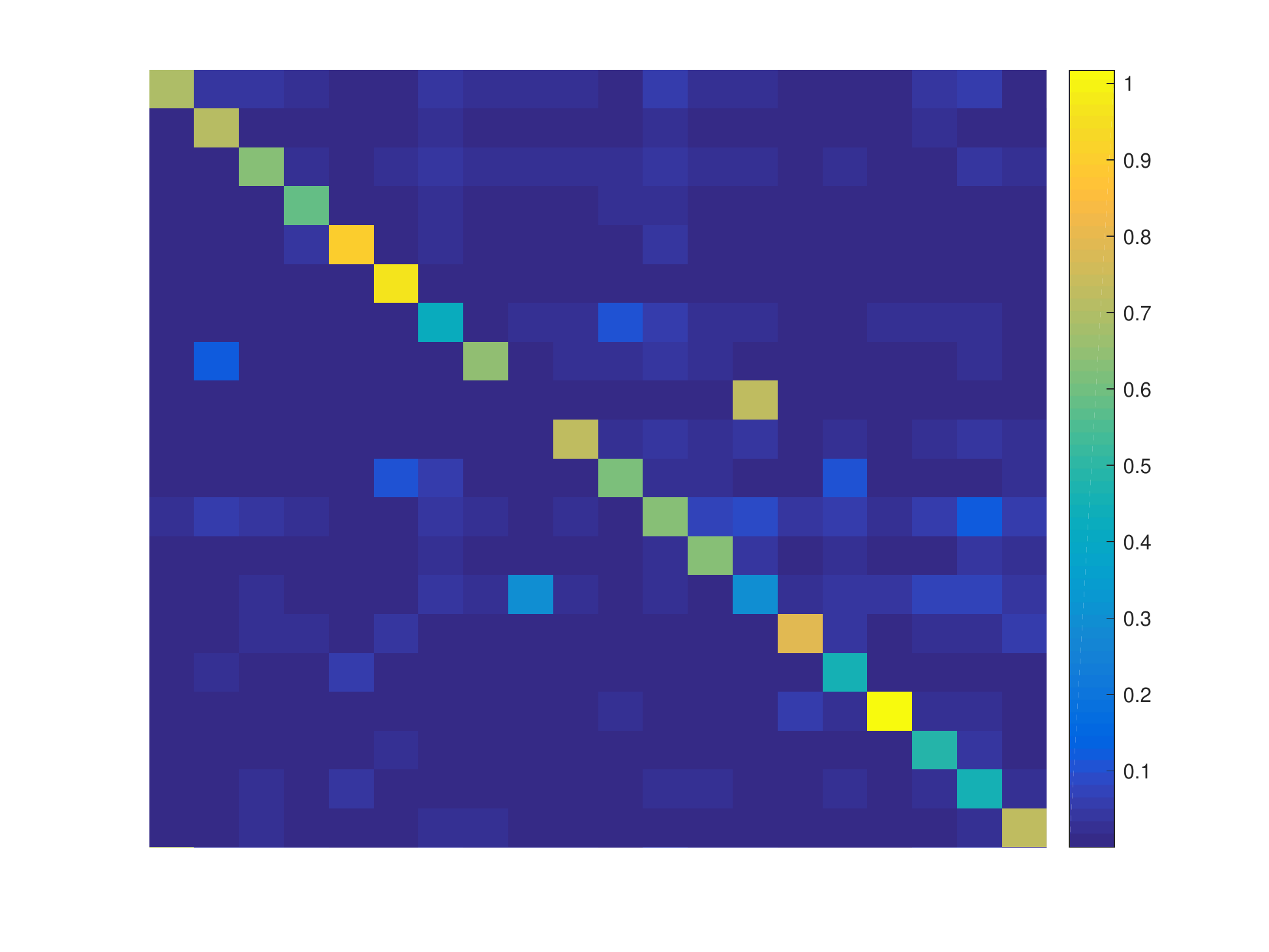}\\
		{\footnotesize (a) MLE estimate with 8 outer loops and 8 inner loops.}
		& {\footnotesize (b) NPOLE-MHP esitmate with $\eta_k=1/(k\zeta+400)$.}\\
		\includegraphics[width=0.45\textwidth]{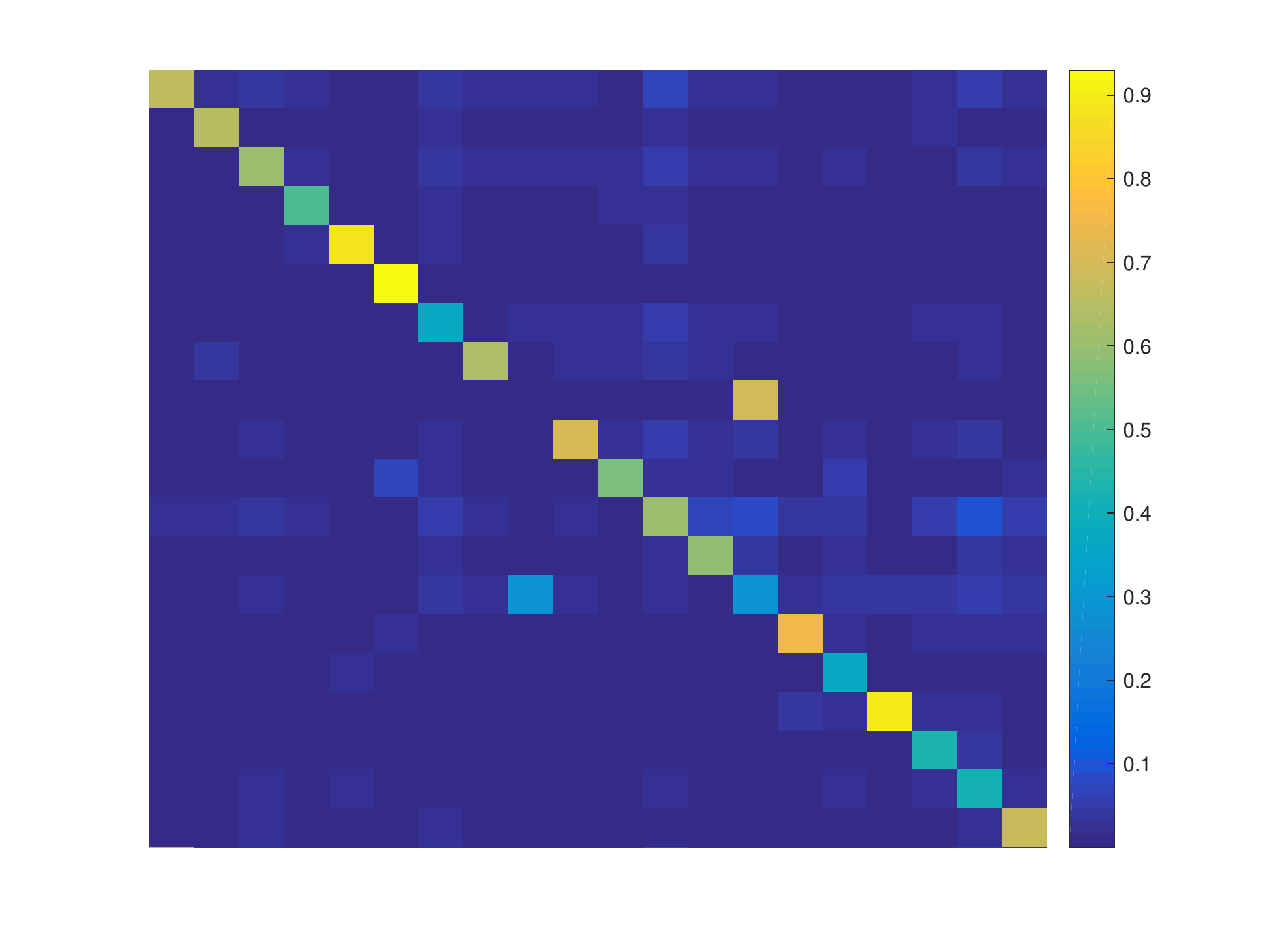}
		&\includegraphics[width=0.45\textwidth]{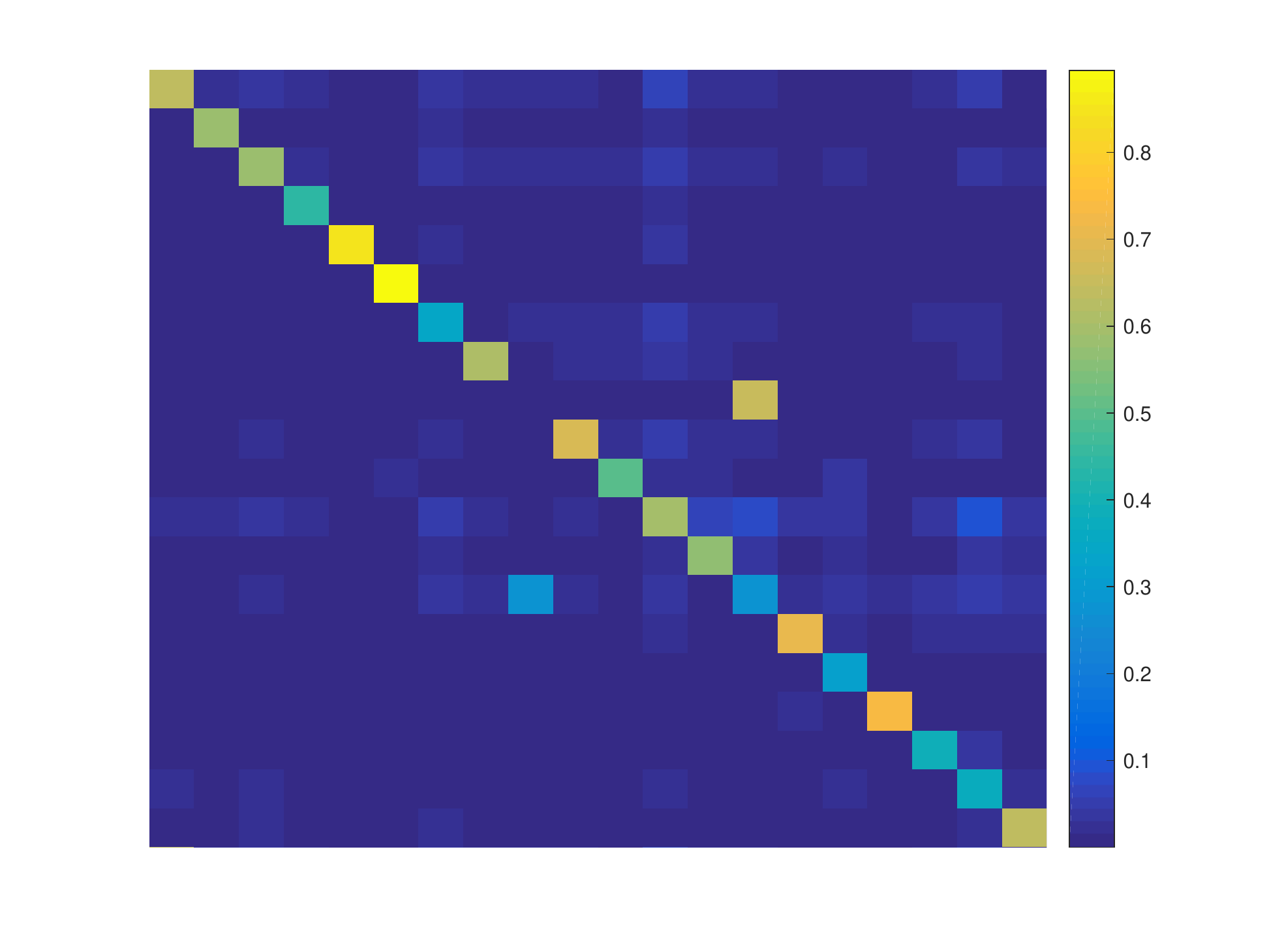}\\
		{\footnotesize (c) NPOLE-MHP esitmate with $\eta_k=1/(k\zeta+600)$.}
		& {\footnotesize (d) NPOLE-MHP esitmate with $\eta_k=1/(k\zeta+800)$.}\\
\end{tabular}
	\caption{NPOLE-MHP and MLE: a color map comparison.}
	\label{fig::heatmap}
\end{figure}

In this section, we show the complete set of estimates for the estimates on synthetic data, in Figure \ref{fig::exp2d}, and on real data, in Figure \ref{fig::heatmap}.

For the real data, we compare the values of $\|\hat{f}_{i,j}\|_{L_1[0,z]}$ by converting them into color maps (Figure \ref{fig::heatmap}). Top left corner, $\|\hat{f}_{i,j}\|_{L_1[0,z]}$ is computed using the output of MLE of \citet{xu2016learning} with 8 outer loops and 8 inner loops, respectively, using 18 days of the meme-tracking dataset. For the rest of the three plots, we calculate  $\|\hat{f}_{i,j}\|_{L_1[0,z]}$ using the output of NPOLE-MHP with different step sizes. It can be seen that NPOLE-MHP generates similar sparsity patterns to that of MLE where the diagonal dominates.

\end{document}